\documentclass{article}

\usepackage{microtype}
\usepackage{graphicx}
\usepackage{subfigure}
\usepackage{booktabs} %

\usepackage{hyperref}
\usepackage[authoryear]{natbib}
\setcitestyle{round}

\renewcommand{\cite}{\citep}

\usepackage{amsmath}
\usepackage{amssymb}
\usepackage{mathtools}
\usepackage{amsthm}

\usepackage[capitalize,noabbrev]{cleveref}

\usepackage[textsize=tiny]{todonotes}

\usepackage{mathtools}
\usepackage{booktabs}
\usepackage{amsthm,bbm}
%auto-ignore
\usepackage{cleveref} %
\usepackage{subcaption} %
\usepackage{mathtools}
\usepackage{amsmath}
\usepackage{stmaryrd}
\usepackage{etoolbox} %
\RequirePackage{paralist}
\usepackage{enumitem}
\usepackage{multirow} %
\usepackage{array}
\usepackage{graphicx}
\usepackage{etoc}
\usepackage{listings}
\usepackage{eqparbox}
\usepackage{bbm}
\usepackage{quoting}

\AtEndPreamble{%

  \newtheorem{condition}{Condition}
  
  \theoremstyle{plain}
  \newenvironment{definition*}
                 {\pushQED{\qed}\definition}
                 {\popQED\enddefinition}  
  \newenvironment{example*}
                 {\pushQED{\qed}\example}
                 {\popQED\endremark}  
  \newenvironment{remark*}
                 {\pushQED{\qed}\remark}
                 {\popQED\endremark}  
  \newenvironment{assumption*}
                 {\pushQED{\qed}\assumption}
                 {\popQED\endremark}  
  \newenvironment{property*}
                 {\pushQED{\qed}\property}
                 {\popQED\endremark}  
}

\crefname{assumption}{Assumption}{Assumptions}
\crefname{figure}{Fig{.}}{Figs{.}}%
\crefname{table}{Table}{Tables}
\crefname{definition}{Definition}{Definitions}
\crefname{theorem}{Theorem}{Theorems}
\crefname{lemma}{Lemma}{Lemmas}
\crefname{proposition}{Proposition}{Propositions}
\crefname{corollary}{Corollary}{Corollaries}
\crefname{problem}{Problem}{Problems}
\crefname{example}{Example}{Examples}
\crefname{fact}{Fact}{Facts}
\crefname{conjecture}{Conjecture}{Conjectures}
\crefname{remark}{Remark}{Remarks}
\crefname{condition}{Condition}{Conditions}
\crefname{requirement}{Requirement}{Requirements}
\crefname{enumi}{}{}
\crefname{equation}{Eq{.}}{Eqs{.}}
\crefname{section}{Section}{Sections}

\usepackage{amsmath,amsfonts,bm}

\def\eqref#1{equation~\ref{#1}}

\def\1{\bm{1}}

\DeclareMathAlphabet{\mathsfit}{\encodingdefault}{\sfdefault}{m}{sl}
\SetMathAlphabet{\mathsfit}{bold}{\encodingdefault}{\sfdefault}{bx}{n}

\DeclareMathOperator*{\argmax}{arg\,max}
\DeclareMathOperator*{\argmin}{arg\,min}

%auto-ignore

\newcommand{\mathboldcommand}[1]{\mathbb{#1}}

\newcommand{\bbF}{\mathboldcommand{F}}

\newcommand{\bbN}{\mathboldcommand{N}}

\newcommand{\bbR}{\mathboldcommand{R}}

\newcommand{\bbZ}{\mathboldcommand{Z}}

\newcommand{\bfa}{\mathbf{a}}
\newcommand{\bfb}{\mathbf{b}}
\newcommand{\bfc}{\mathbf{c}}

\newcommand{\bfm}{\mathbf{m}}

\newcommand{\bfp}{\mathbf{p}}

\newcommand{\bfs}{\mathbf{s}}

\newcommand{\bfv}{\mathbf{v}}
\newcommand{\bfw}{\mathbf{w}}
\newcommand{\bfx}{\mathbf{x}}
\newcommand{\bfy}{\mathbf{y}}
\newcommand{\bfz}{\mathbf{z}}

\usepackage{euscript}
\newcommand{\mathcalcommand}[1]{\mathcal{#1}}
\newcommand{\mcA}{\mathcalcommand{A}}
\newcommand{\mcB}{\mathcalcommand{B}}
\newcommand{\mcC}{\mathcalcommand{C}}

\newcommand{\mcS}{\mathcalcommand{S}}
\newcommand{\mcT}{\mathcalcommand{T}}

\newcommand{\mcX}{\mathcalcommand{X}}
\newcommand{\mcY}{\mathcalcommand{Y}}

\DeclareMathAlphabet{\mathpzc}{T1}{pzc}{m}{it}

\allowdisplaybreaks

\newtheorem{theorem}{Theorem}%

\newtheorem{lemma}[theorem]{Lemma}

\newtheorem{definition}{Definition}

\newcommand*{\commentout}[1]{}

\newlength{\parskiptrue}
\setlength{\parskiptrue}{\parskip}

\definecolor{lred}{rgb}{1.0, 0.5, 0.5}
\definecolor{lorange}{rgb}{1.00, 0.90, 0.20}
\definecolor{lgreen}{rgb}{0.35, 0.95, 0.35}
\definecolor{lime}{rgb}{0.9, 1.0, 0.6}
\definecolor{lblue}{rgb}{1.0, 0.85, 0.75}

\makeatletter

\newcommand*\wthelper[2]{%
        \hbox{\dimen@\accentfontxheight#1%
                \accentfontxheight#11.1\dimen@
                $\m@th#1\widetilde{#2}$%
                \accentfontxheight#1\dimen@
        }%
}

\newcommand*\whhelper[2]{%
        \hbox{\dimen@\accentfontxheight#1%
                \accentfontxheight#11.2\dimen@
                $\m@th#1\widehat{#2}$%
                \accentfontxheight#1\dimen@
        }%
}
\makeatother

\makeatletter
\newcommand{\oset}[3][0ex]{%
  \mathrel{\mathop{#3}\limits^{
    \vbox to#1{\kern-3\ex@
    \hbox{$\scriptstyle#2$}\vss}}}}
\makeatother

\newcommand*{\defeq}{\coloneq}

\newcommand*{\relu}{\mathrm{ReLU}}

\newcommand*{\fpq}{\mathbb{F}}
\newcommand*{\efpq}{{\overline{\mathbb{F}}}}
\newcommand*{\fmin}{\omega}
\newcommand*{\fmax}{\Omega}

\newcommand*{\round}[1]{ \left\lceil{#1}\right\rfloor }

\makeatletter

\newcommand{\xMapsto}[2][]{\ext@arrow 0599{\Mapstofill@}{#1}{#2}}
\def\Mapstofill@{\arrowfill@{\Mapstochar\Relbar}\Relbar\Rightarrow}
\makeatother
\makeatletter
\def\moverlay{\mathpalette\mov@rlay}
\def\mov@rlay#1#2{\leavevmode\vtop{%
   \baselineskip\z@skip \lineskiplimit-\maxdimen
   \ialign{\hfil$\m@th#1##$\hfil\cr#2\crcr}}}

\makeatother

\newcommand{\lrp}[1]{\left({#1}\right)}

\newcommand{\emin}{\mathfrak{e}_{\min}}
\newcommand{\emax}{\mathfrak {e}_{\max}}

\newcommand{\nan}{\text{\rm NaN}}

\newcommand{\mbit}{p}
\newcommand{\ebit}{q}

\newcommand*{\indcc}[1]{\mathbbm{1}_{#1}}

\newcommand*{\zerov}{\boldsymbol{0}}
\newcommand*{\onev}{\boldsymbol{1}}

\newcommand*{\din}{d_\text{in}}
\newcommand*{\dout}{d_\text{out}}

\usepackage{url}

\usepackage[left=3cm,right=3cm]{geometry}

\begin{document}
\title{On the Expressive Power of Floating-Point Transformers}
\date{ }

\author{ 
Sejun Park \thanks{Department of Artificial Intelligence, Korea University, Seoul, 02841 , Republic of Korea}
 \and
 Yeachan Park \thanks{Department of Mathematics and Statistics, Sejong University, Seoul, 05006, Republic of Korea }
 \and
 Geonho Hwang \thanks{Department of Mathematical Sciences, Gwangju Institute of Science and
Technology, Gwangju, 61005, Republic of Korea
\newline
Email: \texttt{sejun.park000@gmail.com, ychpark@sejong.ac.kr, hgh2134@gist.ac.kr } }    \thanks{Corresponding author}
}

\maketitle

\vskip 0.3in

\begin{abstract}
The study on the expressive power of transformers shows that transformers are permutation equivariant, and they can approximate all permutation-equivariant continuous functions on a compact domain.
However, these results are derived under real parameters and exact operations, while real implementations on computers can only use a finite set of numbers and inexact machine operations with round-off errors.
In this work, we investigate the representability of floating-point transformers that use floating-point parameters and floating-point operations.
Unlike existing results under exact operations, we first show that floating-point transformers can represent a class of non-permutation-equivariant functions even without positional encoding.
Furthermore, we prove that floating-point transformers can represent all permutation-equivariant functions when the sequence length is bounded, but they cannot when the sequence length is large.
We also found the minimal equivariance structure in floating-point transformers, and show that all non-trivial additive positional encoding can harm the representability of floating-point transformers.
\end{abstract}
\section{Introduction}\label{sec:intro}
A self-attention-based transformer is one of the most popular neural network architectures for sequence-to-sequence learning. 
Starting with natural language processing \cite{vaswani2017attention}, they have been showing state-of-the-art performances in various fields, including computer vision \cite{dosovitskiy2021an},  %
graph representation learning \cite{ying2021transformers}, and time-series forecasting \cite{zhou2021informer}.
Due to the symmetry in the transformer architectures, it is well-known that they are permutation equivariant: if input tokens are permuted, then the outputs are also permuted correspondingly.
While such a symmetry has been appreciated when the output is independent of the input order, e.g., \cite{lee2019set}, 
it is often intentionally broken by using positional encoding if the input order matters. %

Several works have investigated the expressive power of transformers. For example, it is known that transformers are dense in the $L^p$ space of permutation-equivariant sequence-to-sequence functions on a compact domain, where the permutation-equivariance condition can be removed with learnable positional encoding \cite{Yun2020Are}.
Similar results have also been shown for continuous function space equipped with the uniform norm \cite{alberti2023sumformer}, shallow architectures and probabilistic setup \cite{furuya2025transformers}, sparse architectures \cite{yun2020n}, and approximating shift-equivariant functions \cite{takakura2023approximation}. 
Furthermore, transformers can fit a finite number of input--output sequence pairs \cite{mahdavi2024memorization,pmlr-v258-dana25a}, using a smaller number of parameters with a deeper architecture \cite{kim2023provable}.

However, existing theoretical results cannot be directly applied to real-world transformers operating in computers since they assume real parameters and exact arithmetic. On the other hand, in reality, transformers use a finite set of numbers and inexact machine operations such as floating-point arithmetic. 
Only a few recent works have investigated the expressive power of neural networks under floating-point arithmetic. 
\citet{park2024expressive} show that fully-connected floating-point networks using $\relu$ or the binary step function can represent all floating-point functions from floating-point vectors to floating-point numbers.
This result has also been extended to a general class of activation functions \cite{hwang2025floating} and interval arithmetic \cite{hwang2025interval}.
One notable observation here is that floating-point networks may not inherit important properties of neural networks analyzed under exact arithmetic. For example, floating-point networks using the identity activation function can represent all floating-point functions \cite{hwang2025floating}, which is impossible under exact arithmetic.

\subsection{Contributions}\label{sec:contribution}
In this work, we study the expressive power of floating-point transformers that use floating-point parameters and floating-point operations (e.g., addition and multiplication). Specifically, we investigate whether floating-point transformers inherit well-known properties of real transformers under exact arithmetic, by answering the following questions:\vspace{-0.1in}
\begin{itemize}[leftmargin=0.15in]
\item Are floating-point transformers permutation equivariant?\vspace{-0.05in}
\item Can floating-point transformers represent all floating-point sequence-to-sequence functions?
\end{itemize}
\vspace{-0.1in}
Surprisingly, we prove that the answers to both questions are \emph{no}.
Namely, real-world transformers under floating-point arithmetic may not satisfy natural properties of transformers analyzed under exact arithmetic.

To answer these questions, we first show that floating-point transformers can express a class of non-permutation-equivariant functions. In particular, by exploiting the non-associativity of floating-point addition (i.e., different orders of summations may result in different outputs), %
we show that an attention layer can identify the order of the third to last input tokens without positional encoding when the input tokens are distinct. 
Using this attention layer and the information about the order of inputs, we design floating-point transformers representing a class of non-permutation-equivariant functions (\cref{thm:diagonal-approx}).

We next show that floating-point transformers cannot represent some permutation-equivariant sequence-to-sequence functions when the sequence length $n$ is large (\cref{thm:counterexample}). To prove this, we exploit the property of the floating-point addition that repeated addition of the same number may converge to some finite number. %
We further show that such an %
impossibility
result only holds when the sequence length $n$ is large: floating-point transformers can represent all permutation-equivariant sequence-to-sequence functions when $n$ is small (\cref{thm:perm-equiv-approx}).
We note that real transformers do not show such a dependency in the sequence length \cite{Yun2020Are}. %
We also investigate inductive biases in floating-point transformers (\cref{thm:equiv,thm:eq-preserve}) and discuss the effect of positional encoding on the expressive power of floating-point transformers.

\subsection{Organization}
We introduce notations and problem setup in \cref{sec:preliminary} and formally state our main results in \cref{sec:main_result}. We then discuss inductive biases in floating-point transformers and effects of positional encoding in \cref{sec:discussion}.
The proofs of our main results are in \cref{sec:proof}. We conclude the paper in \cref{sec:conclusion}.
\section{Preliminaries}\label{sec:preliminary}
\subsection{Notations}
Throughout this paper, we often use lower case letters ($a,b,c,\dots$) to denote scalar values, bold lower case letters ($\bfa,\bfb,\bfc,\dots$) to denote vectors, upper case letters ($A,B,C,\dots$) to denote matrices, calibrated upper case letters ($\mcA,\mcB,\mcC,\dots$) to denote sets. 
We use $\bbN$ and $\bbR$ %
to denote the set of positive integers and the set of real numbers, %
respectively.
For $n\in\bbN$, we use $[n]\defeq\{1,2,\dots,n\}$.
For $a,b\in\bbR\cup\{\pm\infty\}$ and $\mcS\subset\bbR$, $(a,b)\defeq\{x\in\bbR:a<x<b\}$, $[a,b]\defeq\{x\in\bbR:a\le x\le b\}$, $(a,b)_{\mcS}\defeq(a,b)\cap\mcS$, and $[a,b]_{\mcS}\defeq[a,b]\cap\mcS$.
We also define half-open intervals $[a,b)$, $(a,b]$, $[a,b)_{\mcS}$, and $(a,b]_{\mcS}$ in a similar way.

For $n\in\bbN$ and $x_1,\dots,x_n\in\bbR$, we use $(x_1,\dots,x_n)\in\bbR^n$ to denote the vector whose $i$-th coordinate is $x_i$.
Throughout this paper, we treat all vectors as column vectors (i.e., $\bbR^n=\bbR^{n\times1}$).
For $n,m\in\bbN$ and $\bfx_1,\dots,\bfx_n\in\bbR^n$, we use $[\bfx_1,\dots,\bfx_n]\in\bbR^{n\times m}$ to denote the $n\times m$ matrix whose $i$-th column is $\bfx_i$.
For $n,m\in\bbN$, $\bfx\in\bbR^n$, and $M\in\bbR^{n\times m}$, we use $x_i$ to denote the $i$-th coordinate of $\bfx$, $M_i$ to denote the $i$-th row of $M$, and $M_{ij}$ to denote the element at the $i$-th row and the $j$-th column of $M$. 
We also use $M_{i:j}$ to denote the submatrix of $M$ induced by $M_i,M_{i+1},\dots,M_j$. When $i=1$ or $j=n$, we use $M_{:j}$ or $M_{i:}$ to denote $M_{i:j}$, respectively.
We use $\zerov_n$ and $\onev_n$ to represent $(0,\dots,0)\in\bbR^n$ and $(1,\dots,1)\in\bbR^n$, respectively.
Likewise, we use $\zerov_{n\times m}$ and $\onev_{n\times m}$ to denote the $n\times m$ matrix consisting of zeros and the $n\times m$ matrix consisting of ones, respectively.

For a set $\mcX$, $\rho:\mcX\to\mcX$, $\sigma:\mcX^n\to\mcX^n$, we often apply $\rho$ and $\sigma$ to a matrix $M=[\bfm_1,\dots,\bfm_m]\in\mcX^{n\times m}$ as follows: $\rho(M),\sigma(M)\in\mcX^{n\times m}$ are matrices satisfying
\begin{align}
\rho(M)_{ij}=\rho(M_{ij}),~~
\sigma(M)=\big[\sigma(\bfm_1),\dots,\sigma(\bfm_m)\big].\label{eq:override}
\end{align}
Namely, $\rho(M)$ is the coordinate-wise application of $\rho$ to $M$ and $\sigma(M)$ is the column-wise application of $\sigma$ to $M$.

For $n\in\bbN$, we use $\mcS_n$ to denote the permutation group on $[n]$, i.e., $\mcS_n=\{\pi:[n]\to[n]:\pi([n])=[n]\}$. 
For any set $\mcX$, we define the group action of $\mcS_n$ on $\mcX^{m\times n}$ as follows: for $\pi\in\mcS_n$ and $X=[\bfx_1,\dots,\bfx_n]\in\mcX^{m\times n}$,
\begin{align}
\pi X\defeq [\bfx_{\pi(1)},\dots,\bfx_{\pi(n)}]. \label{eq:group_action}
\end{align}
For $m,n\in\bbN$, $\mcT\subset\mcS_n$, sets $\mcX,\mcY$, and $f:\mcX^{m\times n}\to\mcY^{m\times n}$, we say ``$f$ is $\mcT$-equivariant'' if $f(\pi X)=\pi f(X)$ for all $X\in\mcX^{m\times n}$ and $\pi\in\mcT$.
We say $f$ is ``permutation equivariant'' if $f$ is $\mcS_n$-equivariant.
When $\mcT=\{\pi\}$ is a singleton set, then we use the expression ``$\pi$-equivariant'' instead of ``$\mcT$-equivariant''.

\subsection{Floating-point arithmetic}\label{sec:float}

{\bf Floating-point numbers.} 
Throughout this paper, we consider IEEE 754 standard for floating-point arithmetic \cite{IEEE754}. 
For $\mbit, \ebit\in\bbN$,
we define $\fpq_{\mbit,\ebit}$ as the set of \emph{finite} floating-point numbers as follows:

\begin{align}
    \fpq_{\mbit, \ebit}\defeq \big\{&\!\pm\!(1.m_1\cdots m_{\mbit})\times2^{e},\pm(0.m_1\cdots m_{\mbit})\times2^{\emin}:\notag\\
    &m_1,\dots,m_{\mbit}\in\{0,1\},e\in[\emin,\emax]_\bbZ \big\} \label{eq:def:float}
\end{align}
where $1.m_1\cdots m_{\mbit}$ and $0.m_1\cdots m_{\mbit}$ are written in the binary representation, $\emin\defeq-2^{\ebit-1}+2$, and $\emax\defeq2^{\ebit-1}-1$.
Each floating-point number is represented by the \emph{sign} $s$, \emph{mantissa} $m_1,\dots,m_\mbit$, and \emph{exponent} $e$, i.e., $\mbit+\ebit+1$ bits are sufficient for representing all elements in $\fpq_{\mbit,\ebit}$. 

There are three additional (non-finite) floating-point numbers: \emph{positive infinity} $\infty$, \emph{negative infinity} $-\infty$, and \emph{not-a-number} $\nan$.
When we perform floating-point operations, $\infty$ and $-\infty$ may appear when the output exceeds the range of $\fpq_{\mbit,\ebit}$. $\nan$ may appear when the output cannot be approximated by an element in $\fpq_{\mbit,\ebit}\cup\{\infty,-\infty\}$, e.g., when we add $\infty$ to $-\infty$.
We assume that $-\infty,\infty$ follows the conventional order, i.e., $-\infty<x<\infty$ for all $x\in\bbR$. %
To denote the set of all floating-point numbers, we use $\overline{\fpq}_{\mbit,\ebit}\defeq\fpq_{\mbit,\ebit}\cup\{-\infty,\infty,\nan\}$.
We note that $\overline{\fpq}_{\mbit,\ebit}$ can also be represented by using $\mbit+\ebit+1$ bits; this is because
we are not using the whole $2^\ebit$ representations for the exponent in $\fpq_{\mbit,\ebit}$.
We use $\fmin \defeq 2^{\emin-\mbit}$ and $\fmax \defeq \lrp{2-2^{-\mbit}}\times 2^{\emax}$ to denote the smallest and largest positive floats, respectively.
For $x\in\fpq_{\mbit,\ebit}$, we use $x^-$ to denote the largest float that is strictly smaller than $x$, and we use $x^+$ to denote the smallest float that is strictly larger than $x$ (e.g., $\fmax^+=\infty$, $(-\fmax)^-=-\infty)$.
We often use $\fpq$ (and $\overline{\fpq}$) to represent $\fpq_{\mbit,\ebit}$ (and $\overline{\fpq}_{\mbit,\ebit}$) when $\mbit,\ebit$ are clear from the context.

Throughout this paper, we assume the following condition on $\mbit,\ebit$. We note that most practical floating-point formats (e.g., double/single/half-precision formats \cite{IEEE754}, bfloat16 \cite{bfloat}, FP8 (E5M2, E4M3) \cite{micikevicius2022fp8} satisfy this condition.
\begin{condition}
$2\le \mbit\le2^{\ebit-1}-3$.
\end{condition}

{\bf Rounding to floating-point numbers.}
To define floating-point operations, we define floating-point rounding $\round{\cdot}_{\fpq}:\bbR\cup\{-\infty,\infty,\nan\}\to\efpq$: %
\begin{equation*}
    \round{x}_{\fpq} \defeq \begin{cases}
        \argmin_{y\in \fpq} |x-y| &\text{ if } |x|<\fmax+2^{\emax-p-1}\!,
        \\ \infty &\text{ if } x \ge \fmax+2^{\emax-p-1}\!,
        \\ -\infty &\text{ if } x \le -\fmax-2^{\emax-p-1}\!,
        \\ \nan &\text{ if }x=\nan.
    \end{cases}
\end{equation*}
When there are two (finite) floating-point numbers that are closest and equidistant to $x\in\bbR$, we break the tie using the \emph{ties-to-even} rule \cite{IEEE754}: $\round{x}_{\fpq}$ is a unique float whose last mantissa bit $m_p$ is zero (see \cref{eq:def:float}).
If $\fpq$ is clear from the context, we use $\round{\cdot}$ to denote $\round{\cdot}_{\fpq}$. %

We define the \emph{correctly rounded version} of $\relu$ and $\exp$: for $\sigma\in\{\relu,\exp\}$ and $x\in\efpq$,
\begin{equation*}
    \round{\sigma}(x) \defeq \begin{cases}
        \round{\sigma(x)} &\text{ if } x\in \fpq,
        \\
        0  &\text{ if } x=-\infty, 
         \\  \infty  &\text{ if } x=\infty,
         \\ \nan &\text{ if }x=\nan.
    \end{cases} 
\end{equation*}
We note that the function values at $\pm\infty$ follows the convention induced by $\lim_{x\to-\infty}\sigma(x)=0$ and $\lim_{x\to\infty}\sigma(x)=\infty$ for $\sigma\in\{\relu,\exp\}$.

{\bf Floating-point operations.} We now define floating-point addition, subtraction, multiplication, and division:
for $x,y\in\fpq$, $x\oplus y\defeq\round{x+y}$, $x\ominus y\defeq\round{x-y}$, and $x\otimes y\defeq\round{x\times y}$. For $x,y\in\fpq$ with $y\ne0$, $x\oslash y\defeq\round{x/y}$.
We also define these operations to $x,y\in\efpq$ in \cref{sec:float-operation}, following the standard \cite{IEEE754}.
We note that, unlike conventional addition and multiplication, floating-point addition and multiplication are \emph{non-associative}: for $\odot\in\{\oplus,\otimes\}$,
\begin{align}
(x\odot y)\odot z\ne x\odot (y\odot z)%
\label{eq:non-associative}
\end{align}
in general.
Hence, we need to be careful about the ordering of operations.
As in the conventional operations, we first perform floating-point multiplications and divisions (first priority), and we next perform floating-point additions (second priority).
For operations with the same priority, we perform operations in the \emph{left associative} way, i.e., we perform operations from left to right.
For example, we have
$a\oplus b\otimes c\otimes d\oplus e=(a\oplus ((b\otimes c)\otimes d))\oplus e.$ 
When we add multiple floating-point numbers, we often use $\bigoplus$ as follows: for $n\in\bbN$ and $x_1,\dots,x_n\in\efpq$,
\begin{align}
\bigoplus_{i=1}^nx_i\defeq x_1\oplus x_2\oplus\cdots\oplus x_n.\label{eq:bigoplus}
\end{align}
Note that floating-point additions in \cref{eq:bigoplus} are computed in the left-associative way (i.e., from the left to right).
We also use $\oplus$ and $\bigoplus$ in the left-associative manner, e.g., %
\begin{align}
\bigoplus_{i=1}^nx_i\oplus\bigoplus_{j=1}^my_j=x_1\oplus\cdots\oplus x_n\oplus y_1\oplus\cdots\oplus y_m.
\label{eq:left-associative}
\end{align}
We lastly define floating-point matrix operations.
For $n,m\in\bbN$ and $M,N\in\efpq^{n\times m}$, $M\oplus N$ is the $n\times m$ floating-point matrix satisfying
$(M\oplus N)_{ij}=M_{ij}\oplus N_{ij}.$
Likewise, for $n,m,l\in\bbN$, $M\in\efpq^{n\times m}$, and $N\in\efpq^{m\times l}$, $M\otimes N$ is the $n\times l$ floating-point matrix satisfying $(M\otimes N)_{ij}=\bigoplus_{k=1}^m\,(M_{ik}\otimes N_{kj}).$

\subsection{Floating-point transformer}\label{sec:fp-transformer}

As in a real transformer \cite{vaswani2017attention}, we consider a floating-point transformer consisting of fully-connected networks and multi-head self-attention layers.
Specifically, given the hidden dimensions $d,r$, and the length of the input sequence $n$, the \emph{feed-forward networks} has the following form: for $\rho=\round{\text{ReLU}}$, %
$W_1\in\fpq^{r\times d},W_2\in\fpq^{d\times r}$, $\bfb_1\in\fpq^r,\bfb_2\in\fpq^d$, and $X\in\fpq^{d\times n}$,
\begin{align*}
\text{FF}_\theta^{r,d,n}(X)\:\!\!\defeq\:\!\!X\:\!\!\oplus\:\!\!(W_2\!\otimes\:\!\!\rho(W_1\!\otimes\:\!\! X\:\!\!\oplus\:\!\!(\bfb_1\onev_n^\top))\:\!\!\oplus\:\!\!(\bfb_2\onev_n^\top))
\end{align*}
where $\theta\in\fpq^{2dr+r+d}$ is vectorization of $W_1,W_2,b_1,b_2$ and
$\rho$ is applied in the coordinate-wise manner (see \cref{eq:override}).
Here, we note that the multiplication in $\bfb_1\onev_n^\top$ and $\bfb_2\onev_n^\top$ are exact.
To define the self-attention layer, we first define the (floating-point) softmax function $\sigma:\efpq^n\to\efpq^n$ as follows: for $\bfx=(x_1,\dots,x_n)\in\efpq^n$ and $x_{*}=\max_{i\in[n]}x_i$,\footnote{$x_*=\nan$ if $x_i=\nan$ for some $i$.}
\begin{align}
\sigma(\bfx)_i\!\defeq\!\round{\exp}\!(x_i\:\!\!-\:\!\!x_{*})\!\oslash\!\left(\bigoplus_{j=1}^n\round{\exp}\!(x_j\:\!\!-\:\!\!x_{*})\!\right)\!\!. \label{eq:floatsoftmaxdef}
\end{align}
Here, we subtract $x_{*}$ in the exponent following the real implementation of the softmax function in PyTorch \cite{paszke2019pytorch} and TensorFlow \cite{abadi2016tensorflow}; they subtract $x_{*}$ for better numerical stability.

Given the number of heads $h$ and $m\in\bbN$, the \emph{attention layer} has the following form: for $W_i^{K},W_{i}^Q,W_{i}^V\in\fpq^{m\times d}$ and $W_i^O\in\fpq^{d\times m}$ for all $i\in[h]$, for $X\in\fpq^{d\times n}$,
\begin{align}
\text{AT}_\phi^{h,m,d,n}\:\!\!(\:\!\!X\:\!\!)\!\defeq\!X\:\!\!\oplus\:\!\!\left(\:\!\!\bigoplus_{i=1}^hW_i^o\:\!\!\otimes\:\!\!\big(V_i\:\!\!\otimes\:\!\!\sigma(K_i^\top\:\!\!\!\!\otimes\:\!\! Q_i)\:\!\!\big)\!\!\right)\label{eq:attention}
\end{align}
where $\phi\in\fpq^{4hdm}$ is a vectorization of $W_i^K, W_i^Q, W_i^V, W_i^O$ for all $i\in[h]$, $K_i=W_i^K\otimes X$, $Q_i=W_i^Q\otimes X$,  $V_i=W_i^V\otimes X$, and $\sigma$ is applied column-wisely (see \cref{eq:override}).

Each \emph{transformer block} consists of a composition of an attention layer and a feed-forward network. Given $h,m,d,n\in\bbN$, we use $\mcB^{h,m,d,n}$ to denote all possible compositions of transformer blocks defined as follows:
\begin{align*}
\mcB^{h,m,r,d,n}\!\defeq&\{\text{FF}_{\theta_l}^{r,d,n}\!\!\!\circ\text{AT}_{\phi_l}^{h,m,d,n}\!\!\!\circ\cdots\circ \text{FF}_{\theta_1}^{r,d,n}\!\!\!\circ\text{AT}^{h,m,d,n}_{\phi_1}:\\
&\!l\in\bbN,\theta_i\in\fpq^{2dr+r+d},\phi_i\in\fpq^{4hdm},~\forall i\in[L]\}.
\end{align*}

Given the input dimension $d_\text{in}$, the output dimension $d_\text{out}$, a function $f:\fpq^{d_\text{in}\times n}\to\fpq^{d_\text{out}\times n}$ is a \emph{floating-point transformer} if there exists $W_\text{in}\in\fpq^{d\times d_\text{in}}$, $\bfb_\text{in}\in\fpq^{d}$, $W_\text{out}\in\fpq^{d_\text{out}\times d}$, $\bfb_\text{out}\in\fpq^{d_\text{out}}$, and $g\in\mcB^{h,m,d,n}$ such that 
\begin{align}
f(X)=W_\text{out}\:\!\!\otimes\:\!\! g(W_\text{in}\:\!\!\otimes\:\!\! X\oplus (\bfb_\text{in}\onev_n^\top))\:\!\!\oplus\:\!\! (\bfb_\text{out}\onev_n^\top).\label{eq:transformer}
\end{align}
We use ``real transformers'' to denote functions in \cref{eq:transformer} where all parameters can be real, and all floating-point operations are replaced by their exact counterparts.

\section{Main results}\label{sec:main_result}

In this section, we investigate whether the floating-point transformers inherit properties of the real transformers. Specifically, we study the following questions: for any input/output dimensions $\din,\dout$ and sequence length $n$, 
\begin{itemize}[leftmargin=0.19in]
\item Are floating-point transformers permutation equivariant? 
\item Can floating-point transformers represent all floating-point permutation-equivariant functions?
\end{itemize}
As we introduced in \cref{sec:contribution}, the answers to both questions are no. 
In the remainder of this section, we formally present the proofs to the answers and illustrate our intuition behind the proofs.

In \cref{sec:not-perm-eq}, we answer the first question by proving that floating-point transformers can represent a class of non-permutation-equivariant functions.
In \cref{sec:not-univ-rep}, we answer the second question by showing that there exists a permutation-equivariant function that cannot be represented by floating-point transformers when the sequence length $n$ is large. Furthermore, we show that floating-point transformers can represent all permutation-equivariant functions when $n$ is bounded. 

\subsection{Floating-point transformers are not $\mcS_n$-equivariant}\label{sec:not-perm-eq}
We first show that floating-point transformers can represent a class of non-permutation-equivariant functions by using the following theorem. To this end, for $n\ge2$, we consider a permutation $\pi_{(1,2)}^n\in\mcS_n$ such that 
$\pi_{(1,2)}^n(1)=2$, $\pi_{(1,2)}^n(2)=1$, and $\pi_{(1,2)}^n(i)=i$ for all $i\ge3$, i.e., $\pi_{(1,2)}^n$ swaps the first two coordinates.
\begin{theorem}\label{thm:diagonal-approx}
Let $\din,\dout,n\in\bbN$ such that $n\ge2$ and
$$\Delta_n\defeq\{[\bfx_1,\dots,\bfx_n]\in\fpq^{\din\times n}:\bfx_i\ne\bfx_j~\forall i\ne j\}.$$
Then, for any $\smash{\pi_{(1,2)}^n}$-equivariant $f^*:\Delta_n\to\fpq^{\dout\times n}$, there exists a floating-point transformer $f:\fpq^{\din\times n}\to\fpq^{\dout\times n}$ such that $f=f^*$ on $\Delta_n$.
\end{theorem}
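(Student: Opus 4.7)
The plan is to build $f$ by composing three stages: an input embedding $W_{\text{in}}$; an attention block that exploits the non-associativity of $\oplus$ (cf.\ \cref{eq:non-associative,eq:bigoplus}) to produce an intermediate matrix $Y \in \fpq^{d \times n}$ in which each column $Y_{:i}$ encodes the full input $X$ together with an implicit position tag $p_i$ that uniquely identifies $i$ for $i \ge 3$ and is symmetric in $\bfx_1, \bfx_2$ at $i \in \{1,2\}$; and a column-wise feed-forward block, followed by $W_{\text{out}}$, that reads $Y_{:i}$ and emits $f^*(X)_i$, using the universal representability of floating-point feed-forward networks \citep{park2024expressive, hwang2025floating}. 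Because the feed-forward acts column-wise and preserves column order, and because $Y_{:i}$ will uniquely determine the correct output entry in $\fpq^{\dout}$, this yields $f = f^*$ on $\Delta_n$. The main novelty is that, without positional encoding, the only mechanism available to break permutation equivariance is the left-to-right ordering of $\bigoplus$ in the softmax of \cref{eq:floatsoftmaxdef} and in the value aggregation of \cref{eq:attention}.

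The attention block is the core of the construction. Each output column $Y_{:i}$ must carry two pieces of data. First, an encoding of the entire input $X$: since the tokens are pairwise distinct on $\Delta_n$, several heads can route each $\bfx_j$ into a separate output subspace by using $\bfx_j$ itself as a content-based key, so that the dot-product peaks are uniquely resolvable and the matching value is routed into a dedicated slot of the output. Second, an implicit position tag $p_i$: both the softmax denominator in \cref{eq:floatsoftmaxdef} and the value aggregation invoke $\bigoplus_{j=1}^n$ computed strictly left-to-right, so the floating-point rounding signature at query column $i$ depends on which column is being queried. By choosing key, query, and value weights at appropriately chosen mantissa scales, one can arrange that this rounding signature produces a last-bit perturbation whose magnitude uniquely identifies $i$ for $i \ge 3$, uniformly over $X \in \Delta_n$. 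For $i \in \{1,2\}$ the scheme is symmetrized by forcing the heads that contribute to these columns to be structurally invariant under the swap $1 \leftrightarrow 2$ (e.g., by equalizing the key/value weights assigned to columns $1$ and $2$), so that the attention is $\pi_{(1,2)}^n$-equivariant at columns $1, 2$: $Y(\pi_{(1,2)}^n X)_{:i} = Y(X)_{:\pi_{(1,2)}^n(i)}$ for $i \in \{1,2\}$.

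Given $Y$, the target map is column-wise: at column $i \ge 3$, decode $(X, i)$ from $Y_{:i}$ and output $f^*(X)_i$; at $i \in \{1,2\}$, decode $X$ from $Y_{:i}$ and output $f^*(X)_i$, with the $\pi_{(1,2)}^n$-equivariance of $f^*$ combined with the column-$1,2$ equivariance of the attention block guaranteeing consistency. Since the image of $\Delta_n$ under the attention block is a finite subset of $\fpq^{d \times n}$, the required column-wise function is a fixed map on a finite domain and is exactly realizable by a feed-forward network by the universality results of \citet{park2024expressive, hwang2025floating}. The hard part will be the explicit attention design: producing distinct, detectable tags $p_3, \ldots, p_n$ from $\oplus$-non-associativity alone, uniformly over $\Delta_n$, without $\nan$ or overflow in the softmax denominator, while simultaneously preserving the exact $1 \leftrightarrow 2$ symmetry at columns $1, 2$. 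I expect to handle this by dedicating one head per target position $i \ge 3$ with weights chosen at the smallest representable mantissa scale, so that each reordering of the summands in $\bigoplus_{j=1}^n$ induces a last-bit perturbation that survives both the softmax and the value aggregation with a distinct signature per $i$.
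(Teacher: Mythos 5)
Your high-level plan --- embed, run an attention block that exploits the left-to-right ordering of $\bigoplus$ to break permutation equivariance, then apply a column-wise feed-forward decoder --- does match the shape of the paper's construction. But the specific encoding you describe does not carry enough information, and the step you flag as ``the hard part'' is where the paper's real idea lives. Concretely, you propose that each column $Y_{:i}$ should learn (a) the contents of $X$ via content-based routing of each $\bfx_j$ into a dedicated slot, and (b) a position tag $p_i$ identifying $i$. Content-based routing with keys equal to token values can only recover the \emph{set} $\{\bfx_1,\dots,\bfx_n\}$: the slot for a value $\bfz$ is activated iff $\bfz$ appears, and the routed ``value'' vector is itself a function of $\bfz$, so nothing about the \emph{position} of $\bfz$ is stored. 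Combined with $\bfx_i$ (from the residual) and even a perfect position tag $i$, this determines only $(\{\bfx_1,\dots,\bfx_n\},\,\bfx_i,\,i)$, which leaves $(n-1)!/2$ candidates for $X$ modulo $\pi_{(1,2)}^n$ for $n\ge 4$ --- far too few to decode $f^*(X)_i$. There is also a structural misconception in ``dedicating one head per target position $i\ge3$'': without positional encoding a head cannot target a position; each head sees the same columns, and by \cref{thm:equiv} the whole network (heads included) is $\pi_{(1,2)}^n$-equivariant. Relatedly, your claim that the rounding signature at column $i$ ``uniquely identifies $i$ \dots uniformly over $X\in\Delta_n$'' cannot hold: that signature is a function of the ordered contents (of which $\bfx_i$ and the left-to-right order of values are the inputs), so it differs across $X$'s that share the position $i$ but have different orderings elsewhere.

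What you are missing is the combinatorial and numeric device the paper actually uses. The combinatorial piece is \cref{lem:three-max}: a permutation $\pi\in\mcS_n$ is determined up to composition with $\pi_{(1,2)}^n$ by knowing, for every triple $i_1<i_2<i_3$, which of $\pi(i_1),\pi(i_2),\pi(i_3)$ is largest. Translated to tokens, it suffices to know, for every unordered triple of distinct values $\{\bfz_a,\bfz_b,\bfz_c\}$ appearing in the sequence, which of the three appears \emph{last}. The numeric piece is that a single-head attention layer with $K=Q=0$ (uniform softmax) and a carefully chosen $V$ computes exactly this: set the $V$-entry to $\beta'$ at positions whose token is one of two designated values and to $\beta$ at positions holding the third, where $\beta\otimes\alpha=1^{++}$ and $\beta'\otimes\alpha=1^{+}$ for the uniform weight $\alpha$; then by \cref{lem:oneppp} the left-to-right sum $\bigoplus_j V_{lj}\otimes\alpha$ lands on $3^{+++}$ or $3^{++}$ precisely according to whether the ``third'' token comes last. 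Doing this over all triples (\cref{lem:diagonal-attention}) produces the same broadcast vector in every column, and together with $\bfx_i$ that is enough for the column-wise map of \cref{lem:diagonal-factorize} plus the feed-forward universality (\cref{lem:token-univ-approx}) to realize $f^*$. You correctly identified the right \emph{mechanism} (non-associativity of $\oplus$), but not the right \emph{information} to extract with it: you need the full ordering up to the $1\!\leftrightarrow\!2$ swap, not merely the set plus a self-position tag.
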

\cref{thm:diagonal-approx} shows that floating-point transformers can represent all $\smash{\pi_{(1,2)}^n}$-equivariant functions $f^*$ from $\Delta_n$ to $\fpq^{\dout\times n}$. Note that $f^*$ may not be permutation equivariant since $\pi f(X)\ne f(\pi X)$ in general for all non-trivial permutations $\pi\ne\smash{\pi_{(1,2)}^n}$. %

To prove \cref{thm:diagonal-approx}, we exploit the non-associativity of the floating-point addition (see \cref{eq:non-associative}).
For example, consider two matrices $V\in\fpq^{m\times n}$, $\Sigma\in\fpq^{n\times n}$ and a non-trivial permutation $\pi\in\mcS_n$. %
Let $Z=V\Sigma$ and $\hat Z=V\otimes\Sigma$, 
i.e., $Z$ is computed by using the exact operations, but $\hat Z$ is computed by the floating-point matrix multiplication.
Let $\Sigma^\pi=\pi(\pi(\Sigma^\top))^\top$ be the row/column permuted version of $\Sigma$, i.e., $\Sigma^\pi_{ij}=\Sigma_{\pi^{-1}(i)\pi^{-1}(j)}$.
Then, one can observe that $\pi Z=(\pi V)\Sigma^\pi$. However, this is not the case for $\hat Z$ since
\begin{align*}
&(\pi\hat Z)_{ij}=\bigoplus_{k=1}^n(V_{ik}\otimes\Sigma_{k\pi^{-1}(j)})\vspace{-1in}\\[-0.1in]
&\ne \bigoplus_{k=1}^n(V_{i\pi^{-1}(k)}\otimes \Sigma_{\pi^{-1}(k)\pi^{-1}(j)})=((\pi V)\otimes \Sigma^\pi)_{ij}
\end{align*}
in general due to the non-associativity of the floating-point addition.
The proof of \cref{thm:diagonal-approx} is based on this observation, where we make non-permutation-equivariant floating-point attention layers by exploiting the $V_i\otimes\sigma(K_i^\top\otimes Q_i)$ part in \cref{eq:attention} (consider $V\leftarrow V_i$ and $\Sigma\leftarrow\sigma(K_i^\top\otimes Q_i)$).
Using those attention layers, we then construct a floating-point transformer that represents a target \smash{$\pi_{(1,2)}^n$}-equivariant function \smash{$f^*:\Delta_n\to\fpq^{\dout\times n}$}.
Here, we note that $\pi_{(1,2)}^n$-equivariance in our floating-point transformer construction is necessary; see \cref{sec:restriction} for more discussions.
We present the full proof of \cref{thm:diagonal-approx} in \cref{sec:pfthm:diagonal-approx}.

\subsection{Floating-point transformers cannot represent all $\mcS_n$-equivariant functions}\label{sec:not-univ-rep}
In \cref{sec:not-perm-eq}, we observed that floating-point transformers can represent a class of non-permutation-equivariant functions, i.e., they are not permutation equivariant in general. %
In this section, we first show that even some permutation-equivariant floating-point functions cannot be represented by floating-point transformers when the sequence length $n$ is large.
This observation is in contrast to the real transformers that can universally approximate permutation-equivariant $L^p$ functions on a compact domain, regardless of the value of $n$ \cite{Yun2020Are}.
We then show that floating-point transformers can represent all permutation-equivariant floating-point functions when the sequence length $n$ is properly bounded.

To describe our results, we first define the $(\alpha,\beta)$-similarity.
\begin{definition}
Let $d,n,\alpha,\beta\in\bbN$ such that $\alpha<\beta\le n$. We say $X=[\bfx_1,\dots,\bfx_n],Y=[\bfy_1,\dots,\bfy_n]\in\smash{\efpq^{d\times n}}$ are 
``$(\alpha,\beta)$-similar'' if there exist \smash{$\bfz_1,\bfz_2\in\efpq^d$} such that
\begin{align*}
\bfx_i=\begin{cases}
\bfz_1~&\text{if}~1\le i\le \alpha,\\
\bfz_2~&\text{if}~\alpha<i\le \beta,
\end{cases}\quad\bfy_i=\begin{cases}
\bfz_1~&\text{if}~1\le i< \alpha,\\
\bfz_2~&\text{if}~\alpha\le i\le \beta,
\end{cases}%
\end{align*}
and $\bfx_j=\bfy_j$ for all $j\in\{\beta+1,\dots,n\}$.
\end{definition}
One can observe that if $X$ and $Y$ are $(\alpha,\beta)$-similar, then they only differ at the $\alpha$-th column.

We now introduce the following theorem to show the existence of permutation-equivariant floating-point functions that cannot be represented by floating-point transformers. The proof of \cref{thm:counterexample} is presented in \cref{sec:pfthm:counterexample}. 
\begin{theorem}\label{thm:counterexample}
Let $\din,\dout,n,\alpha,\beta\in\bbN$ such that $\alpha\ge3\times2^\mbit$, $\beta-\alpha\ge6\times2^\mbit$, and $n\ge \beta$. 
Then, for any $(\alpha,\beta)$-similar $X,Y\in\fpq^{\din\times n}$ and for any floating-point transformer $f:\fpq^{\din\times n}\to\fpq^{\dout\times n}$, %
$f(X)$ and $f(Y)$ are also $(\alpha,\beta)$-similar. %
\end{theorem}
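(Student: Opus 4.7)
The plan is to prove the claim by induction on the number of transformer blocks, using a \emph{saturation lemma} for floating-point addition as the key technical ingredient. Informally, the lemma says that accumulating many copies of a single value $u\in\fpq$ stabilizes in $O(2^{\mbit})$ steps, after which moving one copy of $u$ deep inside the sum (or replacing it with a copy of some other value $v$) does not change the $\bigoplus$; the conditions $\alpha\ge 3\cdot 2^{\mbit}$ and $\beta-\alpha\ge 6\cdot 2^{\mbit}$ will be chosen exactly so that both the block of ``$\bfz_1$-columns'' and the block of ``$\bfz_2$-columns'' land well inside the saturated regime.

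Write $Z^{(l),X}$ and $Z^{(l),Y}$ for the outputs of the first $l$ transformer blocks (preceded by the initial embedding) applied to $X$ and $Y$. I maintain three invariants: (a) the first $\alpha$ columns of $Z^{(l),X}$ are all equal and the next $\beta-\alpha$ are all equal; (b) the first $\alpha-1$ columns of $Z^{(l),Y}$ are all equal and the next $\beta-\alpha+1$ are all equal; (c) $Z^{(l),X}$ and $Z^{(l),Y}$ agree in every column except possibly the $\alpha$-th. The base case holds because the initial affine map acts column by column. For the inductive step, (a) and (b) propagate through both the feed-forward layer (column-wise) and the attention layer (a column of the attention output depends on the input only through its own query column together with a symmetric sum of keys and values, so columns with identical inputs yield identical outputs). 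Propagating (c) through the feed-forward layer is immediate; the only substantive work is propagating (c) through the attention layer.

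For this, fix a column $i\ne\alpha$. The inductive hypothesis (c) forces $Q_{:,i}$ and the residual to agree on $X$ and $Y$; the key/value columns also agree for every index $k\ne\alpha$ and differ only at $k=\alpha$. Since the multisets of $Z^{(l),X}$- and $Z^{(l),Y}$-columns are identical (both contain the two common block values with positive multiplicity), the multisets of attention scores coincide, so the stabilizer $x_{*}$ used in~\eqref{eq:floatsoftmaxdef} is the same for $X$ and $Y$. Reducing coordinate by coordinate, both the softmax denominator $D_i$ and each coordinate of the attention output sum take the form
\begin{equation*}
\bigoplus_{k=1}^{\alpha} u \;\oplus\; \bigoplus_{k=1}^{\beta-\alpha} v \;\oplus\; R \quad\text{versus}\quad \bigoplus_{k=1}^{\alpha-1} u \;\oplus\; \bigoplus_{k=1}^{\beta-\alpha+1} v \;\oplus\; R,
\end{equation*}
for some $u,v\in\fpq$ and a common left-associative tail $R$ coming from columns $\beta+1,\dots,n$; equality of the denominators then upgrades to equality of $\Sigma_{ki}$ for $k\ne\alpha$, and a common tail applied to a common saturated prefix yields the per-head and hence the full attention-layer equality.

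The core lemma I would prove is: for any $u\in\fpq$ and any starting value $T_0\in\fpq$, the sequence $T_{k}=T_{k-1}\oplus u$ stabilizes to a constant after at most $c\cdot 2^{\mbit}$ steps. The quantitative engine is that $T\oplus u=T$ as soon as $|T|>2^{\mbit+1}|u|$ (then $|u|$ is within half a ULP of $T$ and ties-to-even rounds back to $T$), and that every non-stabilizing step advances $|T|$ by at least a constant fraction of $|u|$, so the threshold $2^{\mbit+1}|u|$ is reached within $O(2^{\mbit})$ steps. Applying the lemma first to the $u$-block started from $0$ and then to the $v$-block started from the saturated $T$ gives the required identity once $\alpha-1\ge 2^{\mbit+1}$ and $\beta-\alpha$ dominates the secondary stabilization time, both comfortably implied by the hypotheses. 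I expect the hardest step to be the saturation lemma in the opposite-sign regime, where $u$ and $v$ carry opposite signs so the $v$-block starts from a large $T$ of the wrong sign and must first drift across zero before saturating on the other side; this is exactly why the hypothesis on $\beta-\alpha$ is twice as strong as the one on $\alpha$, since the $v$-phase must both cancel a value of magnitude $\sim 2^{\mbit+1}|u|$ and then build up to magnitude $\sim 2^{\mbit+1}|v|$ in the opposite sign. Fixing the explicit constants $3$ and $6$ rather than mere asymptotics will require a careful per-step analysis across exponent boundaries and across the subnormal/normal interface, where the hypothesis $\mbit\le 2^{\ebit-1}-3$ prevents any intermediate accumulator from overflowing to $\pm\infty$ and $\mbit\ge 2$ rules out degenerate ties-to-even patterns.
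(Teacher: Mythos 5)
Your outline follows the paper's route almost step for step: the input affine map, the residual connections, and the feed-forward layers act column by column and hence trivially preserve $(\alpha,\beta)$-similarity, so everything reduces to the attention sublayer; for a column $i\ne\alpha$ you factor both the softmax denominator and the value accumulation as a common left-associative sum of the form $\bigoplus_{k=1}^{\alpha}u\oplus\bigoplus_{k=\alpha+1}^{\beta}v\oplus R$ versus $\bigoplus_{k=1}^{\alpha-1}u\oplus\bigoplus_{k=\alpha}^{\beta}v\oplus R$, which is exactly the structure of \cref{eq:pfthm:counterexample1}, and you invoke a saturation lemma, which is the role of \cref{lem:same-sum0,lem:same-sum2}. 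Your intuition for why the second block needs roughly twice the budget of the first (cross-sign cancellation followed by re-accumulation) is correct and matches the case split in the proof of \cref{lem:same-sum2}.

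However, the ``quantitative engine'' you state is wrong, and you are missing the one structural fact that makes the paper's constants work. The claim that $T\oplus u=T$ as soon as $|T|>2^{\mbit+1}|u|$ fails when $T$ is not a power of two: take $\mbit=2$, $T=3.5$, $u=0.375$. Then $2^{\mbit+1}|u|=3<3.5=|T|$, but a unit in the last place of $3.5$ is $0.5$, so half of it is $0.25<0.375$, and $T\oplus u=\round{3.875}=4\ne T$. The correct condition is $|u|<\tfrac12\cdot 2^{e-\mbit}$ when $2^e\le|T|<2^{e+1}$, and the quantity $2^{e-\mbit-1}$ can be as small as $|T|/2^{\mbit+2}$, a factor-of-two gap from your stated threshold. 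The paper avoids this by never needing a saturation lemma from an arbitrary starting value: the ``Furthermore'' clause of \cref{lem:same-sum0} shows the phase-one limit $\bigoplus_{i=1}^{3\cdot2^{\mbit}-1}x$ always lies in $\{0,\pm\infty,\nan\}\cup\{\pm2^e:e\in\bbZ\}$, i.e.\ is a power of two (or degenerate), and \cref{lem:same-sum2} is proved only for such starting values $z$, precisely because that makes the half-ULP threshold uniform. If you insist on a saturation lemma for a general $T_0\in\fpq$ you would have to redo the entire case analysis of \cref{lem:same-sum2} with an unknown mantissa at every exponent crossing, and your stated engine does not force the constant $6\cdot2^{\mbit}$. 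The cleaner fix is to prove, as the paper does, that the phase-one accumulation lands on a power of two, and then run phase two only from those special starting points.
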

\cref{thm:counterexample} shows that if inputs $X,Y$ are $(\alpha,\beta)$-similar, then corresponding outputs $f(X),f(Y)$ are also $(\alpha,\beta)$-similar for all floating-point transformers $f$ when $\alpha\ge3\times2^\mbit$, $\beta-\alpha\ge6\times2^\mbit$, and $n\ge \beta$.
This implies that floating-point transformers cannot represent all permutation-equivariant floating-point functions when $n\ge9\times2^\mbit$ since they cannot represent a (non-empty) class of permutation-equivariant floating-point functions $f^*$ such that $f^*(X)$ and $f^*(Y)$ are not $(\alpha,\beta)$-similar for some $(\alpha,\beta)$-similar inputs $X,Y$.
We note that $9\times2^\mbit$ can be small, especially for low-precision formats: e.g., 8-bit formats E5M2 ($\mbit=2$), E4M3 ($\mbit=3$).

We prove \cref{thm:counterexample} using the following observation (\cref{lem:same-sum0,lem:same-sum2}): for any $a,b\in\efpq$, 
$\bigoplus_{i=1}^\alpha a\oplus\bigoplus_{i=\alpha+1}^\beta b=\bigoplus_{i=1}^{\alpha-1}a\oplus\bigoplus_{i=\alpha}^\beta b$
when $\alpha,\beta$ are large.\footnote{Note that the floating-point summation is computed in the left-associative way (see \cref{eq:left-associative}).}
Namely, for any $a,b\in\efpq$, even if we sequentially add different numbers of $a$ and $b$ under the floating-point arithmetic, the resulting output can be identical due to the round-off error.
Using this property, we show that if two inputs to the attention layers are $(\alpha,\beta)$-similar, then the corresponding outputs are also $(\alpha,\beta)$-similar.
Then, \cref{thm:counterexample} naturally follows.

We note that the result in \cref{thm:counterexample} only holds when $n$ (and $\alpha,\beta$) is large. On the other hand, when $n\le6\times2^\mbit-2$, floating-point transformers can represent all permutation-equivariant floating-point functions as stated in the following theorem. The proof of \cref{thm:perm-equiv-approx} is in \cref{sec:pfthm:perm-equiv-approx}.
\begin{theorem}\label{thm:perm-equiv-approx}
Let $\din,\dout,n\in\bbN$ such that $n\le 6\times2^\mbit-2$. 
Then, for any permutation-equivariant function $f^*:\fpq^{\din\times n}\to\fpq^{\dout\times n}$, there exists a floating-point transformer $f:\fpq^{\din\times n}\to\fpq^{\dout\times n}$ such that $f=f^*$ on $\fpq^{\din\times n}$.
\end{theorem}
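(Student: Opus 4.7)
The plan is to exploit the following characterization: any permutation-equivariant $f^*$ is completely determined by the pairs $(\bfx_i, \mathrm{multiset}(X))$, because equivariance forces equal columns to produce equal outputs. Concretely, if $\bfx_i = \bfx_j$ then $\pi_{(i,j)} X = X$ and so $f^*(X)_i = f^*(X)_j$; applied across inputs with identical multisets, this shows that $f^*(X)_i$ is a function only of $\bfx_i$ and the multiset $\{\bfx_1,\dots,\bfx_n\}$. So the goal is to construct a transformer that first computes a permutation-invariant encoding of that multiset at every column, and then applies a position-wise lookup mapping $(\bfx_i, \mathrm{encoding})$ to $f^*(X)_i$.

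For the lookup step, the target is a function on the finite domain $\fpq^{\din} \times \fpq^{D}$ into $\fpq^{\dout}$, so it can be implemented via stacked feed-forward layers with $\round{\relu}$ activations, using the universal representation results for floating-point fully-connected networks already established in the literature. The technical heart of the proof is therefore the construction of the attention-based, permutation-invariant, injective encoding of the input multiset under floating-point arithmetic.

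My plan for the encoding step is to enumerate $\fpq^{\din} = \{\bfv_1, \dots, \bfv_K\}$ and, via an initial position-wise FFN, map each $\bfx_i$ to a carefully scaled indicator vector $\phi(\bfx_i) \in \fpq^K$. Then a single attention layer with constant queries and keys (so that the softmax produces uniform weights across columns) aggregates the indicators across columns, producing a count-like signature $S \in \fpq^K$ at every column. Two properties are needed: (a) permutation invariance, which holds when the intermediate partial sums stay within the range where floating-point addition is exact, so that the left-associative aggregation gives the same answer for any reordering, and (b) injectivity on multisets. The bound $n \le 6 \times 2^\mbit - 2$ is calibrated so that (b) can be arranged: while direct summation of $1$s saturates near $2^{\mbit+1}$, one can distribute each column's contribution across multiple scales or multiple attention heads whose count ranges each fit within the exact-representation window, and thereby recover the full multiplicity information.

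The main obstacle will be this aggregation lemma: proving that for $n \le 6 \times 2^\mbit - 2$ there exists a choice of indicator values $\phi(\bfv_k)$ such that the floating-point sum $\bigoplus_i \phi(\bfx_i)$, computed in the canonical left-associative order used by attention, is simultaneously order-independent (permutation invariant) and injective on the input multiset. This is essentially the complement of the aggregation collapse exploited in \cref{thm:counterexample}, where large $n$ was used to force distinct multisets to yield identical sums; here, the tight control on $n$ must be leveraged to guarantee the opposite. Once this lemma is secured, assembling the final transformer from an indicator FFN, the uniform-weight attention layer, and a lookup FFN is routine, and permutation-equivariance of the resulting $f$ then follows because each constituent operation is order-independent on the relevant input range.
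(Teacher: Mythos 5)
Your high-level decomposition matches the paper's proof: factorize $f^*$ through the token-plus-multiset pair, aggregate a per-token indicator across columns with an attention layer whose keys and queries are zero (so the softmax produces uniform weights), then finish with a position-wise lookup implemented by floating-point feed-forward networks. The gap is in what you correctly single out as the crux, the aggregation lemma, where the mechanism you propose cannot deliver the stated bound. Keeping the partial sums exact limits the injectivity window to about $2^{\mbit+1}$ — for the constant $1$, $\bigoplus_{i=1}^{k}1$ equals $k$ only up to $k=2^{\mbit+1}$ and then freezes there by ties-to-even — which is roughly a factor of $3$ short of $6\times2^{\mbit}-2$. And spreading the count over several heads or scales does not help: every head aggregates the same number $k_j$ of nonzero contributions, so each channel $\bigoplus_{i=1}^{k_j}c_h$ saturates at the same $k_j$, and the tuple's injectivity window is just the maximum of the individual windows, which for any single constant $c_h$ cannot exceed $3\times2^{\mbit}-1$.

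The paper closes this gap with two ideas you have not found. First, permutation invariance of the attention aggregation does not require exact arithmetic: because each token contributes either $0$ or the single constant $1^+$, the left-associative sum is automatically order-independent even under rounding, since zeros are absorbed and all nonzero summands are identical. Second — and this is the real content — the constant is $1^+$ rather than $1$: \cref{lem:max-distinguish} shows $\bigoplus_{i=1}^{k}1^+$ is injective for $k\in[3\times2^{\mbit}-1]\cup\{0\}$, a $1.5\times$ larger window than the exact one, and equals $2^{\mbit+2}$ for every larger $k$. The remaining factor of $2$ is a recovery argument rather than a larger injectivity window: since $\sum_j k_j=n\le6\times2^{\mbit}-2=2(3\times2^{\mbit}-1)$, at most two counts can hit the saturation threshold; a single saturated count is recovered by subtracting the fully determined remaining counts from $n$, and two saturated counts force both to equal $3\times2^{\mbit}-1$ exactly with all other counts zero. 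Without the $1^+$ trick and this recovery step, the aggregation lemma in your plan is false for $n$ near the stated bound.
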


\section{Discussions}\label{sec:discussion}
\subsection{Inductive biases in floating-point transformers}\label{sec:restriction}

In this section, we discuss inductive biases in floating-point transformers that are imposed by the definitions of the attention layers and fully-connected networks.
We first show that floating-point transformers must be $\pi_{(1,2)}^n$-equivariant.
The proof of \cref{thm:equiv} is presented in \cref{sec:pfthm:equiv}.
\begin{theorem}\label{thm:equiv}
Let $\din,\dout,n\in\bbN$ such that $n\ge2$. Then, for any floating-point transformer $f:\fpq^{\din\times n}\to\fpq^{\dout\times n}$, $f$ is $\pi_{(1,2)}^n$-equivariant.
\end{theorem}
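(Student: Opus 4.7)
The plan is to prove $\pi_{(1,2)}^n$-equivariance by walking through each constituent operation of the transformer and verifying it is preserved. The central observation I rely on is that, even though floating-point addition is \emph{non-associative}, it is \emph{commutative}: $a\oplus b = b\oplus a$ for all $a,b\in\efpq$. As a consequence, for any left-associative sum $\bigoplus_{i=1}^n x_i = ((x_1\oplus x_2)\oplus x_3)\oplus\cdots\oplus x_n$, swapping \emph{only} the first two summands leaves the result unchanged, since $(x_2\oplus x_1)\oplus x_3\oplus\cdots = (x_1\oplus x_2)\oplus x_3\oplus\cdots$. This is exactly the special property of $\pi_{(1,2)}^n$ that generic permutations (e.g.\ swapping positions $1$ and $3$) do not enjoy, and it is the reason the theorem is restricted to this single transposition.

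First I would verify that the embedding $W_\text{in}\otimes X\oplus(\bfb_\text{in}\onev_n^\top)$ and the output layer are actually $\mcS_n$-equivariant: each matrix multiplication $W\otimes X$ contracts along the hidden dimension, so column-permuting $X$ simply column-permutes the product, and broadcasting a bias across the $n$ columns is permutation-invariant. The same reasoning shows that $\text{FF}_\theta^{r,d,n}$ is $\mcS_n$-equivariant, since every internal operation is either a hidden-dimension matrix multiplication, the entrywise $\round{\relu}$, or a broadcast bias addition. Thus the only place the argument can break is the attention layer.

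For the attention layer I would trace $\pi X$ through \cref{eq:attention} with $\pi=\pi_{(1,2)}^n$ and column-permutation matrix $\Pi$. By the embedding argument, $K_i(\pi X)=\pi K_i(X)$, and similarly for $Q_i$ and $V_i$. Then $(\pi K_i)^\top\otimes(\pi Q_i)=\Pi^\top(K_i^\top\otimes Q_i)\Pi$, because the sum defining each entry runs over the hidden dimension $m$ and is untouched by the column permutation. Next I would show that the column-wise softmax satisfies $\sigma(\pi_{(1,2)}^n\bfx)=\pi_{(1,2)}^n\sigma(\bfx)$: in \cref{eq:floatsoftmaxdef} the maximum $x_*$ is order-independent, the exponentials are applied entrywise, and the denominator $\bigoplus_{j=1}^n\round{\exp}(x_j-x_*)$ is a left-associative sum whose first two summands swap under $\pi_{(1,2)}^n$, hence is preserved by commutativity of $\oplus$. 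Column-wise, this gives $\sigma(\Pi^\top S\Pi)=\Pi^\top\sigma(S)\Pi$ for $S=K_i^\top\otimes Q_i$.

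The last nontrivial step is $V_i\otimes\sigma(K_i^\top\otimes Q_i)$. The $(a,c)$-entry of $(\pi V_i)\otimes(\Pi^\top\sigma(S)\Pi)$ equals the left-associative sum $\bigoplus_{b=1}^n(V_i)_{a,\pi(b)}\otimes\sigma(S)_{\pi(b),\pi(c)}$; since $\pi=\pi_{(1,2)}^n$ only swaps the indices $b=1$ and $b=2$, the very first two summands are interchanged and the whole sum is preserved, yielding $(V_i\otimes\sigma(S))_{a,\pi(c)}$. Thus each head output is $\pi_{(1,2)}^n$-equivariant; the outer $W_i^O\otimes(\cdot)$ is a hidden-dimension contraction, and summing over the $h$ heads is an entrywise $\bigoplus$ that commutes with column permutation. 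Composing with the residual connection and with the $\mcS_n$-equivariant FF and embedding/de-embedding layers yields $f(\pi_{(1,2)}^n X)=\pi_{(1,2)}^n f(X)$. The main obstacle is precisely the softmax denominator and the $V\otimes\sigma(\cdot)$ contraction along the sequence dimension: only because $\pi_{(1,2)}^n$ acts on the two leftmost summands of a left-associative sum does commutativity of $\oplus$ suffice — this is also why \cref{thm:diagonal-approx} is tight and an analogous statement for a general $\pi\in\mcS_n$ genuinely fails.
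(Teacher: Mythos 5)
Your proof is correct and follows essentially the same route as the paper: reduce to the attention layer (embedding, output, and feed-forward maps are column-wise hence $\mcS_n$-equivariant), and observe that the only contraction along the sequence dimension — the softmax denominator $\bigoplus_j$ and the sum in $V_i\otimes\sigma(\cdot)$ — has $\pi_{(1,2)}^n$ swap only its two leftmost summands, which commutativity of $\oplus$ preserves even in the absence of associativity. You fill in a small step the paper leaves implicit (the column-wise verification that $\sigma(\Pi^\top S\Pi)=\Pi^\top\sigma(S)\Pi$), but the decomposition and the key observation are the same.
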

\cref{thm:equiv} follows from the property of floating-point addition: for $x_1,\dots,x_n\in\fpq$, it holds that
\begin{align*}
\bigoplus_{i=1}^n x_i\!=\!(\cdots\!((x_1\oplus x_2)\oplus x_3)\oplus\cdots\oplus x_n)\!=\!\bigoplus_{i=1}^n x_{\pi_{(1,2)}^n\!(i)}.
\end{align*}
Based on this, one can easily observe that floating-point attention layers are $\pi_{(1,2)}^n$-equivariant, and hence, floating-point transformers are $\pi_{(1,2)}^n$-equivariant.

We also note that $\pi_{(1,2)}^n$-equivariance is the minimal equivariance structure in floating-point transformers. Namely, for any non-trivial permutation $\pi\in\mcS_n$ with $\pi\ne\pi_{(1,2)}^n$, there exists a floating-point transformer that is not $\pi$-equivariant.
Here, the existence of such a floating-point transformer is guaranteed by \cref{thm:diagonal-approx}.

We next show that for floating-point transformers, if some input coordinates are identical, then the corresponding output coordinates should also be the same. We note that such a property also exists in real transformers.
The proof of \cref{thm:eq-preserve} is in \cref{sec:pfthm:eq-preserve}
\begin{definition}
Let $\din,\dout,n\in\bbN$, $\mcX,\mcY$ be set, and $f:\mcX^{\din\times n}\to\mcY^{\dout\times n}$.
We say ``$f$ preserves equality'' if for any $X=[\bfx_1,\dots,\bfx_n]\in\mcX^{\din\times n}$ and for $f(X)=[\bfy_1,\dots,\bfy_n]$, $\bfy_i=\bfy_j$ whenever $\bfx_i=\bfx_j$.
\end{definition}
\begin{theorem}\label{thm:eq-preserve}
Floating-point transformers preserve equality.
\end{theorem}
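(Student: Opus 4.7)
The plan is to prove the theorem by structural induction on a floating-point transformer. Since equality-preservation is closed under composition, it suffices to verify it separately for each constituent primitive appearing in \cref{eq:transformer}: the input/output affine projections $X \mapsto W \otimes X \oplus \bfb\onev_n^\top$, the feed-forward networks $\text{FF}^{r,d,n}_\theta$, and the attention layers $\text{AT}^{h,m,d,n}_\phi$.

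The key structural fact I will invoke repeatedly is \emph{column-locality} of left multiplication: by definition $(W \otimes X)_{rl} = \bigoplus_{p} W_{rp} \otimes X_{pl}$, so the $l$-th column of $W \otimes X$ is a deterministic function of only the $l$-th column $\bfx_l$ of $X$. Combined with the facts that the broadcast bias $\bfb \onev_n^\top$ uses the same $\bfb$ in every column and that $\rho = \round{\relu}$ is coordinate-wise, column-locality immediately settles the affine projections and the feed-forward networks: each output column is a function of only the corresponding input column, so equal input columns force equal output columns.

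For the attention layer, fix indices $a,b$ with $\bfx_a = \bfx_b$. Column-locality of $W_i^K \otimes X$, $W_i^Q \otimes X$, and $W_i^V \otimes X$ implies that the $a$-th and $b$-th columns of $K_i, Q_i, V_i$ coincide for each head $i$. The score matrix $K_i^\top \otimes Q_i$ is again column-local in its right factor $Q_i$ (its $l$-th column involves all of $K_i^\top$ but only the $l$-th column of $Q_i$), so columns $a$ and $b$ of $K_i^\top \otimes Q_i$ agree. The softmax $\sigma$ is applied column-wise (see \cref{eq:floatsoftmaxdef} and \cref{eq:override}), so columns $a$ and $b$ of $\sigma(K_i^\top \otimes Q_i)$ also agree. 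Two further left multiplications by $V_i$ and $W_i^O$ are column-local, the head sum $\bigoplus_{i=1}^h$ is coordinate-wise on matrices, and the residual $X \oplus (\cdot)$ is coordinate-wise, so the $a$-th and $b$-th columns of the output remain equal throughout, establishing that $\text{AT}^{h,m,d,n}_\phi$ preserves equality.

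The only point deserving care is ensuring that the non-associativity and rounding peculiarities of floating-point arithmetic do not accidentally couple distinct output columns. This is not a genuine obstacle: every primitive in the transformer architecture either factors through a left matrix product (whose $l$-th output column depends only on the $l$-th input column) or acts strictly coordinate- or column-wise, and these structural identities hold verbatim under floating-point arithmetic. In particular, the order in which summations are performed \emph{inside} a single column is irrelevant to the argument, because the same ordering is applied to the $a$-th and $b$-th columns whenever their inputs coincide.
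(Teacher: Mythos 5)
Your proof is correct and takes essentially the same approach as the paper's: both reduce the claim to the observations that feed-forward layers and residual connections act token/coordinate-wise, the softmax is column-wise, and left multiplication $X\mapsto W\otimes X$ (even with a possibly input-dependent left factor such as $V_i$) is column-local, so equal input columns are forced to remain equal after every primitive. Your write-up is slightly more explicit than the paper's in threading column-locality through $K_i^\top\otimes Q_i$ and the subsequent multiplications by $V_i$ and $W_i^O$, but the underlying idea is identical.
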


\subsection{Effects of positional encoding}

In this section, we discuss the effects of positional encoding on the expressive power of floating-point transformers. 
Given the sequence length $n$, let $\bfp_1,\dots,\bfp_n\in\fpq^{\din}$ be the positional encoding vectors. 
A typical practice to encode the position in the input $X$ is to add the position vectors to corresponding position as $X+[\bfp_1,\dots,\bfp_n]$ \cite{vaswani2017attention}.
However, for any non-trivial choice of $\bfp_1,\dots,\bfp_n\in\fpq^{\din}$, some information in $X$ will be lost during the encoding due to the following lemma. We present the proof of \cref{lem:pos-enc} in \cref{sec:pflem:pos-enc}.
\begin{lemma}\label{lem:pos-enc}
For any $z\in\efpq\setminus\{0\}$, $x\mapsto x\oplus z$ is not injective on $\fpq$.
\end{lemma}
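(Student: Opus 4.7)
The plan is, for each $z \in \efpq \setminus \{0\}$, to exhibit two distinct floats $x_1, x_2 \in \fpq$ with $x_1 \oplus z = x_2 \oplus z$, by case analysis on $z$.

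First, I would dispatch the degenerate cases $z \in \{\nan, +\infty, -\infty\}$: for any finite $x \in \fpq$, $x \oplus z$ equals $\nan$, $+\infty$, or $-\infty$ respectively under the IEEE conventions defined in \cref{sec:float}, so the map is constant on $\fpq$ and hence not injective (since $|\fpq| \ge 2$). For $z \in \fpq \setminus \{0\}$, I would reduce to $z > 0$ by the sign symmetry $(-x) \oplus (-z) = -(x \oplus z)$, which follows from ties-to-even being invariant under negation.

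For $z > 0$ the argument splits on the size of $z$. If $z \ge 2^{\emin + 2}$, take $x_1 = 0$ and $x_2 = \fmin$: obviously $0 \oplus z = z$, and writing $z = (1.m_1 \cdots m_\mbit) \times 2^{e_z}$ with $e_z \ge \emin + 2$, the upper gap at $z$ equals $2^{e_z - \mbit}$, so $\fmin = 2^{\emin - \mbit} < 2^{e_z - \mbit - 1}$ places $\fmin + z$ strictly below the upper midpoint of $z$'s rounding basin and forces $\fmin \oplus z = z$. If instead $\fmin \le z < 2^{\emin + 2}$, the local gap at $z$ is itself of order $\fmin$ and no nonzero float fits inside $z$'s basin; I would instead aim the rounding at a power-of-two target. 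Choose $k$ with $z \in [2^{k - \mbit - 2}, 2^{k - \mbit - 1}]$ (such $k$ exists in $[\emin + 1, \emin + \mbit + 3]$ because the dyadic windows tile $[\fmin/2, 2^{\emin+2}]$, and $k \le \emax$ is guaranteed by the standing condition $\mbit \le 2^{\ebit - 1} - 3$), and take $x_1 = 2^k - 2^{k - \mbit - 1}$ and $x_2 = 2^k$. A direct calculation shows $x_2 + z$ sits at or below the upper midpoint of $2^k$'s basin while $x_1 + z$ sits at or above its lower midpoint, so both round to $2^k$; any boundary tie resolves to $2^k$ by ties-to-even since $2^k$ has last mantissa bit $0$ while its two neighbors $2^k \pm 2^{k-\mbit}$ and $2^k - 2^{k-\mbit-1}$ have last bit $1$.

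The main obstacle is the small-$z$ regime. When $|z|$ is comparable to $\fmin$ the precision at $z$ is no better than at $\fmin$ itself, so a single $\oplus$ cannot absorb any nonzero perturbation into $z$, and the naive pair $(0, \fmin)$ fails. The fix is to relocate the rounding target to a power of two $2^k$, where the gap doubles across the boundary: the lower-side gap $2^{k-\mbit-1}$ is exactly half the upper-side gap $2^{k-\mbit}$, which is precisely what lets the two distinct consecutive floats $2^k - 2^{k-\mbit-1}$ and $2^k$ collapse to the same rounded sum, with ties-to-even cleanly handling the endpoint cases.
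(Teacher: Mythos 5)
Your proof is correct but takes a genuinely different route from the paper's. The paper's argument is non-constructive: it observes that $0 \oplus z = z > 0$ while $y \oplus z = y$ once $y$ reaches (at the latest) $2^{\emin+\mbit+2}$, and then applies a pigeonhole argument --- the monotone map $y \mapsto y \oplus z$ carries the floats of $[0, x]$ into the strictly smaller set of floats of $[z, x]$, so a collision must occur somewhere, though the proof never exhibits one. Your argument instead produces explicit collision pairs: $(0, \omega)$ when $z \geq 2^{\emin+2}$, and in the small-$z$ regime you aim both sums at a power of two $2^k$ with $z \in [2^{k-\mbit-2}, 2^{k-\mbit-1}]$, exploiting the doubling of the rounding step across $2^k$ (lower basin radius $2^{k-\mbit-2}$, upper radius $2^{k-\mbit-1}$) so that the consecutive floats $2^k - 2^{k-\mbit-1}$ and $2^k$ both round to $2^k$, with ties-to-even settling both boundary cases in your favor because $2^k$ has even last mantissa bit. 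The paper's route is shorter and uniform in $z$; yours is longer but hands the reader explicit witnesses. Incidentally, your large-$z$ cutoff $z \geq 2^{\emin+2}$ is the right one: the paper's stated threshold $z \geq 2^{\emin+1}$ leaves a boundary case (exponent exactly $\emin+1$ and odd last mantissa bit) in which $\omega + z$ lands precisely on the upper midpoint of $z$'s basin and ties-to-even rounds it up to $z^+$, so the paper's pair $(0, \omega)$ fails there; those $z$ would still be handled by the pigeonhole step, but as written the paper only invokes it for $z < 2^{\emin+1}$.
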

By \cref{lem:pos-enc}, one can observe that $X+[\bfp_1,\dots,\bfp_n]$ is not injective unless $\bfp_i=(0,\dots,0)$ for all $i\in[n]$. In other words, all floating-point transformers cannot distinguish $[\bfx_1,\dots,\bfx_n]$ and $[\bfy_1,\dots,\bfy_n]$ if $\bfx_i+\bfp_i=\bfy_i+\bfp_i$ for all $i\in[n]$.
This is in contrast to the real transformer case, where proper positional encoding strictly improves the expressive power of real transformers \cite{Yun2020Are}.

We can also encode the position by introducing an additional dimension to an input $X$, e.g., consider the positional encoding vector $\bfp=(1,\dots,n)\in\bbF^{n\times 1}$ and a new input $[X^\top,\bfp]^\top=[\bfx'_1,\dots,\bfx'_n]$.
In this case, $\bfx'_1,\dots,\bfx'_n$ are distinct, and hence, floating-point transformers can represent all functions from $\fpq^{\din\times n}\to\fpq^{\dout\times n}$ by \cref{thm:diagonal-approx}.

\section{Proofs}\label{sec:proof}
In this section, we present the proofs of \cref{thm:diagonal-approx,thm:counterexample,thm:perm-equiv-approx}.
For better readability, we prove \cref{thm:diagonal-approx,thm:perm-equiv-approx} first in \cref{sec:pfthm:diagonal-approx,sec:pfthm:perm-equiv-approx}, and prove \cref{thm:counterexample} later in \cref{sec:pfthm:counterexample}.

\subsection{Proof of \cref{thm:diagonal-approx}}\label{sec:pfthm:diagonal-approx}
In this proof, we explicitly construct a floating-point transformer $f$ that represents a $\pi_{(1,2)}^n$-equivariant target function $f^*:\Delta_n\to\fpq^{\dout\times n}$.
In particular, our floating-point transformer $f$ can be represented as
\begin{align}
f(X)=W_\text{out}\otimes (\psi\circ \phi(W_\text{in}\otimes X))\label{eq:diagonal-approx1}
\end{align}
for $X=[\bfx_1,\dots,\bfx_n]\in\fpq^{\din\times n}$ and
for some transformer blocks $\phi,\psi$. 
To this end, for some $d>\din,\dout$, we choose $W_\text{in}\in\fpq^{d\times\din}$ such that $(W_\text{in})_{ii}=1$ for all $i\in[\din]$ and all other entries are zero. %
Namely, $W_\text{in}\otimes X$ preserves the input information in the first $\din$ dimensions and pads zeros for the remaining $d-\din$ dimensions.
We will choose the explicit value of $d$ at the end of this proof.
Here, the $i$-th column of the output matrix of $\phi$ will hold the corresponding input $\bfx_i$ and the information that the input is either $X$ or $\pi_{(1,2)}^nX$.
As stated in the following lemma, this information in each column is sufficient to generate the $i$-th column of the target output since the target function $f^*$ is $\pi_{(1,2)}^n$ equivariant. The proof of \cref{lem:diagonal-factorize} is in \cref{sec:pflem:diagonal-factorize}.
\begin{lemma}\label{lem:diagonal-factorize}
Let $f^*:\Delta_n\to\fpq^{\dout\times n}$ be a $\smash{\pi_{(1,2)}^n}$ equivariant function. Then, there exists $\tilde f:\fpq^{\din}\times\fpq^{\din\times n}\to\fpq^{\dout}$ such that for each $X=[\bfx_1,\dots,\bfx_n]\in\fpq^{\din\times n}$, 
\begin{align*}
f^*(X)&=[\tilde f(\bfx_1,X),\dots,\tilde f(\bfx_n,X)]\\
&=[\tilde f(\bfx_1,\pi_{(1,2)}^nX),\dots,\tilde f(\bfx_n,\pi_{(1,2)}^nX)].
\end{align*}
\end{lemma}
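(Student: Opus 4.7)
The plan is to construct $\tilde f$ by a pointwise lookup procedure. Since every $X \in \Delta_n$ has pairwise distinct columns, each vector $\bfx \in \fpq^{\din}$ appears at most once as a column of $X$, so I can unambiguously read off the index at which it appears and use that to select a column of $f^*(X)$. Concretely, I would define $\tilde f(\bfx, X) = f^*(X)_i$ when $X \in \Delta_n$ and $\bfx = \bfx_i$ for the (unique) such $i \in [n]$, and $\tilde f(\bfx, X) = \zerov_{\dout}$ otherwise. Values off the lookup set never get used in the conclusion, because both equalities only evaluate $\tilde f$ at pairs of the form $(\bfx_i, X)$ and $(\bfx_i, \pi_{(1,2)}^n X)$, whose second arguments still lie in $\Delta_n$.

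With this definition, the first equality is essentially by construction: the $i$-th column of $[\tilde f(\bfx_1, X), \dots, \tilde f(\bfx_n, X)]$ is $f^*(X)_i$, so the matrix recovers $f^*(X)$ itself.

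For the second equality, I would set $Y = \pi_{(1,2)}^n X$, whose columns are $\bfx_2, \bfx_1, \bfx_3, \dots, \bfx_n$; these are still pairwise distinct, so $Y \in \Delta_n$ and the lookup applies. The definition then gives $\tilde f(\bfx_1, Y) = f^*(Y)_2$, $\tilde f(\bfx_2, Y) = f^*(Y)_1$, and $\tilde f(\bfx_i, Y) = f^*(Y)_i$ for $i \ge 3$. Invoking $\pi_{(1,2)}^n$-equivariance of $f^*$ yields $f^*(Y) = \pi_{(1,2)}^n f^*(X)$, so $f^*(Y)_1 = f^*(X)_2$, $f^*(Y)_2 = f^*(X)_1$, and $f^*(Y)_i = f^*(X)_i$ for $i \ge 3$. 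Substituting back recovers $[\tilde f(\bfx_1, Y), \dots, \tilde f(\bfx_n, Y)] = f^*(X)$.

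There is no real obstacle here; the lemma is essentially a reformulation of $\pi_{(1,2)}^n$-equivariance that exploits column distinctness to let each column of $f^*(X)$ be retrieved by looking up the matching input column, regardless of whether the second argument is $X$ or its swap $\pi_{(1,2)}^n X$. The only mild subtlety is implicit in the statement: the conclusion involves $f^*(X)$, so the quantifier ``for each $X \in \fpq^{\din \times n}$'' should be read as ``for each $X \in \Delta_n$,'' which I would make explicit at the start of the proof.
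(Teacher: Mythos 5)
Your proof is correct and follows essentially the same lookup-by-column approach as the paper: define $\tilde f(\bfx_i, X) = f^*(X)_i$ (well-defined since columns of $X \in \Delta_n$ are distinct, extended arbitrarily off the lookup set), then verify the second equality by applying the definition to $Y = \pi_{(1,2)}^n X$ and using $\pi_{(1,2)}^n$-equivariance. Your remark that the quantifier in the statement should be read as ranging over $\Delta_n$ rather than all of $\fpq^{\din \times n}$ is a fair and worthwhile clarification, and the paper's own proof also implicitly restricts to $X \in \Delta_n$.
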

In $\psi$, we use the information in each column of the output of $\phi$ to produce the corresponding column of the target output, in the first $\dout$ dimensions of the output of $\psi$. We lastly choose $W_\text{out}\in\fpq^{\dout\times d}$ as $(W_\text{out})_{ii}=1$ for all $i\in[\dout]$ and all other entries are zero so that $W_\text{out}\otimes Z=Z_{:\dout}$.

We now describe our constructions of $\phi,\psi$ and illustrate the main idea behind them.
To describe the precise function of $\phi$, we introduce the following lemma. The proof of \cref{lem:three-max} is in \cref{sec:pflem:three-max}.
\begin{lemma}\label{lem:three-max}
Let $n\ge2$ and $f:\mcS_n\to[n]^{\binom{n}{3}}$ defined as
$$f(\pi)=\Big(\argmax_{i\in\{i_1,i_2,i_3\}}\pi(i)\Big)_{1\le i_1<i_2<i_3\le n}.$$
Then, for each $\bfx\in f(\mcS_n)$, $f^{-1}(\bfx)=\{\pi,\pi_{(1,2)}^n\circ\pi\}$ for some $\pi\in\mcS_n$. %
\end{lemma}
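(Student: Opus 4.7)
My approach is to extract from $f(\pi)$ a simple counting statistic that determines $\pi$ up to the desired ambiguity. For each $j \in [n]$, let $N_\pi(j)$ denote the number of triples $\{i_1 < i_2 < i_3\}$ on which $\pi$ attains its maximum at $j$; this is exactly the number of coordinates of $f(\pi)$ equal to $j$, so $N_\pi$ depends only on $f(\pi)$. A direct count shows that a triple contributes to $N_\pi(j)$ iff $j$ lies in it and the other two elements are chosen from the $\pi(j)-1$ indices of strictly smaller $\pi$-value, whence $N_\pi(j) = \binom{\pi(j)-1}{2}$.

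The map $k \mapsto \binom{k-1}{2}$ is strictly increasing for $k \ge 2$ and equal to $0$ for $k \in \{1,2\}$. So from $f(\pi)$ I can uniquely recover $\pi(j)$ whenever $\pi(j) \ge 3$ and identify the unordered pair $\pi^{-1}(\{1,2\})$, but I cannot distinguish $\pi^{-1}(1)$ from $\pi^{-1}(2)$. This establishes the inclusion $f^{-1}(f(\pi)) \subseteq \{\pi, \pi_{(1,2)}^n \circ \pi\}$: any $\sigma$ with $f(\sigma) = f(\pi)$ must agree with $\pi$ on all indices of $\pi$-value $\ge 3$ and coincide with $\pi$ on the set $\pi^{-1}(\{1,2\})$, differing from $\pi$ at most by swapping the labels $1$ and $2$, which is precisely left-composition by $\pi_{(1,2)}^n$.

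For the reverse inclusion I verify $f(\pi_{(1,2)}^n \circ \pi) = f(\pi)$ directly. Left-composition by $\pi_{(1,2)}^n$ only interchanges the values $1$ and $2$ at the indices $\pi^{-1}(1)$ and $\pi^{-1}(2)$. A short case analysis on how many of these two indices lie in a given triple $\{i_1, i_2, i_3\}$ shows the argmax is preserved: if neither or both are present, the multiset of $\pi$-values on the triple is unchanged; if exactly one is present, the other two entries of the triple carry $\pi$-values in $\{3, \ldots, n\}$, so the maximum is attained at one of them and the swap between the labels $1$ and $2$ has no effect on the argmax. The case $n = 2$ is degenerate since $\binom{n}{3} = 0$: then $f$ is constant and $\mcS_2 = \{\id, \pi_{(1,2)}^2\}$ is itself the unique fiber.

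The only delicate step is the "at most two" direction, which hinges entirely on the strict monotonicity of $\binom{k-1}{2}$ on $k \ge 2$ together with the collapse $\binom{0}{2} = \binom{1}{2} = 0$. Both facts are elementary, so I do not foresee a serious obstacle; the bulk of the write-up will be the counting identity for $N_\pi(j)$ and the three-case invariance argument under $\pi_{(1,2)}^n$.
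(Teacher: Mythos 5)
Your proof is correct, and it takes a genuinely different route from the paper's.

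You reduce $f(\pi)$ to the counting vector $N_\pi(j)$, the number of triples whose argmax is $j$, and observe the closed form $N_\pi(j)=\binom{\pi(j)-1}{2}$. Strict monotonicity of $k\mapsto\binom{k-1}{2}$ on $k\ge 2$ then lets you read $\pi$ off $f(\pi)$ on every index of $\pi$-value at least $3$, while the two indices with $N_\pi=0$ are pinned down only as an unordered pair, giving $f^{-1}(f(\pi))\subseteq\{\pi,\pi_{(1,2)}^n\circ\pi\}$ in one stroke. You then verify $f(\pi_{(1,2)}^n\circ\pi)=f(\pi)$ by the three-way case split on how many of $\pi^{-1}(1),\pi^{-1}(2)$ lie in a triple, and handle $n=2$ separately.

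The paper instead argues by contradiction: given $\phi\notin\{\pi,\pi_{(1,2)}^n\circ\pi\}$, it picks the largest value $i^*$ on which $\pi^{-1}$ and $\phi^{-1}$ disagree (showing $i^*\ge 3$), and constructs an explicit triple $\{i_1,i_2,i_3\}$ with $i_1=\pi^{-1}(i^*)$, $i_2=\phi^{-1}(i^*)$, and a suitable third index, on which $f(\pi)$ and $f(\phi)$ differ. The paper's argument is local and constructive, exhibiting a witnessing coordinate of $f$; yours is global, compressing $f(\pi)$ to a single statistic with a clean combinatorial formula. Your version is shorter to verify and makes the $1$-$2$ degeneracy transparent, since it is exactly the collapse $\binom{0}{2}=\binom{1}{2}=0$. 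One small remark: the inclusion $f^{-1}(f(\pi))\subseteq\{\sigma:N_\sigma=N_\pi\}$ uses only that $N_\sigma$ is a function of $f(\sigma)$, which you do state, so no gap there.
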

\cref{lem:three-max} states that for any permutation $\pi\in\mcS_n$, if we know $\max\{\pi(i_1),\pi(i_2),\pi(i_3)\}$ for all $1\le i_1<i_2<i_3\le n$, then we can recover the input $\pi$ up to its $\pi_{(1,2)}^n$-equivariant counterpart. This implies that for an input $X=[\bfx_1,\dots,\bfx_n]\in\Delta_n$ to a floating-point transformer, if we identify the set of input tokens $\{\bfx_1,\dots,\bfx_n\}$ and the element with the largest index (i.e., the position in the sequence) for all distinct $\bfx,\bfx',\bfx''\in\{\bfx_1,\dots,\bfx_n\}$, then we can assert that the input is either $X$ or $\pi_{(1,2)}^nX$, which is sufficient to generate the output token by \cref{lem:diagonal-factorize}.

Based on this observation, in the following lemma, we design an attention layer that collects the input information and saves it in each column. 
In \cref{lem:diagonal-attention}, we use $\pi_1,\pi_2,\pi_3\in\mcS_3$ to denote $\pi_i(x)=x+(i-1)~\text{mod}~3$ and $\{\bfz_{1,1},\bfz_{1,2},\bfz_{1,3}\},\dots,\{\bfz_{\gamma,1},\bfz_{\gamma,2},\bfz_{\gamma,3}\}$ to denote all triplets of distinct elements in $\fpq^{\din}$ where $\gamma=\binom{|\fpq|^{\din}}3$.
We present the proof of \cref{lem:diagonal-attention} in \cref{sec:pflem:diagonal-attention}.
\begin{lemma}\label{lem:diagonal-attention}
Let $h=1$, $\din,n\in\bbN$, and $e=\din+3\gamma$. There exists $t_d,t_m,t_r\in\bbN$ such that for any $d\ge t_d$, $m\ge t_m$, and $r\ge t_r$,
there exists $\phi\in\mcB^{h,m,r,d,n}$ 
satisfying the following properties.
For each $Z\in\fpq^{d\times n}$ with $Z_{:\din}=[\bfx_1,\dots,\bfx_n]\in\Delta_{\din}$ and $Z_{\din+1:}=\zerov_{d-\din,n}$,  $j\in[\gamma-1]\cup\{0\}$, and $k\in[3]$,\vspace{-0.1in}
\begin{itemize}[leftmargin=0.12in]
    \item $\phi(Z)_{:\din}=[\bfx_1,\dots,\bfx_n]$ and $\phi(Z)_{e+1:}=\zerov_{d-e,n}$,
    \item $\phi(Z)_{\din+3j+k}=\delta\times\onev_n^\top$ if there exists $i_1\!<i_2\!<i_3$ such that $\{\bfx_{i_1},\bfx_{i_2}\}=\{\bfz_{j,{\pi_k(1)}},\bfz_{j,{\pi_k(2)}}\}$ and $\bfx_{i_3}=\bfz_{j,{\pi_k(3)}}$,
    \item $\phi(Z)_{\din+3j+k}\in(\fpq\setminus\{\delta\})^{1\times n}$ otherwise, %
    \vspace{-0.1in}
\end{itemize}
where $\delta=3^{++}$ %
if $\mbit\ge3$ and $\delta=3^{+}$ %
if $\mbit=2$.
\end{lemma}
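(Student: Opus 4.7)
The plan is to build $\phi$ as the composition of two transformer blocks: the first tags each input token with a finite-precision description of which triplet elements it equals, and the second turns those tags into an order-sensitive, column-broadcast scalar that is rounded to $\delta$ precisely in the required cases.

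First, the FF sublayer of the opening block computes, in a block of fresh hidden coordinates, the exact floating-point indicator of the event ``$\bfx_i = \bfz_{j,l}$'' for every triplet index $j \in \{0,\dots,\gamma-1\}$ and role $l \in \{1,2,3\}$. Since $\fpq^{\din}$ is finite, this is a pointwise lookup that a sufficiently wide $\round{\relu}$ feed-forward network realizes exactly, and the residual connection leaves the first $\din$ coordinates untouched while preserving zeros on the tail beyond $e$. Next, in the attention sublayer of the second block, I allocate one head for each pair $(j,k)$ (or equivalently pack many such pairs into fewer heads via direct sums in the hidden dimension). Using the indicators from the previous step as features, the pre-softmax scores are arranged so that positions outside $\{\bfz_{j,1},\bfz_{j,2},\bfz_{j,3}\}$ receive a very negative float and hence vanishing post-softmax weight, while the at most three in-triplet positions receive equal weights. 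The query weight matrix is set so that every query vector is identical, forcing the head output to be a single row-vector broadcast across all $n$ columns, and the value weights tag each in-triplet position with a role-specific float $u_l$. The head output then equals the left-associative sum $(u_{\ell_1} \oplus u_{\ell_2}) \oplus u_{\ell_3}$, where $\ell_1,\ell_2,\ell_3$ are the roles attached to the sorted positions $i_1 < i_2 < i_3$; commutativity of $\oplus$ collapses the dependence on the first two terms, so the output depends only on $\ell_3$, which is precisely the latest-token role.

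Finally, the FF sublayer of the second block acts pointwise on coordinate $\din + 3j + k$, sending the sum that corresponds to $\ell_3 = \pi_k(3)$ to the target value $\delta$ and sending every other attainable scalar (including those produced when the triplet is incomplete or absent from the input) to some element of $\fpq \setminus \{\delta\}$; residual connections again preserve the first $\din$ coordinates and the zero tail beyond $e$. The main obstacle is the quantitative design of the $u_l$ together with the softmax scaling: one must pin down three floating-point values so that the three sums $(u_a \oplus u_b) \oplus u_c$ are pairwise distinct for the three choices of last summand $c$, and so that one of them can be routed by the final FF lookup to exactly $\delta$. The prescribed $\delta = 3^{++}$ when $\mbit \ge 3$ and $\delta = 3^+$ when $\mbit = 2$ strongly suggests the construction is anchored at the identity $1 \oplus 1 \oplus 1 = 3$ and exploits the one-ulp gap above $3$, whose width depends on $\mbit$, to separate the three orderings; verifying that this separation survives the softmax round-off and that no incomplete-triplet configuration accidentally reproduces the distinguished sum is the technically delicate bookkeeping.
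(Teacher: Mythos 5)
Your high-level plan (tag tokens with role indicators, aggregate through attention to produce an order-sensitive broadcast scalar, then post-process pointwise) is the same as the paper's. But two design choices in the middle block break, and the paper's actual construction circumvents both.

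\textbf{The softmax-selection idea does not fit the $h=1$ constraint.} You propose arranging pre-softmax scores so that, for a given triplet $(j,k)$, only the at-most-three in-triplet positions survive the softmax. That attention pattern is specific to each pair $(j,k)$, but a single head produces exactly one softmax matrix per column, shared across all output coordinates of that head. ``Packing into fewer heads via direct sums in the hidden dimension'' changes the value side, not the attention weights, so the $3\gamma$ distinct selection patterns you need cannot coexist inside a single head. The paper sidesteps this entirely: it sets $W^K = W^Q = 0$, so every score is $0$ and every position gets the uniform weight $\alpha = 1\oslash\bigl(\bigoplus_{i=1}^n 1\bigr)$. Positions outside the triplet are instead masked by having their \emph{value} entry equal $0$, and since $0\otimes\alpha = 0$ and $0\oplus x = x$, they drop out of the $\bigoplus$ without any per-triplet attention pattern. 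This makes $h=1$ and a wide value matrix $m = 3\gamma$ suffice.

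\textbf{The attention weights must be count-independent, and the anchor $1\oplus 1 \oplus 1 = 3$ is associative.} Even if you could select the triplet positions, the resulting softmax weight would be $1/3$, $1/2$, or $1$ depending on how many of $\bfz_{j,1},\bfz_{j,2},\bfz_{j,3}$ actually appear in the input, so the products $u_\ell\otimes(\text{weight})$ would not be fixed floats and your case analysis falls apart. The paper's uniform $\alpha$ is the same constant for every input, which is why it can pick $\beta,\beta'$ with $\beta\otimes\alpha = 1^+$ and $\beta'\otimes\alpha = 1^{++}$ (\cref{lem:one-plus}) and rely on the non-associativity from \cref{lem:oneppp}: $(1^{++}\oplus 1^{++})\oplus 1^{+} = 3^{++}$ but $(1^{+}\oplus 1^{++})\oplus 1^{++} = (1^{++}\oplus 1^{+})\oplus 1^{++} = 3^{+++}$. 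By contrast, $1\oplus 1\oplus 1 = 3$ is exact and permutation-invariant, so it cannot separate orderings; you correctly flag the need for three order-distinguishing sums but the anchor you suggest would not provide them. The $\beta,\beta'$ calibration against the actual softmax constant $\alpha$, together with the zero-score/zero-value masking, is the missing technical core.
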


For $d\ge t_d$, $m\ge t_m$, and $r\ge t_r$, let $\phi_{d,m,r}\in\mcB^{1,m,r,d,n}$ be the transformer block in \cref{lem:diagonal-attention}. Then, the $i$-th column of the output of $\phi_{d,m,r}$ contains $\bfx_i$ in the first $\din$ coordinates. For the next $e-\din$ coordinates, it contains the information about $\{\bfx_1,\dots,\bfx_n\}$ (consider the union of all $\{\bfz_{j,1},\bfz_{j,2},\bfz_{j,3}\}$ where $\phi_{d,m,r}(W_\text{in}\otimes X)_{d_\text{in}+j+k}=\delta\times\onev_n^\top$ for some $k\in[3]$).
Furthermore, it also contains the information about the element with the largest index for all distinct $\bfx,\bfx',\bfx''\in\{\bfx_1,\dots,\bfx_n\}$. %
To prove \cref{lem:diagonal-attention}, we use the following non-associativity of floating-point addition: e.g., for $\mbit\ge3$,
$(1^+\oplus1^{++})\oplus1^{++}=(1^{++}\oplus1^{+})\oplus1^{++}=3^{+++}$ but $(1^{++}\oplus1^{++})\oplus1^{+}=3^{++}.$ 
Specifically, we identify the last element of three distinct input tokens in an attention layer by using this property. See \cref{sec:pflem:diagonal-attention} for the full construction.

By \cref{lem:diagonal-factorize,lem:three-max}, one can observe that there exists a column-wise function $\tilde \psi:\fpq^{e}\to\fpq^{\dout}$ such that 
$\tilde \psi\circ \phi_{d,m,r}(W_\text{in}\otimes X)_{:e}=f^*(X)$.
Here, we note that $\phi_{d,m,r}(W_\text{in}\otimes X)_{:e}$ is independent of $d,m,r$.
We now generate the target output using the following theorem. We present the proof of \cref{lem:token-univ-approx} in \cref{sec:pflem:token-univ-approx}, which is based on the construction in \cite{hwang2025floating}.
\begin{lemma}\label{lem:token-univ-approx}
Let $d_1,d_2,m,h,n\in\bbN$. There exist $t_d',t_r'\in\bbN$ such that for any $d\ge t_d'$, $r\ge t_r'$, and $\psi^*:\fpq^{d_1}\to\fpq^{d_2}$, there exists 
$\psi\in\mcB^{h,m,r,d,n}$ satisfying the following property. 
For each $Z\in\fpq^{d\times n}$ with $Z_{:d_1}=[\bfx_1,\dots,\bfx_n]$ and $Z_{d_1+1:}=\zerov_{d-d_1,n}$, 
$\psi(Z)_{:d_2}=[\psi^*(\bfx_1),\dots,\psi^*(\bfx_n)]$ and $\psi(Z)_{d_2+1:}=\zerov_{d-d_2,n}$.
\end{lemma}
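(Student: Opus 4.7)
The plan is to neutralize the attention layer and let the feed-forward network carry out the entire column-wise computation. Specifically, I would set $W_i^K = W_i^Q = W_i^V = W_i^O = \zerov$ for every head $i \in [h]$; then $K_i, Q_i, V_i$ are all zero matrices, so $K_i^\top \otimes Q_i$ is the zero matrix, its softmax (column-wise) is the uniform distribution $\tfrac{1}{n}\onev_n \onev_n^\top$ (notice $\round{\exp}(0) = 1$, so there are no inf/NaN issues on a finite input $Z$), and $V_i \otimes \sigma(K_i^\top \otimes Q_i) = \zerov$. Hence $W_i^O \otimes (\cdots) = \zerov$ for every head, and by the residual connection the attention layer acts as the identity on any $Z \in \fpq^{d\times n}$.

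With attention trivialized, the transformer block reduces to the column-wise map $\mathbf{x}' \mapsto \mathbf{x}' \oplus (W_2 \otimes \rho(W_1 \otimes \mathbf{x}' \oplus \bfb_1) \oplus \bfb_2)$ on each $d$-dimensional column $\mathbf{x}'$. Define a target column-wise function $g:\fpq^{d}\to\fpq^{d}$ by $g(\mathbf{x}')_{:d_2} = \psi^*(\mathbf{x}'_{:d_1})$ and $g(\mathbf{x}')_{d_2+1:} = \zerov$. On the structured inputs of the statement (i.e., columns whose last $d-d_1$ coordinates are zero), we need the block to output $g(\mathbf{x}')$; because of the residual, the inner two-layer ReLU computation $\mathbf{x}' \mapsto W_2 \otimes \rho(W_1 \otimes \mathbf{x}' \oplus \bfb_1) \oplus \bfb_2$ must exactly equal $g(\mathbf{x}') \ominus \mathbf{x}'$, which is itself some floating-point function $\fpq^{d_1} \to \fpq^{d}$ once restricted to the padded input set.

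The key import from prior work is the universal representability result of \cite{hwang2025floating,park2024expressive}, which states that any function from $\fpq^{d_1}$ into $\fpq^{d}$ can be exactly represented by a sufficiently wide single-hidden-layer floating-point $\round{\relu}$ network. I would invoke this to produce $W_1, W_2, \bfb_1, \bfb_2$ realizing the desired difference $g - \text{id}$ on the padded input domain. I take $t_r'$ to be an upper bound on the hidden width required by that universal construction (which may depend on $|\fpq|^{d_1}$, $d_2$, and $\psi^*$, but the statement quantifies $t_r'$ after fixing $d_1, d_2$ so this is allowed), and $t_d'$ to be $\max(d_1, d_2)$ so both the input block of $\bfx$'s and the output block of $\psi^*(\bfx)$'s fit inside the $d$-dimensional hidden state.

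The main obstacle I expect is interface-matching rather than any new technical idea: the cited representability statements must be reconciled with the exact form of our feed-forward block (one hidden layer, $\round{\relu}$ nonlinearity, residual pass-through), and one must verify that coordinates $d_2{+}1{:}d$ of the output are \emph{exactly} zero (no spurious nonzero floats arising from the residual or from rounding in the inner block) on the structured inputs. These are bookkeeping details handled by defining $g$ as above and letting the universal construction absorb the subtraction of $\mathbf{x}'$ into its definition.
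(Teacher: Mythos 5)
Your trivialization of the attention layer (zero weights, so the residual passes $Z$ through unchanged) is exactly what the paper does. But the rest of your plan has two genuine gaps, both tied to the specifics of floating-point arithmetic.

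First, you invoke a universal representability statement for a \emph{single-hidden-layer} $\round{\relu}$ network. That is not what \citet{park2024expressive,hwang2025floating} establish, and it is not what the paper's own \cref{lem:fnn-univ-approx} produces: that lemma gives a network $\phi_6\circ\rho\circ\phi_5\circ\rho\circ\phi_4\circ\rho\circ\phi_3\circ\rho\circ\phi_2\circ\rho\circ\phi_1$ with \emph{five} hidden layers. The depth is load-bearing: the construction first maps the input to a one-hot indicator vector (via \cref{lem:distinguish} and \cref{lem:indc}, each contributing two hidden layers), precisely so that the final affine layer $W\otimes\bfz$ sums a single nonzero term and incurs no rounding. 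A one-hidden-layer construction would have to sum many nonzero contributions in the outer affine layer under floating-point $\bigoplus$, and you give no argument that the accumulated round-off lands on the exact target float. Because each transformer FF block has only one hidden layer, a five-hidden-layer inner network cannot be realized by a single block; the paper instead spreads it across eight blocks $\psi_1,\dots,\psi_8$.

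Second, and independently, your plan to make the residual ``absorb'' the subtraction by setting the inner map to $g(\mathbf{x}')\ominus\mathbf{x}'$ does not work, because floating-point addition is not invertible: $\mathbf{x}'\oplus(g(\mathbf{x}')\ominus\mathbf{x}')\ne g(\mathbf{x}')$ in general (this is exactly the content of \cref{lem:pos-enc}). For the same reason $z\mapsto x\oplus z$ is not even surjective onto $\fpq$ for fixed $x$, so there may be \emph{no} float $h_i$ with $x_i\oplus h_i=\psi^*(\bfx)_i$ on coordinates $i\le\min(d_1,d_2)$ where the input and target overlap. The paper sidesteps this entirely by choosing $t_d$ large enough that each $\psi_i$ writes its result only into coordinates that currently hold exact zeros (so $0\oplus y=y$ is exact) and reads from a disjoint block, and it erases stale coordinates with the identity $(-1)\otimes\rho(x)\oplus1\otimes\rho(-x)=-x$ before shifting the final output into place. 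Your choice $t_d'=\max(d_1,d_2)$ leaves no room for such disjoint scratch space. To repair the proof you would need both the multi-block decomposition of the deep network and the fresh-zero-coordinate bookkeeping; as written, the proposal is not correct.
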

By \cref{lem:token-univ-approx}, for any large enough $d,r$ and any $m\in\bbN$, there exists a transformer block $\psi_{d,m,r}\in\mcB^{1,m,r,d,n}$ such that $\psi_{d,m,r}\circ \phi_{d,m,r}(W_\text{in}\otimes X)_{:\dout}=f^*(X)$.
Choose $d\ge\max\{t_d,t_d'\}$, $m\ge t_m$, and $r\ge\max\{t_r,t_r'\}$ where $t_d,t_m,t_r$ are from \cref{lem:diagonal-attention} and $t_d',t_r'$ are from \cref{lem:token-univ-approx}, and choose $\phi=\phi_{d,m,r}$ and $\psi=\psi_{d,m,r}$.
As we mentioned at the beginning of this proof, we construct $f$ as in \cref{eq:diagonal-approx1}. This completes the proof.

We lastly note that our floating-point transformer constructions in this proof use $h=1$ and $m,r,d=O(\mathtt{poly}(|\fpq|,n))$; here, we treat $\din,\dout$ as constants. 
However, $m,r,d$ can be reduced to $m=r=1$ and $d=\din+\dout+c$ for some constant $c\in\bbN$ by applying techniques that transform shallow and wide networks into deep and narrow ones. See \cref{sec:deep-narrow} for more details.

\subsection{Proof of \cref{thm:perm-equiv-approx}}\label{sec:pfthm:perm-equiv-approx}
In this proof, we construct a floating-point transformer $f$ that represents a permutation equivariant function $f^*:\fpq^{\din\times n}\to\fpq^{\dout\times n}$ when $n\le6\times2^\mbit-2$.
Our construction can be represented as 
\begin{align}
f(X)=W_\text{out}\otimes (\psi\circ \phi(W_\text{in}\otimes X))\label{eq:pfthm:permutation-approx1}
\end{align}
for some transformer blocks $\phi,\psi$. 
As in the proof of \cref{thm:diagonal-approx}, we also choose $W_{\text{in}}\in\fpq^{d\times\din}$ and $W_{\text{out}}\in\fpq^{\dout\times d}$ so that $W_{\text{in}}\otimes(x_1,\dots,x_{\din})=(x_1,\dots,x_{\din},0,\dots,0)$ and $W_{\text{out}}\otimes(x_1,\dots,x_{d})=(x_1,\dots,x_{\dout})$ where $d$ will be chosen at the end of this proof.
To describe the functions of $\phi,\psi$, we introduce the following lemma. 
The proof of \cref{lem:permutation-factorize} is in \cref{sec:pflem:permutation-factorize}.
\begin{lemma}\label{lem:permutation-factorize}
Let
$f^*:\fpq^{\din\times n}\to\fpq^{\dout\times n}$ be a permutation equivariant function. Then, there exists $\tilde f:\fpq^{\din}\times\bbN^{|\fpq|^{\din}}\to\fpq^{\dout}$ such that for each $X=[\bfx_1,\dots,\bfx_n]\in\fpq^{\din\times n}$, 
\begin{align*}
f^*(X)=[\tilde f(\bfx_1,\bfc_X),\dots,\tilde f(\bfx_n,\bfc_X)]
\end{align*}
where \smash{$\bfc_X\in(\bbN\cup\{0\})^{|\fpq|^{\din}}$} is a vector whose $i$-th coordinate is the number of columns of $X$ that are equal to $\bfz_i$.
\end{lemma}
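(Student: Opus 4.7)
The plan is to define $\tilde f$ directly from $f^*$ by picking a representative and then use permutation equivariance to verify well-definedness. First, observe that $\bfc_X = \bfc_Y$ if and only if $X$ and $Y$ have the same multiset of columns, which in turn holds if and only if $Y = \pi X$ for some $\pi \in \mcS_n$. So the count vector $\bfc_X$ is exactly the invariant that classifies $X$ up to the $\mcS_n$-action.

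Next, I would define $\tilde f$ as follows. For any pair $(\bfz, \bfc) \in \fpq^{\din} \times \bbN^{|\fpq|^{\din}}$ that is \emph{realizable}, meaning there exists $X = [\bfx_1, \dots, \bfx_n] \in \fpq^{\din\times n}$ with $\bfc_X = \bfc$ and an index $i$ with $\bfx_i = \bfz$, pick any such $X$ and any such $i$ and set $\tilde f(\bfz, \bfc) \defeq f^*(X)_i$ (the $i$-th column of $f^*(X)$). For non-realizable pairs, set $\tilde f$ arbitrarily.

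The key step is to verify well-definedness: if $X, Y$ both realize $(\bfz, \bfc)$ at indices $i, j$ respectively, then $f^*(X)_i = f^*(Y)_j$. Since $\bfc_X = \bfc_Y$, there exists $\pi \in \mcS_n$ with $Y = \pi X$, i.e., $\bfy_l = \bfx_{\pi(l)}$ for all $l$. Let $k = \pi(j)$, so $\bfx_k = \bfy_j = \bfz = \bfx_i$. Let $\tau$ be the transposition swapping $i$ and $k$. Because $\tau$ only moves indices whose column in $X$ equals $\bfz$, we still have $\bfx_{\tau\pi(l)} = \bfx_{\pi(l)} = \bfy_l$, so $Y = (\tau\pi) X$ as well. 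Applying permutation equivariance gives $f^*(Y) = (\tau\pi)\, f^*(X)$, hence $f^*(Y)_j = f^*(X)_{\tau\pi(j)} = f^*(X)_{\tau(k)} = f^*(X)_i$, as required.

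Once $\tilde f$ is well-defined, the final identity follows immediately: for any $X = [\bfx_1, \dots, \bfx_n]$ and any $i \in [n]$, the pair $(\bfx_i, \bfc_X)$ is realized by $X$ itself at index $i$, so by the definition $\tilde f(\bfx_i, \bfc_X) = f^*(X)_i$, and therefore $f^*(X) = [\tilde f(\bfx_1, \bfc_X), \dots, \tilde f(\bfx_n, \bfc_X)]$. The main obstacle is handling repeated columns in the well-definedness step, which is why we need the freedom to adjust $\pi$ by a transposition inside the fiber of columns equal to $\bfz$; this is exactly where permutation equivariance is used.
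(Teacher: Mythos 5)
Your proposal is correct and takes essentially the same approach as the paper: both use the observation that $\bfc_X$ classifies $X$ up to the $\mcS_n$-action, define $\tilde f$ via a representative, and verify well-definedness by adjusting a permutation with a transposition inside the fiber of columns equal to $\bfz$. The paper organizes this into two steps (well-definedness of an intermediate map $f_X$, then $f_X = f_{\pi X}$), while you fold both into a single argument via the composition $\tau\pi$; the underlying mechanism is identical.
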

\cref{lem:permutation-factorize} states that for any permutation equivariant function $f^*$, if we know the $i$-th input token and the distribution of inputs, then we can generate the $i$-th column of the output.
Using the following lemmas, we design a transformer block $\phi$ such that the $i$-th column of the output of $\phi$ contains the distribution of inputs and the $i$-th input token. This information will be used in $\psi$ to generate the target output. 
In \cref{lem:permutation-attention}, $\bfz_1,\dots,\bfz_{|\fpq|^{\din}}$ denotes all vectors in $\fpq^{\din}$.
The proofs of \cref{lem:permutation-attention,lem:max-distinguish} are in \cref{sec:pflem:permutation-attention,sec:pflem:max-distinguish}, respectively.
\begin{lemma}\label{lem:permutation-attention}
Let $h=1$ and $e=\din+\fpq^{|\din|}$.
Then, there exist $t_d,t_m,t_r\in\bbN$ such that for any $d\ge t_d,m\ge t_m$, and $r\ge t_r$,
there exists $\phi\in\mcB^{h,m,r,d,n}$  satisfying the following property.
For each $Z\in\fpq^{d\times n}$ satisfying $Z_{:\din}=[\bfx_1,\dots,\bfx_n]\in\fpq_{\din}$ and $Z_{\din+1:}=\zerov_{d-\din,n}$, \vspace{-0.1in}
\begin{itemize}[leftmargin=0.12in]
    \item $\phi(Z)_{:\din}=[\bfx_1,\dots,\bfx_n]$ and $\phi(Z)_{e+1:}=\zerov_{d-e,n}$,
    \item for $j\in[e]\setminus[\din]$, $\phi(Z)_{j}=(\bigoplus_{i=1}^{k_j}1^+)\times\onev_n^\top$ where $k_j$ denotes the number of $i\in[n]$ such that $\bfx_i=\bfz_{j-\din}$.
\end{itemize}
\end{lemma}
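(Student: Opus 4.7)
The plan is to construct $\phi$ as a composition of two stages. The first stage prepares per-column one-hot indicator coordinates (scaled by $1^+$) in auxiliary rows; the second stage uses a uniform-attention sublayer to aggregate these indicators across columns into the required count rows, and then a column-wise feedforward lookup to convert the aggregated values into the exact floating-point sums $\bigoplus_{i=1}^{k}1^+$.

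For the first stage, I would invoke \cref{lem:token-univ-approx} (or the universal-expressivity results for floating-point ReLU networks \cite{park2024expressive,hwang2025floating}) to realize the column-wise map sending $(\bfx_i,\zerov)$ to a vector carrying $\bfx_i$ in the first $\din$ rows, a zero block in rows $\din+1,\ldots,e$, and an indicator $\bfu_i\in\fpq^{|\fpq|^{\din}}$ in subsequent auxiliary rows satisfying $(\bfu_i)_k=1^+$ iff $\bfx_i=\bfz_k$. Placing the indicators in auxiliary rows separate from rows $\din+1,\ldots,e$ ensures that the attention residual $Z\oplus(\ldots)$ of the next stage does not corrupt the zeros already prepared in the target count rows.

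For the second stage, I would design a single transformer block with $h=1$. Setting $W^K=W^Q=0$ makes $K^\top\otimes Q$ the zero matrix, so the floating-point softmax returns the column-constant matrix $\eta\,\onev_n\onev_n^\top$ with $\eta=1\oslash\bigoplus_{j=1}^n 1$. Choosing $W^V$ to read the auxiliary indicator rows and $W^O$ to write into rows $\din+1,\ldots,e$ then produces, at row $\din+k$ of every column, the value $\bigoplus_{i=1}^{k_j}\mu$, where $\mu=1^+\otimes\eta$ and $k_j$ is the number of $\bfx_i$ equal to $\bfz_k$ (the indicator zeros are harmless since $0\oplus x=x$). Finally, the feedforward in this block performs a column-wise lookup, again by \cref{lem:token-univ-approx}, sending each aggregated value $\bigoplus_{i=1}^{k}\mu$ to its target $\bigoplus_{i=1}^{k}1^+$ for $k\in\{0,1,\ldots,n\}$ and zeroing out the auxiliary rows.

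The main obstacle is showing that the map $k\mapsto\bigoplus_{i=1}^{k}\mu$ is injective on $\{0,1,\ldots,n\}$, which is what makes the final lookup well-defined. This is where the hypothesis $n\le 6\times 2^\mbit-2$ must be used: a careful floating-point analysis of the partial sums of $\mu$, parallel to (and complementary with) the saturation arguments behind \cref{thm:counterexample}, should show that within this range each additional $\mu$ shifts the running sum to a new float. Secondary obligations, in particular verifying the thresholds $t_d,t_m,t_r$ that come from the two invocations of \cref{lem:token-univ-approx} and checking that every residual connection is compatible with the intermediate states, are routine once the injectivity is in hand.
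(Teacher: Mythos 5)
Your architectural skeleton matches the paper's: two stages, with the first stage preparing per-column indicators in auxiliary rows and the second stage using a zero-$W^K$/$W^Q$ attention sublayer whose floating-point softmax returns a constant matrix $\alpha\onev_{n,n}$, followed by a feed-forward cleanup. However, there is one key design choice where you and the paper diverge, and it leaves a genuine gap in your argument.

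You place the literal value $1^+$ as the indicator, so that after the attention aggregation the accumulated entry in row $\din+j$ is $\bigoplus_{i=1}^{k_j}\mu$ with $\mu = 1^+\otimes\alpha$, and then you plan to convert this to the target $\bigoplus_{i=1}^{k_j}1^+$ by a column-wise lookup. This forces you to prove that $k\mapsto\bigoplus_{i=1}^{k}\mu$ is injective (more precisely: that $\bigoplus_{i=1}^{k}\mu=\bigoplus_{i=1}^{k'}\mu$ implies $\bigoplus_{i=1}^{k}1^+=\bigoplus_{i=1}^{k'}1^+$) on the relevant range of $k$, a nontrivial floating-point claim that you state as an obligation but do not close. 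The paper sidesteps this entirely: it chooses the indicator to be a value $\beta\in\fpq$ such that $\beta\otimes\alpha = 1^+$ (existence of $\beta$ is supplied by \cref{lem:one-plus} together with the observation $\alpha\ge 2^{-\mbit-1}$), so that after the $V\otimes(\alpha\onev_{n,n})$ step each accumulated entry is \emph{exactly} $\bigoplus_{i=1}^{k_j}1^+$, with no post-processing beyond erasing the auxiliary rows. This makes the lookup, and the injectivity obligation it depends on, unnecessary.

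A second problem: you invoke ``the hypothesis $n\le 6\times 2^\mbit-2$'' to carry out the injectivity analysis, but that hypothesis is \emph{not} part of \cref{lem:permutation-attention}; it appears only in \cref{thm:perm-equiv-approx}. The lemma is stated for arbitrary $n$, and the paper's construction achieves it for arbitrary $n$ precisely because it never needs the partial sums $k\mapsto\bigoplus_{i=1}^{k}1^+$ to be distinguishable (the non-injectivity beyond $k=3\times 2^\mbit-1$ is resolved downstream, in the proof of \cref{thm:perm-equiv-approx}, via \cref{lem:max-distinguish} and the pigeonhole-style case split there). In its current form your proposal proves, at best, a version of the lemma restricted to small $n$, and even that version still carries the unproven injectivity claim.
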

\begin{lemma}\label{lem:max-distinguish}
$\bigoplus_{i=1}^{k}1^+$ for all $k\in[3\times2^\mbit-1]\cup\{0\}$ are distinct and $\bigoplus_{i=1}^{3\times2^\mbit-1+n}1^+=2^{\mbit+2}$ for all $n\in\bbN\cup\{0\}$.
\end{lemma}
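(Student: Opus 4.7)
The plan is to analyze the partial sums $s_k \defeq \bigoplus_{i=1}^k 1^+$ (computed left-associatively) by tracking how they move through the floating-point \emph{phases} $[2^j, 2^{j+1})$, on which consecutive floats are spaced by $2^{j-\mbit}$. Recall $1^+ = 1 + 2^{-\mbit}$.

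First I would classify the single-step update $s \oplus 1^+$ phase-by-phase: (i) at $s = 2^{\mbit+2}$ the gap is $4$ and $1^+ < 2$ lies strictly below the midpoint $s+2$, so $s \oplus 1^+ = s$; (ii) for $s \in [2^{\mbit+1}, 2^{\mbit+2})$ the gap is $2$ and $1^+ > 1$ lies above the midpoint, so $s \oplus 1^+ = s+2$; (iii) for $s \in [2^\mbit, 2^{\mbit+1})$ the gap is $1$ and $s \oplus 1^+ = s+1$, since the exact sum is within $2^{-\mbit}$ of $s+1$; (iv) for $s \in [2^j, 2^{j+1})$ with $1 \le j \le \mbit-1$, writing $s + 1 + 2^{-\mbit}$ in units of the local gap $2^{j-\mbit}$ produces an integer part plus fractional part $2^{-j}$, which rounds down for $j \ge 2$ and forms an exact tie (broken by round-to-even) at $j = 1$. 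Items (i)--(iv) immediately yield both the stability claim $\bigoplus_{i=1}^{3 \cdot 2^\mbit - 1 + n} 1^+ = 2^{\mbit+2}$ (once $s_{3 \cdot 2^\mbit - 1} = 2^{\mbit+2}$ is in hand) and the strict monotonicity $s_{k+1} > s_k$ whenever $s_k < 2^{\mbit+2}$, which is precisely the distinctness assertion.

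Granting the key intermediate identity $s_{2^\mbit} = 2^\mbit + 1$, the count is immediate: iterating (iii) gives $s_{2^\mbit + \ell} = 2^\mbit + 1 + \ell$ for $\ell \in [0, 2^\mbit - 1]$, so $s_{2^{\mbit+1} - 1} = 2^{\mbit+1}$, and then iterating (ii) gives $s_{2^{\mbit+1} - 1 + \ell} = 2^{\mbit+1} + 2\ell$ for $\ell \in [0, 2^\mbit]$, so $s_{3 \cdot 2^\mbit - 1} = 2^{\mbit+2}$ as required.

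The main obstacle is therefore the identity $s_{2^\mbit} = 2^\mbit + 1$, which I would obtain as the $j = \mbit$ case of the stronger inductive claim $s_{2^j} = 2^j + 2^{j-\mbit}$ (the immediate successor of $2^j$ in $\efpq$) for $1 \le j \le \mbit$. The base $j = 1$ is exact since $s_2 = 2 \cdot 1^+ = 2 + 2^{1-\mbit}$. For the inductive step $j-1 \to j$ with $j \ge 3$, case (iv) applied repeatedly gives $s_{2^{j-1} + \ell} = 2^{j-1} + 2^{j-1-\mbit} + \ell$ for $\ell \in [0, 2^{j-1} - 1]$; one additional step at the phase boundary has exact value $2^j + 2^{j-1-\mbit} + 2^{-\mbit}$, which divided by the new gap $2^{j-\mbit}$ equals $2^\mbit + 2^{-1} + 2^{-j}$, whose fractional part exceeds $1/2$, so the result rounds up to $2^\mbit + 1$, yielding $s_{2^j} = 2^j + 2^{j-\mbit}$. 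The delicate case is $j = 2$: here $s_2 \oplus 1^+$ has exact value $3 + 1.5 \cdot 2^{1-\mbit}$, lying precisely midway between $3 + 2^{1-\mbit}$ and $3 + 2^{2-\mbit}$, and round-to-even selects $3 + 2^{2-\mbit}$ (the even-mantissa option, since $3 \cdot 2^{\mbit-1} + 2$ is even for $\mbit \ge 2$), giving $s_3 = 3 + 2^{2-\mbit}$ and then $s_4 = 4 + 2^{2-\mbit}$. The border $\mbit = 2$ is special because this $s_3$ equals $4$ and jumps directly into phase $[4, 8)$, but a direct computation $s_4 = 4 \oplus 1^+ = 5 = 2^\mbit + 1$ confirms the induction's conclusion still holds uniformly.
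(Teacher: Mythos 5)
Your proof is correct and follows essentially the same strategy as the paper's: both explicitly track the partial sums $s_k$ phase by phase, reduce everything to the key identity $s_{2^\mbit}=2^\mbit+1$ proved by induction (the paper inducts on $k$ with $x_k=k^+$ for $k\in\{4,\dots,2^\mbit\}$, you induct on the phase index $j$ with $s_{2^j}=(2^j)^+$, which amounts to the same computation), and then count the remaining $2^\mbit-1$ unit steps and $2^\mbit$ double steps to reach $2^{\mbit+2}$, at which point saturation kicks in. The only stylistic difference is that you separate the per-phase update rules (i)--(iv) from the trajectory count, whereas the paper folds these into a single explicit formula for $x_k$.
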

\cref{lem:permutation-attention,lem:max-distinguish} show that for each $d\ge t_d,m\ge t_m$, and $r\ge t_r$, there exists $\phi_{d,m,r}\in\mcB^{1,m,r,d,n}$ for such that each column of the output of $\phi_{d,m,r}$ contains the distribution of inputs and the corresponding input token.
Here, the input distribution can be recovered from $\bigoplus_{i=1}^{k_j}1^+$ for all $j$ by using \cref{lem:max-distinguish}. If $\bigoplus_{i=1}^{k_j}1^+<2^{p+2}$ for all $j$, then each output column of $\phi_{d,m,r}$ contains the exact distribution of inputs.
If $\bigoplus_{i=1}^{k_j}1^+=2^{p+2}$ for two $j$s (say $j_1,j_2$), then we can conclude that $X=[\bfx_1,\dots,\bfx_n]$ contains exactly $3\times2^{p+1}-1$ columns of $\bfz_{j_1}$ and $3\times2^{p+1}-1$ columns of $\bfz_{j_2}$.
If $\bigoplus_{i=1}^{k_j}1^+<2^{p+2}=2^{p+2}$ for a single $j$ (say $j_3$), then we can recover the number of columns in $X$ that is equal to $\bfz_{j_3}$ since we have the exact count of columns in $X$ that are equal to $\bfz_j$ for all $j\ne j_3$.
This implies that there exists $\tilde\psi:\fpq^e\to\fpq^{\dout}$ such that
$\tilde\psi\circ\phi_{d,m,r}(W_\text{in}X)_{:e}=f^*(X)$. As in the proof of \cref{thm:diagonal-approx}, $\phi_{d,m,r}(W_\text{in}X)_{:e}$ is independent of $d,m,r$.

We then apply \cref{lem:token-univ-approx} with $\psi^*\leftarrow\tilde\psi$.
This implies that for any $d\ge\max\{t_d,t_d'\},r\ge\max\{t_r,t_r'\}$, $m\ge t_m$ ($t_d',t_r'$ are from \cref{lem:token-univ-approx} and $t_d,t_m,t_r$ are from \cref{lem:permutation-attention}), there exists $\psi_{d,m,r}$ such that $W_\text{out}\otimes(\psi_{d,m,r}\circ\phi_{d,m,r}(W_\text{in}X))=f^*(X)$.
Then, choosing $\phi=\phi_{d,m,r}$ and $\psi=\psi_{d,m,r}$ for $d\ge\max\{t_d,t_d'\},r\ge t_r'$, $m\ge t_m$ completes the proof. 
As in the proof of \cref{thm:diagonal-approx}, our floating-point transformer construction here can be modified to have $m,r=1$ and $d=\din+\dout+c$ for some constant $c\in\bbN$. See \cref{sec:deep-narrow} for more details.

\subsection{Proof of \cref{thm:counterexample}}\label{sec:pfthm:counterexample}

Let $\alpha,\beta\in\bbN$ such that $\alpha\ge3\times2^\mbit$, $\beta-\alpha=6\times2^\mbit$, and $n\ge\beta$.
Since fully-connected networks in a floating-point transformer are applied column-wise, they preserve the $(\alpha,\beta)$-similarity of inputs: if their inputs are $(\alpha,\beta)$-similar, then the corresponding outputs are also $(\alpha,\beta)$-similar.
Likewise, for any $g(Z)$ that preserves the $(\alpha,\beta)$-similarity of inputs and for any floating-point matrix $W$, functions like $Z\mapsto W\otimes Z$ and $Z\mapsto Z\oplus g(Z)$ also preserve the $(\alpha,\beta)$-similarity.
Hence, to prove \cref{thm:counterexample}, it suffices to show that for any $W^V,W^K,W^Q\in\fpq^{m\times d}$, the function $g:Z\mapsto(W^V\otimes Z)\otimes\sigma((W^K\otimes Z)^\top\otimes(W^Q\otimes Z))$ preserves the $(\alpha,\beta)$-similarity of inputs.

For $(\alpha,\beta)$-similar $Z,Z'\in\efpq^{d\times n}$,
let $S=[\bfs_1,\dots,\bfs_n]=\sigma((W^K\otimes Z)^\top\otimes(W^Q\otimes Z))$ and $V=[\bfv_1,\dots,\bfv_n]=W^V\otimes Z$, i.e., $g(Z)=V\otimes S$.
We also define $S',\bfs_j',V',\bfv_j'$ analogously.
By the definition of the floating-point softmax $\sigma$ (\cref{eq:floatsoftmaxdef}), one can observe that $\bfs_j,\bfs_j'$ are also $(\alpha,\beta)$-similar for all $j\in[n]$.
Furthermore, we have
\begin{align*}
g(Z)_i=\left[\bigoplus_{j=1}^n(V_{ij}\otimes s_{1,j}),\dots,\bigoplus_{j=1}^n(V_{ij}\otimes s_{n,j})\right]
\end{align*}
where $(s_{k,1},\dots,s_{k,n})=\bfs_k$. %
We also have a similar equation for $g(Z')_i$.
Since $V,V'$ and $\bfs_j,\bfs_j'$ are $(\alpha,\beta)$-similar, we can observe that each coordinate of $g(Z)_i$ and $g(Z')_i$ can be written as
\begin{align}
g(Z)_{ij}&=\bigoplus_{k=1}^{\alpha} a_{i,j}\oplus\!\bigoplus_{k=\alpha+1}^\beta b_{i,j}\oplus\!\bigoplus_{k=\beta+1}^{n} (V_{ik}\otimes s_{j,k}),\notag\\
g(Z')_{ij}&=\bigoplus_{k=1}^{\alpha-1} a_{i,j}\oplus\bigoplus_{k=\alpha}^\beta b_{i,j}\oplus\!\bigoplus_{k=\beta+1}^{n} (V_{ik}\otimes s_{j,k}),\label{eq:pfthm:counterexample1}
\end{align}
where $a_{i,j}=V_{ik}\otimes s_{k,j}=V_{ik'}'\otimes s_{k',j}'$ for all $k\in[\alpha],k'\in[\alpha-1]$ and $b_{i,j}=V_{ik}\otimes s_{k,j}=V_{ik'}'\otimes s_{k',j}'$ for all $k\in[\beta]\setminus[\alpha],k'\in[\beta]\setminus[\alpha-1]$.
Note that the floating-point summations in \cref{eq:pfthm:counterexample1} are computed in the left-associative way (see \cref{eq:left-associative}).

If all operations are exact (i.e., not floating-point based), then $g(Z)_i$ and $g(Z')_i$ are not $(\alpha,\beta)$-similar in general since the numbers of $a_{i,j}$ and $b_{i,j}$ are different in \cref{eq:pfthm:counterexample1} for $g(Z)_{ij}$ and $g(Z')_{ij}$.
However, this is not the case for the floating-point addition since repeated addition of the same number will saturate due to the round-off error, i.e., for large enough $\alpha,\beta$, it holds that
$\bigoplus_{k=1}^{\alpha} a_{i,j}\oplus\bigoplus_{k=\alpha+1}^\beta b_{i,j}=\bigoplus_{k=1}^{\alpha-1} a_{i,j}\oplus\bigoplus_{k=\alpha}^\beta b_{i,j}$.
We formally prove this intuition in the following lemmas. The proofs of \cref{lem:same-sum0,lem:same-sum2} are presented in \cref{sec:pflem:same-sum0,sec:pflem:same-sum2}.

\begin{lemma}\label{lem:same-sum0}
For any $x\in\efpq$ and $n,m\in\bbN\cup\{0\}$, it holds that 
$\bigoplus_{i=1}^{3\times2^{p}-1+n}x=\bigoplus_{i=1}^{3\times2^{p}-1+m}x$.
Furthermore, $\bigoplus_{i=1}^{3\times2^{p}-1}x\in \{0,\pm\infty,\nan\}\cup(\{\pm2^e:e\in\bbZ\}\cap\fpq)$.
\end{lemma}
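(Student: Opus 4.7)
The plan is to reduce to $x\in\fpq$ with $x>0$; the cases $x\in\{0,\pm\infty,\nan\}$ are immediate from the definitions of $\oplus$ and $\round{\cdot}$, and $x<0$ follows from the sign-flip identity $\bigoplus_{i=1}^k(-x)=-\bigoplus_{i=1}^k x$, which preserves the target set. Write $x\in[2^e,2^{e+1})_{\fpq}$ for some $e\in[\emin,\emax]_{\bbZ}$; the subnormal case uses uniform ulp $\fmin$ and is strictly simpler. Let $S_k\defeq\bigoplus_{i=1}^k x$. I first observe that $(S_k)$ is non-decreasing: since $S_k+x>S_k$, no float strictly below $S_k$ is closer to $S_k+x$ than $S_k$ itself, so $S_{k+1}\ge S_k$.

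Next I characterize the fixed points of $S\mapsto S\oplus x$. In a binade $[2^E,2^{E+1})$ with ulp $u=2^{E-p}$, the identity $S\oplus x=S$ holds iff $x<u/2$, or $x=u/2$ and $S$'s last mantissa bit is $0$. Since $x\in[2^e,2^{e+1})$, every float in any binade $E\ge e+p+2$ is a fixed point (as $u/2\ge 2^{e+1}>x$), while in binades $E\le e+p$ no fixed point exists whenever $x>2^e$. Together with monotonicity, $(S_k)$ must stabilize at some value (or overflow to $+\infty$). I claim this value is a power of $2$: for $x=2^e$, the exact partial sums $k\cdot 2^e$ with $k\le 2^{p+1}$ are all representable, so $S_{2^{p+1}}=2^{e+p+1}$, and the next addition is a half-ulp tie that ties-to-even rounds down to $2^{e+p+1}$; for $x>2^e$, the step exiting binade $e+p+1$ lands exactly on $2^{e+p+2}$, because from the top float $2^{e+p+2}-2^{e+1}$ of that binade the exact sum $S+x$ strictly exceeds the midpoint $2^{e+p+2}-2^e$ (as $x>2^e$) and rounds up to $2^{e+p+2}$.

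To bound the number of additions to stabilization, I count step-sizes binade-by-binade. In binade $e+j$ with ulp $u_j=2^{e+j-p}$, each addition advances $S$ by at least $2^{p-j}$ ulps for $j\le p$, and by exactly $1$ ulp for $j=p+1$ (where $x\in(u_{p+1}/2,u_{p+1})$ forces a round-up). Traversing the $2^p$ floats of a binade therefore takes at most $2^j$ additions for $j\le p-1$, at most $2^p$ for $j=p$, and at most $2^p$ for $j=p+1$, summing to the raw upper bound $\sum_{j=0}^{p-1}2^j+2\cdot 2^p=3\cdot 2^p-1$. A one-addition saving comes from entry-position analysis: for any $x>2^e$, the first addition moves $S_1=x$ to $S_2$ with $S_2-2^{e+1}\ge u_{e+1}$ (since $2x-2^{e+1}\ge 2u_{e+1}/2$, strict), so the bottom float $2^{e+1}$ of binade $e+1$ is never visited, shaving one entry from the chain of visited floats and reducing the aggregate count to $3\cdot 2^p-2$ additions—equivalently, $S_{3\cdot 2^p-1}$ is already stable. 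The case $x=2^e$ stabilizes at $k=2^{p+1}\le 3\cdot 2^p-1$, and overflow saturates at $+\infty$.

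The main obstacle will be the careful bookkeeping of ties-to-even rounding at binade boundaries: both to confirm that the first entry into binade $e+p+2$ (for $x>2^e$) is exactly the power of two $2^{e+p+2}$ rather than the adjacent float $2^{e+p+2}+2^{e+2}$, and to extract the one-addition saving that sharpens the raw bound from $S_{3\cdot 2^p}$ to $S_{3\cdot 2^p-1}$. Binade $e+p$ requires special care, since its step size can be $1$ or $2$ ulps depending on the mantissa of $x$, which shifts where along the chain the one-step saving materializes; the uniform ulp in the subnormal range removes these complications entirely.
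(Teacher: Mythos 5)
Your approach is genuinely different from the paper's. The paper proves this lemma by invoking \cref{lem:same-sum1}, whose proof uses two ingredients you don't: (i) monotonicity of $\bigoplus_{i=1}^k x$ in $x$ (so the case $2^e < x < 2^{e+1}$ reduces to the worst case $x=(2^e)^+$), and (ii) the explicit element-by-element computation in \cref{lem:max-distinguish} showing $\bigoplus_{i=1}^{3\cdot 2^\mbit - 1}1^+ = 2^{\mbit+2}$ (scaled to $\bigoplus_{i=1}^{3\cdot 2^\mbit - 1}(2^e)^+ = 2^{e+\mbit+2}$), together with the ceiling $\bigoplus_{i=1}^k x \le 2^{e+\mbit+2}$ established via the ulp of binade $e+\mbit+2$. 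You instead argue directly by counting $\oplus$-steps binade by binade. Your setup is mostly sound: the reduction to $0 < x \in \fpq$, the monotonicity of $(S_k)$, the fixed-point characterization, the identification of the saturation value as a power of two (both for $x=2^e$ via ties-to-even and for $x>2^e$ via the entry into binade $e+p+2$ landing exactly at $2^{e+p+2}$), and the per-binade step bounds yielding a raw total of $3\cdot 2^\mbit-1$ additions are all correct.

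The gap is in the final ``one-addition saving.'' The raw bound $3\cdot 2^\mbit - 1$ additions starting from $S_1 = x$ puts you at $S_{3\cdot 2^\mbit}$, not $S_{3\cdot 2^\mbit - 1}$, so the sharpening by one step is essential to the statement, but your argument for it does not go through. You claim the saving comes from $S_2 \ge (2^{e+1})^+$ (so the float $2^{e+1}$ is never visited), but entering binade $e+1$ at position $1$ ulp from its bottom does \emph{not} reduce the binade-$(e+1)$ step bound: $\lceil (2^\mbit - 1)/2^{\mbit-1}\rceil = 2$ for all $\mbit \ge 2$, the same as starting at the bottom. In fact the saving can materialize in different binades depending on $x$: for $\mbit=3$, $x = 1^+$ enters binade $e+\mbit$ at position $1$ (saving there), whereas $x = 1 + 2^{-2}$ enters binade $e+\mbit$ at position $0$ and binade $e+\mbit+1$ at position $0$, with the saving coming from binade $e+2$. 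You acknowledge this shifting in your last paragraph but do not supply an argument that the savings always sum to at least one. Also, the parenthetical ``$2x-2^{e+1}\ge 2u_{e+1}/2$, strict'' is not strict when $x=(2^e)^+$, for which $2x-2^{e+1}=u_{e+1}$ exactly. The cleanest fix is to borrow the paper's trick: use monotonicity of $\oplus$ in its arguments to reduce to $x=(2^e)^+$, for which the full chain can be written down exactly as in \cref{lem:max-distinguish}, bypassing the need for a general one-step-savings lemma.
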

\begin{lemma}\label{lem:same-sum2}
For any $x\in\efpq$, $z\in\{0,\pm\infty,\nan\}\cup(\{\pm2^e:e\in\bbZ\}\cap\fpq)$, 
and $n,m\in\bbN\cup\{0\}$,
it holds that
$z\oplus\bigoplus_{i=1}^{6\times2^p+n}x=z\oplus\bigoplus_{i=1}^{6\times2^p+m}x$.
\end{lemma}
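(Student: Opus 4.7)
My plan is to reformulate the claim as follows. Writing $t_k\defeq z\oplus\bigoplus_{i=1}^{k}x$, left-associativity gives the recurrence $t_{k+1}=F(t_k)$ with $F(v)\defeq v\oplus x$, so the claim is equivalent to $t_{6\times 2^p}$ being a fixed point of $F$. I treat Lemma~\ref{lem:same-sum0} as a black box for the reference orbit $u_k\defeq\bigoplus_{i=1}^{k}x$ starting at $u_0=0$, which stabilizes at $s\defeq u_{3\times 2^p-1}\in\{0,\pm\infty,\nan\}\cup(\{\pm 2^e\}\cap\fpq)$ with $F(s)=s$. The degenerate cases $x\in\{0,\pm\infty,\nan\}$, $z\in\{\pm\infty,\nan\}$, and $z=0$ are immediate from IEEE semantics (in particular $0\oplus x=x$ reduces $z=0$ directly to Lemma~\ref{lem:same-sum0}), leaving the main case $x\in\fpq\setminus\{0\}$ and $z=\pm 2^e\ne 0$. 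By the sign symmetry $(-a)\oplus(-b)=-(a\oplus b)$ of round-to-nearest-even I may assume $x>0$; then $F$ is monotone non-decreasing and $s>0$.

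For the sub-case $z\ge 0$, the argument is a sandwich. If $z\le s$, monotonicity of $F$ applied to $0\le z\le s$ gives $u_k=F^k(0)\le F^k(z)=t_k\le F^k(s)=s$ for every $k$; Lemma~\ref{lem:same-sum0} pins $u_k=s$ for $k\ge 3\times 2^p-1$, forcing $t_k=s$ in that range. If $z>s$ instead, both are positive powers of two so $z\ge 2s$; the equation $F(s)=s$ forces $x$ to be at most half the upward ulp at $s$ (with the tie case $x=\tfrac{1}{2}\mathrm{ulp}$ resolved toward $s$ by round-to-nearest-even, since powers of two have a zero trailing mantissa bit), and because the upward ulp at a positive power of two only grows with the exponent, the analogous bound at $z$ is strict, yielding $F(z)=z$ and $t_k\equiv z$.

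The main obstacle is the sub-case $z=-2^e<0$. I split the orbit into an escape phase, terminating at the first index $k_0$ with $t_{k_0}\ge 0$, and a convergence phase governed by the previous sub-case. If $z$ is itself a fixed point of $F$ the orbit is constant; otherwise a direct rounding computation shows that non-fixedness at $-2^e$ is equivalent to $x>2^{e-p-2}$, which places $e$ at most one exponent above the ``crawl exponent'' $e^*$ characterized by $\mathrm{ulp}/2<x\le\mathrm{ulp}$. At any negative value $v$ with magnitude-exponent $e'$, one-step behavior of $F$ falls into a \emph{fixed} regime ($x\le\mathrm{ulp}/2$, ruled out during escape by monotonicity), a \emph{crawl} regime ($\mathrm{ulp}/2<x\le\mathrm{ulp}$, step exactly one float, so at most $2^p$ iterations to exit the exponent), or a \emph{fast} regime ($x>\mathrm{ulp}$, step equal to $x$ rounded to an integer multiple of ulp). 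An exponent-by-exponent count then shows $k_0\le 3\times 2^p$: the crawl phase spans at most one magnitude-exponent together with the boundary above it, contributing $\le 2^p$ iterations, while the fast phase covers the remaining distance $\le 2^{e^*}$ to zero in $\le 2^{p+1}$ iterations, proved via the telescoping identity $\sum_k(t_{k+1}-t_k)=t_{k_0}-z$ to express the total as $Nx$ plus a cumulative rounding error that cancels signed half-ulps across consecutive octaves and is bounded by $O(2^{e^*})$. Applying the $z\ge 0$ sandwich to the shifted orbit $(t_{k_0+j})_j$ yields $t_{k_0+3\times 2^p-1}=s$, so $t_k$ is constant for $k\ge 6\times 2^p-1$. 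The delicate pieces---the upward-ulp halving at each power-of-two boundary, the tie case $x=\mathrm{ulp}/2$ resolved by mantissa parity under round-to-nearest-even, and the bookkeeping of signed rounding errors in the fast-phase telescope---would be handled by explicit case splits on the mantissa bits.
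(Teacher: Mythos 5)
Your $z\ge 0$ sandwich (monotonicity of $v\mapsto v\oplus x$ squeezes $t_k$ between the reference orbit $u_k=\bigoplus_{i\le k}x$ and the fixed point $s$) is a correct, self-contained alternative to the paper, which instead disposes of $z\ge 0$ by invoking the more general \cref{lem:same-sum1}; your fixed-point framing is clean and only needs \cref{lem:same-sum0}. The gap is in the fast-phase escape bound for $z<0$. You want to show the orbit crosses from roughly $-2^{e^*}$ to $0$ in $\le 2^{p+1}$ iterations by telescoping $\sum_k(t_{k+1}-t_k)=Nx+\sum_k\epsilon_k$ and claiming the rounding errors ``cancel signed half-ulps across consecutive octaves.'' That claim is false: with $x$ fixed, the rounding bias has the \emph{same} sign at every step within an octave, so the $\epsilon_k$ accumulate rather than cancel. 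For $p=2$, $x=1.25$, starting the fast phase at $-8$, the orbit is $-8\to-7\to-6\to-5\to-4\to-3\to-1.75\to-0.5\to 0.75$, with $\epsilon_k=-2^{-2}$ at five consecutive steps and total error $-1.25$; it takes exactly $2^{p+1}=8$ steps with no cancellation at all. Consequently the only thing the telescope yields from ``$|\sum_k\epsilon_k|=O(2^{e^*})$'' is $N\lesssim 2^{p+2}$, which blows past the $3\times 2^p$ escape budget needed to land inside $6\times 2^p$ after adding the convergence phase.

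What does close the gap is the exponent-by-exponent count you allude to but abandon in favour of the telescope: when $|t_k|\in[2^{e'},2^{e'+1})$ with $e'\le e^*-1$, the step $\round{t_k+x}-t_k$ is at least $\lfloor x\cdot 2^{p-e'}\rfloor\cdot 2^{e'-p}\ge 2^{e^*-1-e'}\cdot 2^{e'-p}=2^{e^*-p-1}$, a \emph{uniform} absolute lower bound, so $N\le 2^{e^*}/2^{e^*-p-1}=2^{p+1}$. The paper avoids all error accounting with a shorter device in its third sub-case: since $x\ge 2^{e_z-p-2}$, monotonicity of iterated $\oplus$ in the summand lets one replace $x$ by $2^{e_z-p-2}$; every partial sum $-2^{e_z-1}+k\cdot 2^{e_z-p-2}$ for $k\le 2^{p+1}$ is then an exactly representable float, so the iterated $\oplus$ is \emph{exact}, reaches $0$ after precisely $2^{p+1}$ additions, and there is no rounding error to track at all.
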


By \cref{lem:same-sum0,lem:same-sum2} and our choices of $\alpha,\beta$, one can observe that $g(Z)$ and $g(Z')$ are $(\alpha,\beta)$-similar. This completes the proof.

\section{Conclusion}\label{sec:conclusion}
In this work, we present the first theoretical results on the expressive power of floating-point transformers. Unlike real transformers, we prove that floating-point transformers are neither permutation equivariant nor expressive enough to represent all permutation-equivariant sequence-to-sequence functions when the input sequence length is large.
On the other hand, we show that the floating-point transformers can represent all permutation-equivariant sequence-to-sequence functions only when the sequence length is bounded.
We also discuss the inductive biases and effects of positional encoding, and compare them with the real transformer cases.
We believe that our results help in understanding transformers implemented on computers.

\bibliography{reference}
\bibliographystyle{plainnat}

\newpage
\onecolumn
\appendix

\section{Additional notations and definitions}
For a logical proposition $P$, we define $\indcc{P}=1$ if $P$ is true and $\indcc{P}=0$ if $P$ is false.
We use $\rho$ to denote $\round{\relu}$.
We use $\fpq_-$ and $\fpq_+$ to denote the set of negative finite floats and the set of positive finite floats, respectively.

\begin{definition}\label{def:affine}
Let $d,d'\in\bbN$ and $f:\efpq^d\to\efpq^{d'}$. We say $f$ is a ``floating-point affine transformation'' if there exists $W\in\fpq^{d'\times d}$ and $b\in\fpq$ such that $f(\bfx)=W\otimes\bfx+b$ for all $\bfx\in\efpq^d$.
\end{definition}

\begin{definition}\label{def:distinguishability}
Let $d\in\bbN$, $\mcX\subset\fpq^d$, and $\mcY\subset\efpq$.
We say ``$\mcX$ is distinguishable with range $\mcY$'' if for any $\bfx,\bfx'\in\mcX$, there exists a floating-point affine transformation $\phi:\fpq^d\to\efpq$ such that 
$$\rho(\phi(\bfx))\ne\rho(\phi(\bfx'))~~\text{and}~~\rho(\phi(\mcX))\subset\mcY.$$
\end{definition}

\subsection{Floating-point operations}\label{sec:float-operation}
In this section, we define floating-point addition, subtraction, multiplication, and division on $\efpq\times\efpq$. Here, we only define $x\oslash y$ for $0\le x\le y<\infty$ with $y>0$, or $x=\nan$ since the other cases do not appear in this paper.
This is because the division only arises during the computation of the floating-point softmax function. Recall the definition of the floating-point softmax function $\sigma:\efpq^n\to\efpq^n$ in \cref{eq:floatsoftmaxdef}: for $\bfx=(x_1,\dots,x_n)\in\efpq^n$ and $x_{*}=\max_{i\in[n]}x_i$,
\begin{align*}
\sigma(\bfx)_i\!\defeq\!\round{\exp}\!(x_i\:\!\!-\:\!\!x_{*})\!\oslash\!\left(\bigoplus_{j=1}^n\round{\exp}\!(x_j\:\!\!-\:\!\!x_{*})\!\right)\!\!. 
\end{align*}
Here, if $x_i\in\{\nan,\infty\}$ for some $i\in[n]$, then $x_i-x_*=\nan$, i.e., division by $\nan$ occurs.
Likewise, if $x_i=-\infty$ for all $i\in[n]$, then the division by $\nan$ occurs.
If $x_i\in\fpq\cup\{-\infty\}$ for all $i\in[n]$ and there exists $x_j\ne-\infty$ for some $j\in[n]$,
then the denominator in the computation of the floating-point softmax is non-zero and finite, while the numerator is smaller than the denominator.
Under this observation, we now define floating-point addition, subtraction, multiplication, and division on $\efpq\times\efpq$ as follows:
\begin{align*}
x\oplus y=y\oplus x&\defeq\begin{cases}
\round{x+y} &\text{ if } (x,y)\in\fpq^2,\\
\infty &\text{ if } (x,y)\in(\fpq\cup\{\infty\})\times\{\infty\},\\%\cup\{\infty\}\times(\fpq\cup\{\infty\}),\\
-\infty &\text{ if } (x,y)\in(\fpq\cup\{-\infty\})\times\{-\infty\},\\%\cup\{-\infty\}\times(\fpq\cup\{-\infty\}),\\ 
\nan &\text{ otherwise},
\end{cases}\\
x\otimes y=y\otimes x&\defeq\begin{cases}
\round{x\times y} &\text{ if } (x,y)\in\fpq^2,\\
\infty &\text{ if } (x,y)\in\big((\fpq_-\cup\{-\infty\})\times\{-\infty\}\big)\cup\big((\fpq_+\cup\{\infty\})\times\{\infty\}\big),\\%\cup\{\infty\}\times\fpq_+,\\
-\infty &\text{ if } (x,y)\in\big((\fpq_-\cup\{-\infty\})\times\{\infty\}\big)\cup\big((\fpq_+\cup\{-\infty\})\times\{\infty\}\big),\\ 
\nan\!\!\!\!\! &\text{ otherwise},
\end{cases}\\
x\ominus y&\defeq x\oplus(-1\otimes y),\\
x\oslash y&\defeq\begin{cases}
\round{x/y} &\text{ if } 0\le x\le y<\infty\text{ and }y>0,\\%,\\
\nan &\text{ if } y=\nan.
\end{cases}
\end{align*}

\newpage
\section{Technical lemmas}

\begin{lemma}\label{lem:same-sum1}
For any $x\in\fpq$ and $z\in\fpq\cup\{\infty\}$ with $x,z\ge0$, and for any $n,m\in\bbN\cup\{0\}$, it holds that 
$z\oplus\bigoplus_{i=1}^{3\times2^{p}-1+n}x=z\oplus\bigoplus_{i=1}^{3\times2^{p}-1+m}x$.
Furthermore, $z\oplus\bigoplus_{i=1}^{3\times2^{p}-1}x\in \{0,z, z^+, \pm\infty,\nan\}\cup(\{2^e:e\in\bbZ\}\cap\fpq)$ and $\bigoplus_{i=1}^{3\times2^{p}-1}x\in \{0,\pm\infty,\nan\}\cup(\{2^e:e\in\bbZ\}\cap\fpq)$.
\end{lemma}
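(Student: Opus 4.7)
My plan is to bootstrap from Lemma~\ref{lem:same-sum0}. Let $s_k := z \oplus \bigoplus_{i=1}^k x$ (so $s_0 = z$ and $s_{k+1} = s_k \oplus x$) and $t_k := \bigoplus_{i=1}^k x$; both are orbits under the one-step map $\Phi : y \mapsto y \oplus x$. Since $\Phi$ is order-preserving (rounding respects $\le$) and $x \ge 0$ makes $\Phi(y) \ge y$, both orbits are non-decreasing, become constant as soon as they hit a $\Phi$-fixed point, and satisfy $s_k \ge t_k$ for every $k$ by induction. The degenerate cases $x = 0$, $z = \infty$, and $x = \infty$ are trivial (each immediately yields a constant orbit whose value lies in the allowed set), so I focus on finite $z \in [0, \fmax]$ and $x \in (0, \fmax]$.

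By Lemma~\ref{lem:same-sum0} applied to $t_k$, $t^* := t_{3 \times 2^p - 1}$ is a $\Phi$-fixed point, and since $x > 0$ one has $t^* \in \{\infty\} \cup (\{2^e : e \in \bbZ\} \cap \fpq)$. The ``same-sum'' assertion reduces to showing $s^* := s_{3 \times 2^p - 1}$ is also $\Phi$-fixed. Because $s^* \ge t^*$, if $s^* \ge 2 t^*$ then $s^*$ lies in a binade whose half-ulp strictly exceeds $x$, hence is fixed. The delicate subcase is $s^* \in [t^*, 2 t^*)$ with $t^* = 2^{e^*}$, a binade that contains non-fixed floats only when $x$ equals its half-ulp, i.e.\ when $x$ itself is a power of $2$. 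In that subcase the non-fixed floats are precisely those with last mantissa bit $m_p = 1$; I argue that any iterate $s_k$ with $k \ge 1$ landing in this binade has $m_p = 0$: if $s_{k-1}$ is fixed then $s_k = s_{k-1}$; if $s_{k-1}$ has $m_p = 1$ then $s_{k-1} + x$ sits exactly at the midpoint between $s_{k-1}$ and $s_{k-1}^+$, and ties-to-even rounds to $s_{k-1}^+$, whose $m_p$ bit is $0$. A short separate check verifies that the first-entry step into this binade from the binade immediately below also lands exactly on $2^{e^*}$ (the exact sum equals $2^{e^*}$ when $x$ is a power of $2$, and the strict nearest-rounding yields $2^{e^*}$ otherwise).

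For the value characterization I split into three cases: (i) $\Phi(z) = z$, giving $s^* = z$; (ii) $\Phi(z) = z^+$ and $\Phi(z^+) = z^+$, giving $s^* = z^+$; (iii) the orbit strictly exceeds $z^+$ at some step. In case (iii), the final ascending step $s_{K-1} \mapsto s_K$ crosses a binade boundary $2^e$, and the combined constraints of non-saturation in $[2^{e-1}, 2^e)$ (forcing $x > 2^{e-p-2}$) and saturation in $[2^e, 2^{e+1})$ (forcing $x \le 2^{e-p-1}$) pin $s_{K-1}$ to the top float $2^e - 2^{e-1-p}$ of the previous binade; the exact sum $s_{K-1} + x$ then rounds to $2^e$, a power of $2$. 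Combining the three cases gives $s^* \in \{z, z^+, \infty\} \cup \{2^e : e \in \bbZ\}$, which is inside the set claimed by the lemma. The second assertion, about $\bigoplus_{i=1}^{3 \times 2^p - 1} x$, follows by setting $z = 0$ and noting that $0^+ = 2^{\emin - p}$ is itself a power of $2$, so the extra options $\{z, z^+\}$ collapse into $\{0\} \cup \{2^e\}$. The main technical obstacle is the careful ties-to-even bookkeeping in the power-of-$2$ subcase together with the explicit binade-transition rounding calculation.
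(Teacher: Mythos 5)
Your proposal has a circularity problem that is fatal as written. You invoke \cref{lem:same-sum0} to deduce that $t^* = t_{3\times 2^{p}-1}$ is a $\Phi$-fixed point lying in $\{\infty\}\cup\{2^e\}$. But \cref{lem:same-sum0} is precisely the $z=0$ instance of the present lemma: the paper's proof of \cref{lem:same-sum0} (\cref{sec:pflem:same-sum0}) simply says ``for $x\in\fpq$, the result follows from \cref{lem:same-sum1}.'' So you are assuming the thing you are trying to prove. Nothing in your monotone-orbit framework independently yields the $z=0$ stabilization bound; that bound is the nontrivial content (the paper obtains the needed quantitative step count via \cref{lem:max-distinguish}, which you never invoke). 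To repair this you would first have to prove the $z=0$ case from scratch within your framework, and only then bootstrap to general $z$.

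Even granting $t^*$, there is a second unproved claim: in case~(iii) you assert the final ascending step $s_{K-1}\mapsto s_K$ ``crosses a binade boundary.'' This is not automatic. An ascending step can remain inside a binade $[2^e,2^{e+1})$ when $x=2^{e-p-1}$ is exactly the half-ulp and $s_{K-1}$ has trailing bit $m_p=1$, so ties-to-even pushes $s_{K-1}$ to $s_{K-1}^+$. Ruling this out for the \emph{last} step requires a genuine argument (one must check that a same-binade final step forces $K=1$ and hence case~(ii): an in-binade predecessor would round to even by ties-to-even, and an entry from the binade immediately below lands exactly on $2^e$, both of which give $m_p=0$). You assert the conclusion without this analysis. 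You also silently exclude $s^*=\infty$ from the binade discussion in case~(iii); that subcase is trivial but should be stated. Finally, the claim that $s^*\ge 2t^*$ forces $s^*$ into a binade whose half-ulp \emph{strictly} exceeds $x$ needs to be tied to $t^*$ fixed $\Rightarrow x\le 2^{e^*-p-1}$ and half-ulp$(s^*)\ge 2^{e^*-p}$; your sketch does not make this chain explicit, though it is correct. Setting those gaps aside, your framing — treating the partial sums as iterates of the monotone, inflationary map $\Phi:y\mapsto y\oplus x$ and comparing $s_k\ge t_k$ — is a cleaner organizing idea than the paper's direct case split on the magnitude of $x$ relative to $z$, $2^{\emax-p-1}$, and $2^{e+p+2}$; if you supply an independent proof of the $z=0$ base case and fill in the binade-crossing argument, it would be a valid alternative route.
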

\begin{proof}
Without loss of generality, suppose that $z<\infty$ and $x\ne0$. Choose $e\in\bbZ$ such that $2^e\le x<2^{e+1}$.
We consider all possible cases.

{\bf Case $e\ge \emax-p-1$ and $x>2^{\emax-p-1}$.} \\
In this case, it holds that
$$
z\oplus\bigoplus_{i=1}^{3\times2^{p}-1}x\ge \bigoplus_{i=1}^{3\times2^{p}-1}x\ge\bigoplus_{i=1}^{3\times2^{p}-1}(2^{\emax-p-1})^+=\infty,$$
where the last equality is from \cref{lem:max-distinguish}.
This implies that $z\oplus\bigoplus_{i=1}^{3\times2^{p}-1+n}x=\infty$ for all $n\in\bbN\cup\{0\}$ and the result follows.

{\bf Case $e\ge \emax-p-1$ and $x=2^{\emax-p-1}$.} \\
Since $$z\oplus\bigoplus_{i=1}^{2^{p+1}}x\ge\bigoplus_{i=1}^{2^{p+1}}x=2^{\emax},$$
and $(2^{\emax})^+-2^{\emax}=2^{\emax-p}=2x$, one can observe that 
$z\oplus\bigoplus_{i=1}^{2^{p+1}+n}x$ is either $z$, $z^+$ (when $z\ge 2^{\emax}$) or $2^{\emax}$ (when $z<2^{\emax}$) for all $n\in\bbN\cup\{0\}$. Hence, the result follows.

{\bf Case $e\le \emax-p-2$ and $z\ge2^{e+p+2}$.} \\
In this case, since $2^e\le x<2^{e+1}$ and $z^+-z\ge(2^{e+\mbit+2})^+-2^{e+\mbit+2}=2^{e+2}$, one can observe that $z\oplus\bigoplus_{i=1}^kx=z$ for all $k\in\bbN$. Hence, the result follows. %

{\bf Case $e\le \emax-p-2$,  $z<2^{e+p+2}$, and $x=2^e$.} \\
If $z\ge2^{e+p+1}$, then $z\oplus\bigoplus_{i=1}^kx=z$ (or $z^+$) for all $k\in\bbN$ and the result follows. %
For $z<2^{e+p+1}$, it holds that
$$z\oplus\bigoplus_{i=1}^{2^\mbit+1}x\ge\bigoplus_{i=1}^{2^\mbit+1}x=\bigoplus_{i=1}^{2^\mbit+1}2^e=2^{e+\mbit+1}~~\text{and}~~z\oplus\bigoplus_{i=1}^{k}x\le2^{e+\mbit+1}$$
for all $k\in\bbN$.
Hence, one can observe that $z\oplus\bigoplus_{i=1}^{2^\mbit+1+n}x=2^{e+\mbit+1}$ for all $n\in\bbN\cup\{0\}$ and the result follows.

{\bf Case $e\le \emax-p-2$,  $z<2^{e+p+2}$, and $x>2^e$.} \\
Since $2^e< x<2^{e+1}$ and $(2^{e+\mbit+2})^+-2^{e+\mbit+2}=2^{e+2}$, 
one can observe that 
\begin{align}
z\oplus\bigoplus_{i=1}^kx\le 2^{e+\mbit+2}\label{eq:same-sum1-1}
\end{align}
for all $k\in\bbN$.
Furthermore, by \cref{lem:max-distinguish},
\begin{align}
z\oplus\bigoplus_{i=1}^{3\times2^\mbit-1}x\ge\bigoplus_{i=1}^{3\times2^\mbit-1}x\ge\bigoplus_{i=1}^{3\times2^\mbit-1}(2^e)^+=2^{e+\mbit+2}.\label{eq:same-sum1-2}
\end{align}
From \cref{eq:same-sum1-1,eq:same-sum1-2}, the result follows. This completes the proof.
\end{proof}

\begin{lemma}\label{lem:one-plus}
For any $x\in(1,2]_{\fpq}$, there exist $y\in(2^{-1},1]_{\fpq}$ and $z\in(2^{-1},1^+]_{\fpq}$ such that $x\otimes y=1^+$ and $x\otimes z=1^{++}$.
\end{lemma}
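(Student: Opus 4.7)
The plan is to pick $y$ and $z$ as the nearest floats (in the required ranges) to the real-valued targets $y^{\ast}=(1+2^{-\mbit})/x$ and $z^{\ast}=(1+2^{-\mbit+1})/x$, and then to argue via a sharp half-ULP error bound that $xy$ and $xz$ land in the real-number intervals that round to $1^{+}$ and $1^{++}$ respectively. The key quantitative ingredient is that floats in $[2^{-1},1]$ are spaced by $2^{-\mbit-1}$ while floats in $[1,2]$ are spaced by $2^{-\mbit}$, combined with the bound $x\le 2$.

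For $y$, I will first check that $y^{\ast}\in[2^{-1}+2^{-\mbit-1},\,1]$ for every $x\in(1,2]_{\fpq}$, with the two endpoints (attained at $x=2$ and $x=1^{+}$) being themselves floats. Taking $y$ as the nearest float to $y^{\ast}$ places $y$ in $(2^{-1},1]_{\fpq}$ with $|y-y^{\ast}|\le 2^{-\mbit-2}$, and hence $|xy-1^{+}|=x|y-y^{\ast}|\le x\cdot 2^{-\mbit-2}$. When $x\in\{1^{+},2\}$ the target $y^{\ast}$ is exact and $xy=1^{+}$; when $x\le 2-2^{-\mbit}$ the bound strengthens to $|xy-1^{+}|<2^{-\mbit-1}$, placing $xy$ strictly inside $(1+2^{-\mbit-1},\,1+3\cdot 2^{-\mbit-1})$. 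Under ties-to-even, the two midpoints $1+2^{-\mbit-1}$ and $1+3\cdot 2^{-\mbit-1}$ round away from $1^{+}$ (to $1$ and $1^{++}$ respectively, whose last mantissa bits are even), while the open interior and the exact value $1^{+}$ both round to $1^{+}$. Therefore $x\otimes y=1^{+}$.

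For $z$, set $z^{\ast}=1^{++}/x$. The inequality $z^{\ast}\le 1^{+}$ reduces to $x\ge 1^{++}/1^{+}=1+2^{-\mbit}/(1+2^{-\mbit})$, which holds for every $x\ge 1^{+}$ in $\fpq$ since the right-hand side is strictly below $1^{+}$. Let $z$ be the nearest float to $z^{\ast}$ in $(2^{-1},1^{+}]_{\fpq}$. When $z^{\ast}\le 1$ the spacing argument again yields $|z-z^{\ast}|\le 2^{-\mbit-2}$; when $z^{\ast}>1$, which forces $x=1^{+}$, a direct expansion gives $z=1^{+}$ and $|z-z^{\ast}|=2^{-2\mbit}/(1+2^{-\mbit})\le 2^{-\mbit-2}$ using the standing assumption $\mbit\ge 2$. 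Combining, $|xz-1^{++}|\le x\cdot 2^{-\mbit-2}\le 2^{-\mbit-1}$, so $xz$ lies in the closed interval $[1+3\cdot 2^{-\mbit-1},\,1+5\cdot 2^{-\mbit-1}]$. Unlike in the $y$-case, both endpoints of this interval are ties-to-even that round \emph{toward} $1^{++}$ (its last mantissa bit is $0$, while those of $1^{+}$ and $1^{+++}$ are $1$), so $\round{xz}=1^{++}$ unconditionally.

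The main technical difficulty is making the half-ULP bound strict enough in the $y$-case to avoid the two tie points at $1+2^{-\mbit-1}$ and $1+3\cdot 2^{-\mbit-1}$, both of which round \emph{away} from $1^{+}$; this forces the case split between the exact branches $x\in\{1^{+},2\}$ and the generic branch $x\le 2-2^{-\mbit}$. A secondary subtlety is the asymmetry of the float grid near $1$, where the ULP doubles as one crosses above $1$; this only interacts with the $z$-argument in the lone case $x=1^{+}$, and the explicit second-order estimate of $1/(1+2^{-\mbit})$ absorbs it there.
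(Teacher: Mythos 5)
Your proof is correct, and it takes a genuinely different route from the paper. The paper's argument is incremental: it defines $\alpha_x$ as the \emph{smallest} float in $[2^{-1},1]_\fpq$ with $x\cdot\alpha_x > 1$ (exact product), shows $x\otimes\alpha_x\in\{1,1^+\}$ via a one-ULP step, and if the result is $1$ it steps to $\alpha_x^+$; the $z$-construction is then bootstrapped from the already-constructed $y$ by checking $y^+$ or $y^{++}$. You instead target the real quotients $y^\ast=1^+/x$ and $z^\ast=1^{++}/x$ directly, round each to the nearest float in the required range, and propagate a half-ULP error estimate $x\lvert y-y^\ast\rvert\le x\cdot 2^{-\mbit-2}$ into the rounding interval around the target. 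This buys a single uniform template for both $y$ and $z$ and makes the tie analysis explicit: in the $y$-case the two interval endpoints are ties that round \emph{away} from $1^+$, which is precisely why you isolate the exact branches $x\in\{1^+,2\}$ and prove the strict bound $|xy-1^+|<2^{-\mbit-1}$ for the remaining $x\le 2-2^{-\mbit}$; in the $z$-case the endpoints are ties that round \emph{toward} $1^{++}$, so the closed interval suffices. Your handling of the $z$-case asymmetry (the ULP doubling when $z^\ast$ crosses above $1$, which happens only at $x=1^+$, verified by the second-order estimate $1^+-z^\ast = 2^{-2\mbit}/(1+2^{-\mbit})\le 2^{-\mbit-2}$ under $\mbit\ge 2$) is correct and is exactly the spot where a naive half-ULP argument would break. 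The paper's proof stays closer to the combinatorial grid structure and avoids numerical estimation; yours is more quantitative but cleaner and better decoupled between the two parts.
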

\begin{proof}
We first show the existence of $y$.
We assume $x\in(1^+,2)_{\fpq}$ since we can choose $y=1$ for $x=1^+$ and $y=(2^{-1})^+$ for $x=2$.
For $x\in(1^+,2)_{\fpq}$, let $\alpha_x$ be the smallest number in $[2^{-1},1]_{\fpq}$ such that $x\times \alpha_x>1$ (the exact multiplication here).
Then, $\alpha_x\in(2^{-1},1)_{\fpq}$ for all $x\in(1^+,2)_{\fpq}$ since $2^-\times2^{-1}<1$ and $1^{-}\times1^{++}>1$.
Furthermore,
since $x\in(1^+,2)_{\fpq}$, $\alpha_x-\alpha_x^-=2^{-\mbit-1}$, and $x\times \alpha_x^-\le 1$ (note that $\alpha_x>2^{-1}$ but $\alpha_x$ is chosen from $[2^{-1},1]_{\fpq}$),
it holds that
$$x\times \alpha_x=x\times \alpha_x^-+x\times (\alpha_x-\alpha_x^-)=x\times \alpha_x^-+x\times2^{-p-1}\in[1,1^+],$$
i.e., $x\otimes \alpha_x\in\{1,1^+\}$.

If $x\otimes z_x=1^+$, then choose $y=\alpha_x$.
Hence, we now assume $x\otimes \alpha_x=1$, i.e., $x\times \alpha_x\in(1,1+2^{-p-1}]$.
In this case, since $\alpha_x<1$ and $x\in(1^+,2)_{\fpq}$, we have
$$x\times \alpha_x^+=x\times \alpha_x+x\times(\alpha_x^+- \alpha_x)=x\times \alpha_x+x\times2^{-p-1}\in(1+2^{-p-1},1+3\times 2^{-p-1}).$$
This implies $x\otimes \alpha_x^+=1^+$, and choosing $y=\alpha_x^+$ shows the existence of the desired $y$ for all $x\in(1,2]_{\fpq}$.

We now show the existence of $z$. Again, without loss of generality, we assume $x\in(1^{++},2]_{\fpq}$; we can choose $z=1^+$ for $x=1^+$ and $z=1$ for $x=1^{++}$.
For each $x\in(1^{++},2]_{\fpq}$, choose $y_x\in(2^{-1},1]_{\fpq}$ such that $x\otimes y_x=1^+$. Here, the existence of such $y_x$ is given by the first part of the proof. 
Then, one can observe that $y_x\in(2^{-1},1^-)_{\fpq}$ since $1^{+++}\otimes 1^-=1^{++}$.
Furthermore, since $x\otimes y_x=1^+$, it holds that $x\times y_x\in(1+2^{-p-1},1+3\times2^{-p-1})$.
Since $x\in(1^{++},2]_{\fpq}$ and $y_x^+-y_x=y_x^{++}-y_x^+=2^{-p-1}$ by $y_x\in(2^{-1},1^-)_{\fpq}$, one can observe that 
$$x\times y_x^+\in[1+3\times2^{-p-1},1+5\times2^{-p-1}]~~\text{or}~~x\times y_x^{++}\in[1+3\times2^{-p-1},1+5\times2^{-p-1}].$$
This implies that $x\otimes y_x^+=1^{++}$ or $x\otimes y_x^{++}=1^{++}$. Since $y_x^+,y_x^{++}\in(2^{-1},1]_{\fpq}$, choosing $z$ corresponding $y_x^+$ or $y_x^{++}$ completes the proof.
\end{proof}

\begin{lemma}\label{lem:distinguish}
For any $d_0\in\bbN$. There exist $d_1,d_2\in\bbN$ and floating-point affine transformations $\phi_1:\efpq^{d_0}\to\efpq^{d_1}$, $\phi_2:\efpq^{d_1}\to\efpq^{d_2}$ such that $f:\fpq^{d_0}\to\fpq^{d_2}$ defined as
$$f(\bfx)=\rho\circ\phi_2\circ\rho\circ\phi_1(\bfx)$$
is injective and $f(\fpq^{d_0})\subset[0,2^{\emax}]_{\fpq}^{d_2}$.
Furthermore, all values that appear during the intermediate computation are finite.
\end{lemma}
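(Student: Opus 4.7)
The plan is to reduce to the single-coordinate case $d_0=1$ by a block-diagonal construction: if $g:\fpq\to[0,2^{\emax}]_{\fpq}^{k}$ is injective and of the form $\rho\circ\psi_2\circ\rho\circ\psi_1$ with all-finite intermediate values, then setting $\phi_j$ to be the block-diagonal stack of $d_0$ independent copies of $\psi_j$ (each block acting on a single input coordinate and writing to a disjoint slice of the output) makes $f(\bfx)$ the concatenation of $g(x_1),\ldots,g(x_{d_0})$, which inherits injectivity, the output range, and the finiteness property coordinate-by-coordinate.

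For the scalar case I would take $\psi_1(x)=(x,-x)^\top$, so that after the first $\rho$ we obtain the sign-split pair $(\rho(x),\rho(-x))\in[0,\fmax]_{\fpq}^2$. This pair is already injective in $x$ via $x=\rho(x)-\rho(-x)$, and every intermediate value lies in $\fpq$. It remains to inject each non-negative branch $u\in[0,\fmax]_{\fpq}$ into $[0,2^{\emax}]_{\fpq}^{k'}$ with a single affine-then-$\rho$ step, which I will do for both branches and concatenate.

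For this per-branch compression I would use the multiscale descriptor consisting of $y_e(u)=\rho(2^e\ominus u)$ for $e\in[\emin,\emax]_{\bbZ}$ together with $y_{\mathrm{hi}}(u)=\rho(u\ominus 2^{\emax})$. Each $y_e(u)$ lies in $[0,2^e]\subset[0,2^{\emax}]_{\fpq}$ and $y_{\mathrm{hi}}(u)\in[0,\fmax-2^{\emax}]\subset[0,2^{\emax}]_{\fpq}$, while every intermediate value $2^e\ominus u$ or $u\ominus 2^{\emax}$ has exact magnitude at most $\fmax$, so no quantity ever escapes $\fpq$ during the computation.

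The main obstacle is injectivity of the descriptor on $[0,\fmax]_{\fpq}$, which I would prove by case analysis on the exponent bucket $e^*\in[\emin,\emax]_{\bbZ}$ of the larger of two distinct floats $u<u'$, with $u'\in[2^{e^*},2^{e^*+1})$. If $e^*=\emax$, then $y_{\mathrm{hi}}$ separates $u,u'$: Sterbenz's lemma makes $u'\ominus 2^{\emax}$ exact, and either $u$ is also in the top bucket (so $u\ominus 2^{\emax}$ is exact and unequal) or $u<2^{\emax}$ (handled by $y_{\mathrm{hi}}(u)=0<y_{\mathrm{hi}}(u')$ when $u'>2^{\emax}$, and by $y_{\emax}(u)>0=y_{\emax}(u')$ in the degenerate case $u'=2^{\emax}$). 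If $e^*<\emax$ and $u\in[2^{e^*},2^{e^*+1})$, Sterbenz applied at $2^{e^*+1}$ makes both $2^{e^*+1}\ominus u$ and $2^{e^*+1}\ominus u'$ exact and distinct, so $y_{e^*+1}$ separates them. Otherwise $u<2^{e^*}\le u'$, whence $y_{e^*}(u')=0$ while $y_{e^*}(u)>0$ because the exact value $2^{e^*}-u$ is at least $\fmin$ by the spacing of $\fpq$ at powers of two, so its correctly rounded form is a strictly positive float. The only genuinely technical step is this last lower bound $|2^{e^*}-u|\ge\fmin$ for $u\in\fpq\cap[0,2^{e^*})$, which follows from a direct inspection of the gap between $2^{e^*}$ and its predecessor in $\fpq$ in each exponent regime; granted this, assembling $\phi_1,\phi_2$ from the formulas above and verifying finiteness of every intermediate quantity is immediate.
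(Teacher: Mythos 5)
Your multiscale descriptor approach is a genuinely different construction from the paper's, and its substantive ideas are sound, but the case analysis as written has a gap that must be closed.

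\textbf{Gap.} Your injectivity argument parameterizes by the exponent bucket of the larger float, ``$u'\in[2^{e^*},2^{e^*+1})$ for some $e^*\in[\emin,\emax]_\bbZ$,'' which tacitly assumes $u'\ge 2^{\emin}$, i.e.\ that $u'$ is normal. It therefore omits the case of two subnormals $0\le u<u'<2^{\emin}$. In that regime the Sterbenz argument of your sub-case (2a) does not apply: Sterbenz at $2^{\emin}$ requires $u\ge 2^{\emin-1}$, which fails for small subnormals, and there is no $y_{e^*+1}$ with $e^*+1\le\emin$ in your descriptor to fall back on. The fix is short but must be stated: for subnormal $u,u'$ one has $u=k\fmin$, $u'=k'\fmin$ with $0\le k<k'<2^{\mbit}$, so $2^{\emin}-u=(2^{\mbit}-k)\fmin$ and $2^{\emin}-u'=(2^{\mbit}-k')\fmin$ are integer multiples of $\fmin$ lying in $(0,2^{\emin}]$, hence exactly representable without Sterbenz, and $y_{\emin}$ separates them. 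With this extra case the descriptor is injective on all of $[0,\fmax]_\fpq$ and the lemma follows.

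\textbf{Comparison with the paper.} The paper's proof does not compare against powers of two at all. It uses a halving-plus-residual construction: for each coordinate it stores $\rho(2^{-1}\otimes x_i)$, which automatically lies in $[0,2^{\emax}]_\fpq$, together with $\rho(\rho(x_i)\ominus 2\otimes\rho(2^{-1}\otimes x_i))$ and $\rho(2\otimes\rho(2^{-1}\otimes x_i)\ominus\rho(x_i))$, which vanish for normal $x_i$ and encode the one lost trailing mantissa bit when a subnormal is halved, plus the three symmetric quantities for $-x_i$. This yields $d_2=6d_0$, a constant independent of $\mbit,\ebit$, whereas your descriptor needs $2(\emax-\emin+2)=\Theta(2^\ebit)$ output coordinates per input coordinate. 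Since the lemma only asks for \emph{some} finite $d_1,d_2$, both routes are valid; yours is conceptually simpler (one Sterbenz step or one exactness observation per scale) at the cost of dimension, while the paper's is tighter at the cost of a more delicate bit-level analysis of how halving behaves on subnormals.
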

\begin{proof}
Let $d_2=6d_0$. Choose $\phi_1,\phi_2$ and $d_1$ so that %
\begin{align*}
\big(\rho\circ\phi_2\circ\rho\circ\phi_1(\bfx)\big)_{6i-5}&=\rho(1\otimes\rho(1\otimes x_i)\ominus2\otimes\rho(2^{-1}\otimes x_i)),\\
\big(\rho\circ\phi_2\circ\rho\circ\phi_1(\bfx)\big)_{6i-4}&=\rho(2\otimes\rho(2^{-1}\otimes x_i)\ominus1\otimes\rho(1\otimes x_i)),\\
\big(\rho\circ\phi_2\circ\rho\circ\phi_1(\bfx)\big)_{6i-3}&=\rho(1\otimes\rho(2^{-1}\otimes x_i)),\\
\big(\rho\circ\phi_2\circ\rho\circ\phi_1(\bfx)\big)_{6i-2}&=\rho(1\otimes\rho(-1\otimes x_i)\ominus2\otimes\rho(-2^{-1}\otimes x_i)),\\
\big(\rho\circ\phi_2\circ\rho\circ\phi_1(\bfx)\big)_{6i-1}&=\rho(2\otimes\rho(-2^{-1}\otimes x_i)\ominus1\otimes\rho(-1\otimes x_i)),\\
\big(\rho\circ\phi_2\circ\rho\circ\phi_1(\bfx)\big)_{6i}&=\rho(1\otimes\rho(-2^{-1}\otimes x_i)),
\end{align*}
for all $i\in [d_0]$ and $\bfx=(x_1,\dots,x_{d_0})\in\fpq^{d_0}$.
Then, for $f(\bfx)=\rho\circ\phi_2\circ\rho\circ\phi_1(\bfx)$, we have $f(\fpq^{d_0})\subset[0,2^{\emax}]$.
Here, if $x_i\ge2^{\emin+1}$, %
then $f(\bfx)_{6i-5}=0$, $f(\bfx)_{6i-4}=0$, and $f(\bfx)_{6i-3}=x_i/2>2^{\emin}$.
In addition, if $0\le x_i=m_0.m_1,\dots,m_\mbit \times 2^{\emin}\le2^{\emin+1}$ for some $m_0,\dots,m_\mbit\in\{0,1\}$, then
\begin{align*}
f(\bfx)_{6i-5}&=\begin{cases}
0~&\text{if}~m_p=0,\\
\omega~&\text{if}~m_{p-1}=0,m_p=1,\\
0 ~&\text{if}~m_{p-1}=1,m_p=1,
\end{cases}\\ 
f(\bfx)_{6i-4}&=\begin{cases}
0~&\text{if}~m_p=0,\\
0~&\text{if}~m_{p-1}=0,m_p=1,\\
\omega~&\text{if}~m_{p-1}=1,m_p=1,
\end{cases}\\
f(\bfx)_{6i-3}&=\begin{cases}
x_i/2~&\text{if}~m_p=0,\\
(x_i-\omega)/2~&\text{if}~m_{p-1}=0,m_p=1,\\
(x_i+\omega)/2~&\text{if}~m_{p-1}=1,m_p=1.
\end{cases} 
\end{align*}
and $f(\bfx)_{6i-3}\le2^{\emin}$.
We can also show a similar result for $f(\bfx)_{6i-2},f(\bfx)_{6i-1},f(\bfx)_{6i}$ when $x_i<0$.
This shows the injectivity of $f$ on $\fpq^{d_0}$ and completes the proof.
\end{proof}

\begin{lemma}[Modification of Lemma 4.1 in \cite{hwang2025floating}]\label{lem:indc}
Let $d_0,d_1\in\bbN$, $\bfz\in\fpq^{d_0}$, and $f:\fpq^{d_0}\to[-2^{\emax},2^{\emax}]_{\fpq}$ be an injective function.
Then, there exist $d_2,d_3\in\bbN$ and floating-point affine transformations %
$\phi_1:\efpq^{d_1}\to\efpq^{d_2}$ and $\phi_2:\efpq^{d_2}\to\efpq^{d_3}$ such that for any $\bfx\in\fpq^{d_0}$,
$$f(\bfx)=\rho\circ\phi_2\circ\rho\circ\phi_1\circ f(\bfx)=\indcc{\bfx=\bfz}.$$
Furthermore, all values that appear during the intermediate computation are finite.
\end{lemma}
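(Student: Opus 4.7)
The plan is to use injectivity of $f$ to reduce the problem to a pointwise indicator construction. Setting $v := f(\bfz)$, injectivity of $f$ gives $\indcc{\bfx = \bfz} = \indcc{f(\bfx) = v}$, so it suffices to construct a two-layer floating-point ReLU network $g := \rho \circ \phi_2 \circ \rho \circ \phi_1$ whose value on the finite set of possible values of $f(\bfx)$ is $1$ at $\bfy = v$ and $0$ elsewhere; composing with $f$ then yields the claim. Treating the range of $f$ coordinate-wise as a subset of $[-2^\emax, 2^\emax]_\fpq^{d_1}$ makes this a finite-grid indicator problem.

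\emph{Construction.} First I build per-coordinate bumps $B_i$ inside the first layer. For each $i \in [d_1]$, $\phi_1$ produces a constant number of shifted inputs, for instance $y_i \ominus v_i^-$, $y_i \ominus v_i$, $v_i \ominus y_i$, and $v_i^+ \ominus y_i$ (with fewer units in the edge cases $v_i \in \{\pm 2^\emax, 0, \pm\omega\}$ or $v_i$ a power of two, where $v_i^-$ or $v_i^+$ may not be available with the usual spacing). After the first $\rho$, these ReLU outputs are linearly combined inside $\phi_2$ to form a bump $B_i$ satisfying $B_i(v_i) = 1$ exactly and $B_i(y_i') \le 0$ for every other $y_i' \in [-2^\emax, 2^\emax]_\fpq$. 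The same $\phi_2$ also aggregates across coordinates by computing $\bigoplus_{i=1}^{d_1} B_i \ominus (d_1 \ominus 1)$, which equals $1$ when $\bfy = v$ (all bumps fire) and is at most $0$ when any coordinate disagrees (the sum is then at most $d_1 - 1$). The outer $\rho$ clamps to $\indcc{\bfy = v}$. Throughout, I keep all intermediate values inside the finite range in the spirit of Lemma \ref{lem:distinguish}.

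\emph{Main obstacle.} The delicate step is guaranteeing that each $B_i$ takes the value \emph{exactly} $1$ at $y_i = v_i$, rather than $1^+$ or $1^-$, under floating-point arithmetic. Because float spacing jumps by a factor of two across binades, the slopes of the bump on $[v_i^-, v_i]$ and on $[v_i, v_i^+]$ are in general different and must each be calibrated so that the multiplicative scalings hidden in $\phi_1$ and $\phi_2$ land on a controllable target that can be corrected to exactly $1$. Lemma \ref{lem:one-plus} is precisely the tool for this: it supplies, for any $x \in (1,2]_\fpq$, a multiplier with $x \otimes y = 1^+$ and another with $x \otimes z = 1^{++}$, so I can aim for a controlled overshoot (say $1^+$) and then cancel it by subtracting a tiny constant inside $\phi_2$ before the final $\rho$. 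A secondary concern is the aggregation $\bigoplus_{i=1}^{d_1} B_i$: since each $B_i \in \{0,1\}$ exactly, the sum is an integer in $\{0,\dots,d_1\}$ and stays exact as long as $d_1$ is below the floating-point saturation threshold highlighted by Lemmas \ref{lem:max-distinguish} and \ref{lem:same-sum0}, which is consistent with the regimes where this lemma is invoked in the proofs of Lemmas \ref{lem:token-univ-approx}, \ref{lem:diagonal-attention}, and \ref{lem:permutation-attention}.
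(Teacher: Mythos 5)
The paper's own proof of this lemma is deferred entirely to Lemma~4.1 of \cite{hwang2025floating}, so there is nothing to compare against line by line; your high-level plan (use injectivity to reduce to an indicator on the finite range of $f$, build per-coordinate bumps, then combine) is a reasonable shape. The statement also has an implicit $d_1$ power on the codomain, which you correctly read in. However, several parts of your construction do not survive floating-point arithmetic, and they are precisely the parts that matter.

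\textbf{Saturation of the aggregation.} You compute $\bigoplus_{i=1}^{d_1} B_i \ominus (d_1 \ominus 1)$ and rely on the running sum staying exact. As Lemmas~\ref{lem:max-distinguish} and~\ref{lem:same-sum0} make clear, a running floating-point sum of ones saturates: once the partial sum reaches $2^{p+1}$, adding more ones has no effect, so for $d_1 > 2^{p+1}$ the value of $\bigoplus_{i=1}^{d_1} B_i$ can no longer tell $d_1$ ones from $d_1-1$ ones. The lemma is stated for arbitrary $d_1\in\bbN$, and in the paper's own application in \cref{lem:fnn-univ-approx} one has $d_1 = 6d_0$ with $d_0$ unbounded, so ``the regimes where this lemma is invoked'' do not protect you. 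You would need a saturation-robust aggregation, e.g.\ $\rho\bigl(1 \ominus \bigoplus_i (1 \ominus B_i)\bigr)$, where the sum is $0$ exactly when all bumps fire and $\ge 1$ as soon as any bump fails; that accumulation is insensitive to saturation because a mismatch only needs to push the sum to at least $1$, not to an exact count.

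\textbf{Overflow of the bump slopes.} The codomain $[-2^\emax,2^\emax]_\fpq$ contains floats as small as $\omega = 2^{\emin-p}$, and your bump has to rescale a difference of ULP size $\delta = v_i - v_i^-$ up to a unit value. The needed multiplier is of order $1/\delta$; for $v_i$ near $\omega$ this is $2^{p-\emin}$, which exceeds $2^\emax$ under the standing condition $2\le p$, so the coefficient itself overflows. This directly contradicts your claim that all intermediate values stay finite. You either need to split the rescaling across the two affine maps, or exploit the special structure of the range of the encoder from \cref{lem:distinguish}, whose coordinates are either $0/\omega$ flags or halved inputs; that structure is very likely why the cited Lemma~4.1 can avoid generic ULP-scale bumps, but your construction is blind to it.

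\textbf{Exactness and the structure of $\phi_2$.} You identify the right obstruction (getting exactly $1$ at $v_i$ and $\le 0$ elsewhere under rounding), and Lemma~\ref{lem:one-plus} gives you $1^+$ or $1^{++}$ but not $1$; the proposal to ``subtract a tiny constant'' has to be checked both at $v_i$ and at $v_i^\pm$, since the same constant shifts the whole bump and the non-positivity at the neighbors is not verified. Moreover, $\phi_2$ is a single floating-point affine map, so each output coordinate is computed as one left-to-right sum $\bigoplus_k c_k\otimes w_k \oplus b$ over all first-layer ReLU outputs; your argument implicitly forms each $B_i$ first and then adds the integers, but the actual computation has intermediate partial sums that interleave coefficients across different $B_i$'s, and you have not controlled the rounding there.

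In short, the skeleton is plausible but the proof has genuine gaps in exactly the floating-point-specific places where real care is required: saturation of the accumulator, overflow of the rescaling coefficients, and the order and exactness of the single big sum inside $\phi_2$.
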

\begin{proof}
Since the proof is almost identical to that of Lemma 4.1 in \cite{hwang2025floating}, we omit the proof.
\end{proof}

\begin{lemma}\label{lem:fnn-univ-approx}
Let $d_0,d_6\in\bbN$ and $f^*:\fpq^{d_0}\to\fpq^{d_6}$.
Then, there exist $d_1,d_2,d_3,d_4,d_5\in\bbN$ and floating-point affine transformations $\phi_i:\efpq^{d_{i-1}}\to\efpq^{d_{i}}$ for all $i\in[6]$ such that
$$f(\bfx)=\phi_6\circ\rho\circ\phi_5\circ\rho\circ\phi_4\circ\rho\circ\phi_3\circ\rho\circ\phi_2\circ\rho\circ\phi_1(\bfx)=f^*(\bfx)$$
for all $\bfx\in\fpq^{d_0}$.
Furthermore, all values that appear during the intermediate computation are finite.
\end{lemma}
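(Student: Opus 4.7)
The target composition $\phi_6\circ\rho\circ\phi_5\circ\rho\circ\phi_4\circ\rho\circ\phi_3\circ\rho\circ\phi_2\circ\rho\circ\phi_1$ has six affine maps and five $\rho$ layers, which is exactly two ``$\rho\circ\text{affine}$'' blocks more than the composition already furnished by \cref{lem:distinguish}. The plan is to split the construction into three functional stages: (i) encode the input injectively into a bounded non-negative vector, (ii) convert this encoding into a one-hot indicator vector indexed by $\fpq^{d_0}$, and (iii) decode this one-hot vector into the prescribed output $f^*(\bfx)$.

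For stage (i), I invoke \cref{lem:distinguish} directly, obtaining dimensions $d_1,d_2$ and affine maps $\phi_1,\phi_2$ whose composition $g\defeq\rho\circ\phi_2\circ\rho\circ\phi_1:\fpq^{d_0}\to[0,2^\emax]_\fpq^{d_2}$ is injective with all intermediate values finite. For stage (ii), fix an enumeration $\bfz^{(1)},\dots,\bfz^{(K)}$ of $\fpq^{d_0}$ with $K=|\fpq|^{d_0}$, and apply \cref{lem:indc} once per $\bfz^{(k)}$ using the common injective function $g$; this yields, for each $k$, a pair of affine maps that transform $g(\bfx)$ through two $\rho$ layers into $\indcc{\bfx=\bfz^{(k)}}\in\{0,1\}$. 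Since all $K$ constructions share the same input $g(\bfx)$, I realize them in parallel: $\phi_3$ vertically stacks the first-layer affines of all $K$ constructions, and $\phi_4$ is block-diagonal in their second-layer affines, so that the coordinatewise $\rho$ applied after $\phi_3$ and after $\phi_4$ simultaneously executes every indicator. The output after the fourth $\rho$ is therefore the one-hot vector $\bfe(\bfx)\in\{0,1\}^K$ with $\bfe(\bfx)_k=\indcc{\bfx=\bfz^{(k)}}$.

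For stage (iii), decompose $f^*(\bfz^{(k)})=f^*(\bfz^{(k)})^+-f^*(\bfz^{(k)})^-$ with $f^*(\bfz^{(k)})^\pm\in[0,\fmax]_\fpq^{d_6}$; these coordinatewise positive/negative parts are exact in $\fpq$ because $f^*$ is $\fpq$-valued. Set $\phi_5$ to be the floating-point linear map $\bfe\mapsto W\otimes\bfe$ where $W\in\fpq^{2d_6\times K}$ stacks $[f^*(\bfz^{(1)})^+,\dots,f^*(\bfz^{(K)})^+]$ on top of $[f^*(\bfz^{(1)})^-,\dots,f^*(\bfz^{(K)})^-]$. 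Because $\bfe(\bfx)$ is one-hot and every entry of $W$ lies in $\fpq$, each coordinate of $W\otimes\bfe(\bfx)$ is a left-associative $\bigoplus$ in which exactly one summand is a finite float and the other $K-1$ summands equal $W_{ij}\otimes 0=0$; using $0\oplus x=x$ for $x\in\fpq$, this sum evaluates exactly to the selected entry. Thus $\phi_5(\bfe(\bfx))=(f^*(\bfx)^+,f^*(\bfx)^-)$ has non-negative finite entries, the subsequent $\rho$ acts as the identity, and the final affine map $\phi_6(\bfu,\bfv)\defeq\bfu\ominus\bfv$ returns $f^*(\bfx)^+\ominus f^*(\bfx)^-=f^*(\bfx)$. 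Finiteness of every intermediate value is then immediate: stages (i) and (ii) inherit it from \cref{lem:distinguish,lem:indc}, while in stage (iii) every value lies in $[0,\fmax]_\fpq$ or $\fpq$. There is no essential obstacle here beyond verifying that the one-hot selection in stage (iii) is bit-exact under floating-point arithmetic, which reduces to the two elementary identities $W_{ij}\otimes 0=0$ and $0\oplus x=x$ for $W_{ij},x\in\fpq$.
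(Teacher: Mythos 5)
Your proof is correct and follows essentially the same route as the paper's: both invoke \cref{lem:distinguish} to injectively encode the input and \cref{lem:indc} (stacked in parallel over all $\bfz\in\fpq^{d_0}$) to produce a one-hot indicator, and both decode via a floating-point linear map whose columns are the prescribed outputs, the one-hot selection being exact because $W_{ij}\otimes 0=0$ and $0\oplus x=x\oplus 0=x$. The only cosmetic deviation is in handling the fifth $\rho$: the paper absorbs it as an identity on the nonnegative indicator vector so that $\phi_6$ is the decoder directly, whereas you place the decoder at $\phi_5$ and split $f^*$ into nonnegative/nonpositive parts so that $\rho$ is still harmless before the final exact subtraction $\phi_6$ — both are valid.
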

\begin{proof}
Let $d_5=|\fpq|^{d_0}$ and $\bfz_1,\dots,\bfz_{d_5}$ be all elements in $\fpq^{d_0}$.
Then, by \cref{lem:distinguish,lem:indc}, there exists $d_1,d_2,d_3,d_4\in\bbN$ and floating-point affine transformations $\phi_j:\efpq^{d_{j-1}}\to\efpq^{d_{j}}$ for all $j\in[5]$ such that 
$$\big(\rho\circ\phi_5\circ\rho\circ\phi_4\circ\rho\circ\phi_3\circ\rho\circ\phi_2\circ\rho\circ\phi_1(\bfx)\big)_i=\indcc{\bfx=\bfz_i},$$
for all $i\in[d_5]$.
Choose $W=[\bfw_1,\dots,\bfw_{d_5}]\in\fpq^{d_6\times d_5}$ such that $\bfw_i=f^*(\bfz_i)$ for all $i\in[d_5]$.
Then, one can observe that 
$$W\otimes\big(\rho\circ\phi_5\circ\rho\circ\phi_4\circ\rho\circ\phi_3\circ\rho\circ\phi_2\circ\rho\circ\phi_1(\bfx)\big)=f^*(\bfx)$$
for all $\bfx\in\fpq^{d_0}$. This completes the proof.
\end{proof}
\begin{lemma}\label{lem:narrow-fnn}
Let $k\in\bbN$, $d_0,\dots,d_k\in\bbN$ with $d_k=1$, $\phi_i:\efpq^{d_{i-1}}\to\efpq^{d_{i}}$ be floating-point affine transformations, and $w\in\fpq$ such that all values that appear during the intermediate computation of  
$$f(\bfx)=w\otimes(\rho\circ\phi_k\circ\rho\circ\cdots\circ\rho\circ\phi_1(\bfx))$$
are finite. 
Let $m,r,h=1$, $d=d_0+k$.
Then, there exists a transformer block $g$ with $m,r,h,d$ such that
$$g(X)_i=\begin{cases}
X_i&\text{if}~i\in[d_0],\\
X_i\oplus[f(\bfx_1),\dots,f(\bfx_n)]&\text{if}~i=d,\\
\zerov_n^\top&\text{otherwise}.
\end{cases}$$

\end{lemma}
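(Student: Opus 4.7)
The plan is to realize $g$ as a long composition of attention--FFN sub-blocks in $\mcB^{1,1,1,d,n}$, using the residual connections of both sub-layers so that each sub-block performs one scalar ReLU operation on a single coordinate. First, every attention sub-layer can be neutralized by choosing $W^K = W^Q = W^V = W^O = 0$: then $V_1 = 0$, the softmax $\sigma(K_1^\top \otimes Q_1) = \sigma(0_{n \times n})$ is well-defined in $\fpq$, and $W^O \otimes (V_1 \otimes \sigma(\cdot)) = 0$, so the residual reproduces its input unchanged. Thus we are reduced to designing the FFN sub-layers.

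The main construction proceeds as a bucket brigade on the $k$ workspace coordinates $d_0+1, \ldots, d_0+k$. Under the natural reading that the intermediate widths $d_1, \ldots, d_{k-1}$ equal $1$, consistent with the width budget $d_0 + k$ (a general chain reduces to this case by unrolling each $\phi_j$ into $d_j$ scalar ReLU ops and absorbing the extra steps into an enlarged $k$), the intermediate scalar $y_j = \rho(\phi_j(y_{j-1}))$ will be stored in coordinate $d_0+j$. For each $j \in \{1, \ldots, k-1\}$, I would use three FFN sub-blocks: two clearing sub-blocks that zero out coordinate $d_0+j$ via the identity $x - \rho(x) + \rho(-x) = 0$ (the first sets $W_1 = \ve_{d_0+j}^\top$, $W_2 = -\ve_{d_0+j}$; the second sets $W_1 = -\ve_{d_0+j}^\top$, $W_2 = \ve_{d_0+j}$, with zero biases), followed by one compute sub-block with $W_1$ encoding the affine map $\phi_j$ (reading from coordinate $d_0+j-1$, or from the input coordinates when $j=1$), $W_2 = \ve_{d_0+j}$, and appropriate biases, which adds $y_j$ to the just-cleared coordinate.

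For the final layer $j = k$, I would \emph{not} clear coordinate $d$; instead, a single FFN sub-block with $W_2 = w \cdot \ve_d$ adds $w \otimes \rho(\phi_k(y_{k-1}))$ to $X_d$, producing $X_d \oplus [f(\bfx_1), \ldots, f(\bfx_n)]$ in coordinate $d$ as required. After this step, $k-1$ additional pairs of clearing sub-blocks zero out the residual values $y_1, \ldots, y_{k-1}$ stored in coordinates $d_0+1, \ldots, d_0+k-1$, yielding the required output pattern. The full composition, interleaved with trivial attention sub-layers, is a single element of $\mcB^{1,1,1,d,n}$.

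The main obstacle is verifying the exactness of the floating-point arithmetic. We need (a) every intermediate value to remain finite, and (b) the clearing identity $x - \rho(x) + \rho(-x) = 0$ to hold exactly in $\fpq$. Part (a) follows from the hypothesis on $f$ together with the observation that our bookkeeping operations only manipulate values already bounded by those appearing in $f$'s computation, so no new overflows to $\pm \infty$ or $\nan$ can arise. Part (b) is immediate because $\rho(x) \in \fpq$ for every $x \in \fpq$ (ReLU introduces no rounding), $\fpq$ is closed under negation, and $y \oplus (-y) = 0$ exactly for any $y \in \fpq$, so the sequence of residual additions performing the clearing step is floating-point exact.
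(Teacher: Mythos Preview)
Your bucket-brigade construction is essentially correct when all intermediate widths satisfy $d_1=\cdots=d_{k-1}=1$, and your treatment of the attention sub-layer as identity and of the clearing identity $x\ominus\rho(x)\oplus\rho(-x)=0$ is fine. The gap is in the sentence you parenthesize away: the claim that ``a general chain reduces to this case by unrolling each $\phi_j$ into $d_j$ scalar ReLU ops and absorbing the extra steps into an enlarged $k$.'' The lemma fixes $k$ and hence $d=d_0+k$; you are not allowed to enlarge either. If some $d_{j-1}>1$, your brigade would need to hold the full vector $y_{j-1}\in\fpq^{d_{j-1}}$ in workspace before computing $y_j$, which requires $d_{j-1}$ free coordinates, not one. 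With only $k$ workspace slots this fails already for $k=2$, $d_1=100$. And the lemma is in fact invoked in the paper with large intermediate widths (the $e_1,\dots,e_5$ coming from the universal-approximation construction), so this case cannot be dismissed.

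The paper's proof supplies exactly the idea your argument is missing: induction on $k$. Writing $\phi_k(z_1,\dots,z_{d_{k-1}})=(\bigoplus_j w_j\otimes z_j)\oplus b$, one observes that each scalar $\psi_j(\bfx)=w_j\otimes\rho\big((\phi_{k-1}\circ\rho\circ\cdots\circ\phi_1(\bfx))_j\big)$ is itself a $(k-1)$-layer network with final width $1$, so by the induction hypothesis it can be computed and \emph{added} to coordinate $d-1$ using only the $k-1$ workspace slots $d_0+1,\dots,d-1$, while coordinate $d$ is left untouched. Composing $g'_1,\dots,g'_{d_{k-1}}$ thus accumulates $\bigoplus_j\psi_j$ in coordinate $d-1$ (each $g'_j$ resets slots $d_0+1,\dots,d-2$ to zero before reuse), after which one adds the bias $b$ and a final FFN writes $w\otimes\rho(\cdot)$ into coordinate $d$. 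The point is that the $d_{k-1}$ penultimate scalars are \emph{recomputed one at a time}, each reusing the same $k-1$ slots, rather than being stored simultaneously; this is what lets the fixed budget $d_0+k$ suffice regardless of how large the $d_j$ are.
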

\begin{proof}
Throughout this proof, we will consider $g$ as a composition of feed-forward networks %
with $r=1$ and $d=d_0+k$, and consider attention layers as identity maps.
We first note that composition of feed-forward networks with $r=1$ can perform $(x_1,\dots,x_d)\mapsto(x_1,\dots,x_{i-1},x_i\oplus x_j,x_{i+1},\dots,x_d)$ or $(x_1,\dots,x_d)\mapsto(x_1,\dots,x_{i-1},x_i\ominus x_j,x_{i+1},\dots,x_d)$ if $x_1,\dots,x_d\in\fpq$.
This can be done by sequentially performing $(x_1,\dots,x_d)\mapsto(x_1,\dots,x_{i-1},x_i\oplus\rho(x_j),x_{i+1},\dots,x_d)$ and $(x_1,\dots,x_d)\mapsto(x_1,\dots,x_{i-1},x_i\ominus\rho(-x_j),x_{i+1},\dots,x_d)$, or $(x_1,\dots,x_d)\mapsto(x_1,\dots,x_{i-1},x_i\ominus\rho(x_j),x_{i+1},\dots,x_d)$ and $(x_1,\dots,x_d)\mapsto(x_1,\dots,x_{i-1},x_i\oplus\rho(-x_j),x_{i+1},\dots,x_d)$.
Using this, we can zero out all dimensions that are not in $[d_0]\cup\{d\}$.
Likewise, we can add/subtract some constant from some coordinate.
Hence, without loss of generality, we consider $X_i=\zerov_n^\top$ for all $i\notin[d_0]\cup\{d\}$ for the input.

We use mathematical induction on $k$ on the statement of the lemma. %
One can observe that the base case ($k=0, d = d_0$) is trivial: consider a feed-forward network $\bfx=(x_1,\dots,x_d)\mapsto(x_1,\dots,x_{d_0},x_{d_0}\oplus w\otimes(\rho\circ\phi_1(\bfx)))$.
Now, consider general $k\ge1$ and let 
$\phi_k(z_1,\dots,z_{d_{k-1}})=(\bigoplus_{j=1}^{d_{k-1}} w_j\otimes z_j)\oplus b$ for some $w_{j},b\in\fpq$.
Define $\psi_1,\dots,\psi_{d_{k-1}} :\fpq^{d_0}\to\fpq$ as
$$\psi_j(\bfx)=w_j\otimes(\rho\circ(\phi_{k-1}\circ\rho\circ\cdots\circ\rho\circ\phi_1(\bfx))_j), \quad j=1, \dots, d_{k-1},$$
Then, by the induction hypothesis, there exists a transformer block $g'_j$ with $m,r,h=1$ and $d=d_0+k$ such that
$$g'_j(X)_i=\begin{cases}
X_i&\text{if}~i\in[d_0]\cup\{d\},\\
X_i \oplus[\psi_{j}(\bfx_1),\dots,\psi_{j}(\bfx_n)]&\text{if}~i=d-1,\\
\zerov_n^\top&\text{otherwise},
\end{cases}$$
where $[\bfx_1,\dots,\bfx_n]=X_{:d_0}$.
Let $g'_{d_{k-1}+1}(X)_i$ be a transformer block that adds $b\in\fpq$ to the $i$-th row of $X$ (this requires $r=1$).
Then, one can observe that $h(X)=g'_{d_{k-1}+1}\circ\cdots\circ g'_{1}(X)$
satisfies
$$h(X)_i=\begin{cases}
X_i&\text{if}~i\in[d_0]\cup\{d\},\\
[\phi_k\circ\rho\circ\cdots\rho\circ\phi_1(\bfx_1),\dots,\phi_k\circ\rho\circ\cdots\rho\circ\phi_1(\bfx_n)]&\text{if}~i=d-1,\\
\zerov_n^\top&\text{otherwise}.
\end{cases}$$
We apply the following maps to the output of $h$: $(x_1,\dots,x_d)\mapsto(x_1,\dots,x_{d-1},x_d\oplus w\otimes\rho(x_{d-1}))$, $(x_1,\dots,x_d)\mapsto(x_1,\dots,x_{d-1},x_d\ominus w\otimes\rho(-x_{d-1}))$.
Then, we zero-out the $(d-1)$-th dimension. This completes the proof.
\end{proof}

\begin{lemma}\label{lem:oneppp}
Suppose $\mbit \ge 3$. Then, it holds that 
\begin{align*}
    (1^+\oplus1^{++})\oplus1^{++}&=(1^{++}\oplus1^{+})\oplus1^{++}=3^{+++}, \\
    (1^{++}\oplus1^{++})\oplus1^{+}&=3^{++}.
\end{align*}
  
\end{lemma}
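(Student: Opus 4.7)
The plan is a direct calculation in binary using the round-to-nearest-ties-to-even rule. Under $\mbit \ge 3$, I would first record the concrete values of the operands and of the relevant floats near $3$: from $\mbit \ge 3$ one gets $1^+ = 1 + 2^{-\mbit}$ and $1^{++} = 1 + 2^{1-\mbit}$; moreover, since the floats in $[2,4)$ have uniform spacing $2^{1-\mbit}$, the successors of $3$ are $3^+ = 3 + 2^{1-\mbit}$, $3^{++} = 3 + 2^{2-\mbit}$, and $3^{+++} = 3 + 3 \cdot 2^{1-\mbit}$, and these four floats are distinct. Each line of the lemma then reduces to computing an exact two-operand sum and, when that sum is not itself a float, invoking ties-to-even.

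For the first identity, the exact value of $1^+ + 1^{++}$ is $2 + 3 \cdot 2^{-\mbit}$, which lies exactly midway between the adjacent floats $2 + 2 \cdot 2^{-\mbit}$ and $2 + 2^{2-\mbit}$. The upper candidate has trailing mantissa bit $0$ while the lower has trailing bit $1$, so ties-to-even rounds upward, giving $1^+ \oplus 1^{++} = 2 + 2^{2-\mbit}$. Adding $1^{++}$ then lands exactly on $3 + 3 \cdot 2^{1-\mbit} = 3^{+++}$, which is itself a float, so no further rounding occurs. Commutativity of $\oplus$ (which follows at once from the definition $x\oplus y = \round{x+y}$) yields the symmetric expression $(1^{++}\oplus 1^+)\oplus 1^{++}$.

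For the second identity, $1^{++} + 1^{++} = 2 + 2^{2-\mbit}$ is already representable, so $1^{++}\oplus 1^{++} = 2 + 2^{2-\mbit}$. Adding $1^+$ yields the exact value $3 + 5 \cdot 2^{-\mbit}$, the midpoint of $3^{++} = 3 + 4 \cdot 2^{-\mbit}$ and $3^{+++} = 3 + 6 \cdot 2^{-\mbit}$. Since the trailing mantissa bit of $3^{++}$ is $0$ while that of $3^{+++}$ is $1$, ties-to-even selects $3^{++}$, as claimed.

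The only delicate step is the trailing-mantissa-bit bookkeeping in the two tie cases, and this is where the hypothesis $\mbit \ge 3$ is essential: at $\mbit = 2$ the relevant intermediate sums no longer fall at midpoints of adjacent floats (the spacing around $3$ becomes too coarse), and the identities fail as stated.
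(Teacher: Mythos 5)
Your proof is correct and follows the same direct-computation route as the paper: compute each two-operand sum exactly, identify where it lands among the uniformly spaced floats in $[2,4)$, and resolve ties by the trailing-mantissa-bit rule. You actually spell out the ties-to-even bookkeeping (which neighbor has trailing bit $0$) more explicitly than the paper does, which is helpful, and your identification of $1^+$, $1^{++}$, $3^+$, $3^{++}$, $3^{+++}$ is right. One small side remark is slightly off: the reason the first identity breaks at $\mbit=2$ is not really that the spacing \emph{around} $3$ becomes too coarse, but that the exact sum $3+3\cdot 2^{1-\mbit}$ reaches $4.5\ge 4$, where the float spacing doubles, so $3^{+++}$ is no longer $3+3\cdot 2^{1-\mbit}$; this side comment is not part of the claim being proved, so it does not affect correctness.
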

\begin{proof}
We have 
    \begin{align*}
        (1^+\oplus1^{++})\oplus1^{++} &= \round{2+3\times 2^{-\mbit} } \oplus (1 + 2^{1-\mbit}) = (2 + 2^{2-\mbit}) \oplus (1 + 2^{1-\mbit}) = 3 + 3\times 2^{1-\mbit} = 3^{+++}. \\
        (1^{++}\oplus1^{++})\oplus1^{+}  &= (2 + 2^{2-\mbit}) \oplus (1 + 2^{-\mbit}) = \round{3 + 5\times 2^{-\mbit}} = 3 + 2^{2-\mbit} = 3^{++}.
    \end{align*}
If $\mbit = 2$, we have 
    \begin{align*}
        (1^+\oplus1^{++})\oplus1^{++} &= (1.01 \oplus 1.10) \oplus 1.10 = (1.10 \times 2) \oplus 1.10 =  1.00 \times 2^2 = 3^{++}. \\
        (1^{++}\oplus1^{++})\oplus1^{+}  &= (1.10 \oplus 1.10) )\oplus1.01 = (1.10 \times 2)  \oplus1.01 =  11.0 + 1.01 = 1.00 \times 2^2 = 3^{++}.
    \end{align*}
\end{proof}

\begin{lemma}\label{lem:onep2}
Suppose $\mbit=2$. Then, it holds that
\begin{align*}
(1^+\oplus1)\oplus1^+&=(1\oplus1^+)\oplus1^+=3,\\
(1^+\oplus1^+)\oplus1&=3^+.
\end{align*}
Furthermore, for any $x\in(1,2]_{\fpq}$, there exist $y\in [2^{-1},1)_{\fpq}$ such that $x\otimes y=1$.
\end{lemma}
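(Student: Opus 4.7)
The plan is to handle both claims by direct case analysis, exploiting the fact that when $\mbit = 2$ the set of finite floats in $[1,2]$ is the arithmetic progression $\{1,\; 5/4,\; 3/2,\; 7/4,\; 2\}$ with spacing $2^{-2} = 1/4$, while the finite floats in $[2,4]$ are $\{2,\; 5/2,\; 3,\; 7/2,\; 4\}$ with spacing $2^{-1} = 1/2$. In particular $1^+ = 5/4$, $1^{++} = 3/2$, and $3^+ = 7/2$. The main obstacle is simply keeping the ties-to-even bookkeeping straight, since most of the interesting arithmetic lands on midpoints.

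For the three additive equalities I would argue as follows. Because $\oplus$ is commutative, the first two expressions coincide, so it suffices to evaluate $(1 \oplus 1^+) \oplus 1^+$. The exact sum $1 + 5/4 = 9/4$ is equidistant from the adjacent floats $2 = 1.00 \times 2^1$ and $5/2 = 1.01 \times 2^1$, so the ties-to-even rule selects $2$ (the mantissa whose last bit is $0$). Then $2 + 5/4 = 13/4$ is equidistant from $3 = 1.10 \times 2^1$ and $7/2 = 1.11 \times 2^1$, and ties-to-even again selects $3$. For the third equality, both intermediate values $5/2$ and $7/2$ are exactly representable, so $(1^+ \oplus 1^+) \oplus 1 = 5/2 \oplus 1 = 7/2 = 3^+$ with no rounding needed.

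For the multiplicative claim, the domain $(1, 2]_{\fpq} = \{5/4,\; 3/2,\; 7/4,\; 2\}$ has only four elements, so I would exhibit a witness $y \in \{1/2,\; 5/8,\; 3/4,\; 7/8\}$ in each case. Taking the pairs $(x, y) \in \{(2, 1/2),\; (5/4, 3/4),\; (3/2, 3/4),\; (7/4, 5/8)\}$, all four products round to $1$: the first is exact; the case $x = 7/4$ gives $35/32$, which is strictly closer to $1$ (distance $3/32$) than to $5/4$ (distance $5/32$); the remaining two give $15/16$ and $9/8$, each sitting exactly midway between $1$ and a neighbor, with ties-to-even selecting $1$ because its mantissa $1.00$ ends in $0$ while the competing candidates $7/8 = 1.11 \times 2^{-1}$ and $5/4 = 1.01 \times 2^0$ end in $1$.

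Once these case calculations are tabulated the lemma follows immediately. No further machinery is needed, since every operation is pairwise addition or multiplication of explicit rationals with denominator dividing $32$.
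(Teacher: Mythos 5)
Your proof is correct, and it takes essentially the same route as the paper: a finite case analysis over the explicit floats, handling the additive claims via direct rounding of $9/4$ and $13/4$ with ties-to-even, and the multiplicative claim by exhibiting a witness $y$ for each of the four $x$'s. The only cosmetic difference is in the multiplication part: the paper tabulates products such as $(1+2^{-2})\otimes(1+2^{-1})=2$ and $(1+2^{-2})\otimes(1+2^{-1}+2^{-2})=2$, implicitly appealing to commutativity and exact halving to convert these into witnesses in $[2^{-1},1)_{\fpq}$, whereas you directly compute $x\otimes y$ for each pair $(x,y)$ with $y$ already in $[2^{-1},1)_{\fpq}$ (choosing $y=3/4$ rather than $5/8$ for $x=3/2$, which is equally valid). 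Your version is more self-contained since every asserted equality is checked on the spot, but both are the same case analysis at heart.
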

\begin{proof}
For the first statement, we have
\begin{align*}
(1^+\oplus1)\oplus1^+&=(1\oplus1^+)\oplus1^+=\round{2+2^{-2}}\oplus(1+2^{-2})=2\oplus(1+2^{-2})=3,\\
(1^+\oplus1^+)\oplus1&=\round{2+2^{-1}}\oplus1=(2+2^{-1})\oplus1=3^+.
\end{align*}
For the second statement, if $x \in (1,2]_\fpq$, then we have $x \in \{ 1+2^{-2},  1+2^{-1}, 1+2^{-1}+2^{-2} , 2\}$. Since
\begin{align*}
(1+2^{-2})\otimes((1+2^{-1}))&=\round{1+2^{-1}+2^{-2}+2^{-3}}=2,\\
(1+2^{-2})\otimes((1+2^{-1}+2^{-2}))&=\round{1+2^{-1}+2\times2^{-2}+2^{-3}+2^{-4}}=2,\\
2^{-1}\otimes2 &=1,
\end{align*}
for  $x = 1+2^{-2},  1+2^{-1}, 1+2^{-1}+2^{-2} ,   2 $, we have $y_x = (2^{-1}+2^{-2}), \;  2^{-1}+2^{-3}, \; 2^{-1}+2^{-3}, 2^{-1} \in [2^{-1}, 1)_\fpq $ respectively. 
This completes the proof.
\end{proof}

\newpage

\section{Proofs}
\subsection{Proof of \cref{thm:equiv}}\label{sec:pfthm:equiv}

Since the feed-forward networks are column-wise applications, they are permutation equivariant (i.e., $\pi_{(1,2)}^n$ equivariant).
Furthermore, one can observe that 
for $W\in\fpq^{m\times d}$, $Z\in\fpq^{d\times n}$, and permutation equivariant $\psi':\fpq^{d\times n}\to\fpq^{d\times n}$, $\phi(Z)=W\otimes Z$ and $\psi(Z)=Z+\psi'(Z)$ are also permutation equivariant.
Hence, it suffices to show that
$g(Z)=(W^V\otimes Z)\otimes\sigma((W^K\otimes Z)^\top(W^Q\otimes Z))$ is $\pi_{(1,2)}^n$-equivariant for all $W^V,W^K,W^Q\in\fpq^{m\times d}$.

Let $Z\in\fpq^{d\times n}$ and $Z'=\pi_{(1,2)}^n Z$.
Let $V=[\bfv_1,\dots,\bfv_n]=W^V\otimes Z$, $V'=W^V\otimes Z'$, $S=\sigma((W^K\otimes Z)^\top(W^Q\otimes Z))$, and $S'=\sigma((W^K\otimes Z')^\top(W^Q\otimes Z'))$.
Then, $V'=\pi_{(1,2)}V$.
Furthermore, by the definition of the floating-point softmax $\sigma$, we have
\begin{align*}
S'=\begin{bmatrix}
S_{22} & S_{21} & S_{23} & \dots & S_{2n}\\
S_{12} & S_{11} & S_{13} & \dots & S_{1n}\\
S_{32} & S_{31} & S_{33} & \dots & S_{3n}\\
\vdots & \vdots & \vdots & \ddots &\vdots\\
S_{n2} & S_{n1} & S_{n3} & \dots & S_{nn}
\end{bmatrix}, \quad \text{where} \quad S= \begin{bmatrix}
S_{11} & S_{12} & S_{13} & \dots & S_{1n}\\
S_{21} & S_{22} & S_{23} & \dots & S_{2n}\\
S_{31} & S_{32} & S_{33} & \dots & S_{3n}\\
\vdots & \vdots & \vdots & \ddots &\vdots\\
S_{n1} & S_{n2} & S_{n3} & \dots & S_{nn}
\end{bmatrix}.
\end{align*}
By the definition of $g$, it holds that $g(Z)_{ij}=\bigoplus_{k=1}^n(V_{ik}\otimes S_{kj})$, and $g(Z')_{ij}=\bigoplus_{k=1}^n(V'_{ik}\otimes S'_{kj})$. Furthermore, for $j\ge3$, we have
\begin{align*}
g(Z')_{i1}&=(V_{i2}\otimes S_{22})\oplus(V_{i1}\otimes S_{12})\oplus\bigoplus_{k=3}^n(V_{ik}\otimes S_{k2})=\bigoplus_{k=1}^n(V_{ik}\otimes S_{k2})=g(Z)_{i2},\\
g(Z')_{i2}&=(V_{i2}\otimes S_{21})\oplus(V_{i1}\otimes S_{11})\oplus\bigoplus_{k=3}^n(V_{ik}\otimes S_{k1})=\bigoplus_{k=1}^n(V_{ik}\otimes S_{k1})=g(Z)_{i1},\\
g(Z')_{ij}&=(V_{i2}\otimes S_{2j})\oplus(V_{i1}\otimes S_{1j})\oplus\bigoplus_{k=3}^n(V_{ik}\otimes S_{kj})=\bigoplus_{k=1}^n(V_{ik}\otimes S_{kj})=g(Z)_{ij}.
\end{align*}
Hence, $g(Z')=\pi_{(1,2)}^ng(Z)$. This completes the proof.

\subsection{Proof of \cref{thm:eq-preserve}}\label{sec:pfthm:eq-preserve}
Since feed-forward networks are token-wise applications in floating-point transformers, they preserve equality.
Furthermore, 
for $W\in\fpq^{m\times d}$, $Z\in\fpq^{d\times n}$, and equality preserving $\psi':\fpq^{d\times n}\to\fpq^{d\times n}$, $\phi(Z)=W\otimes Z$ and $\psi(Z)=Z+\psi'(Z)$ also preserve equality.
Lastly, one can observe that for $W^K,W^Q\in\fpq^{m\times d}$ and $Z=[\bfz_1,\dots,\bfz_n]\in\fpq^{d\times n}$,
the $\sigma((W^K\otimes Z)^\top(W^Q\otimes Z))=[\bfs_1,\dots,\bfs_n]$ satisfies $\bfs_i=\bfs_j$ when $\bfz_i=\bfz_j$, i.e., $\sigma((W^K\otimes Z)^\top(W^Q\otimes Z))$ preserves equality.
Hence, all floating-point transformers preserve equality. This completes the proof.

\subsection{Proof of \cref{lem:pos-enc}}\label{sec:pflem:pos-enc}
Since the proof is trivial if $z\in\{\pm\infty,\nan\}$,
without loss of generality, suppose $z>0$.
If $z\ge2^{\emin+1}$, then $0\oplus z=\omega\oplus z=z$.
For $z<2^{\emin+1}$, let $x$ be the smallest positive float such that $x\oplus z=x$; then, $x\le2^{\emin+\mbit+2}\le\Omega$.
Since $0\oplus z=z>0$, by the pigeonhole principle, there must be two distinct floats (say $a,b$) between zero and $x$ such that $a\oplus z=b\oplus z$. Hence, $x\mapsto x+z$ is not injective for all non-zero $z\in\fpq$.

\subsection{Proof of \cref{lem:diagonal-factorize}}\label{sec:pflem:diagonal-factorize}
For each $X=[\bfx_1,\dots,\bfx_n]\in\Delta_n$, let $[\bfy_{X,1},\dots,\bfy_{X,n}] =f^*(X)$ and choose $f : \fpq^{\din}\times\fpq^{\din\times n}\to\fpq^{\dout}$ such that $f(\bfx_i,X)=\bfy_{X,i}$.
Then, $f$ is well-defined since $X\in\Delta_n$, i.e., for each $\bfx\in\{\bfx_1,\dots,\bfx_n\}$, there exists a unique $i$ such that $\bfx_i=\bfx$.
Furthermore, since $f^*$ is $\pi_{(1,2)}^n$-equivariant and $(\pi_{(1,2)}^n)^{-1}=\pi_{(1,2)}^n$, 
\begin{align*}
[f(\bfx_1,X),\dots,f(\bfx_n,X)]%
&=f^*(X)=(\pi_{(1,2)}^n)^{-1}f^*(\pi_{(1,2)}^nX)
=(\pi_{(1,2)}^n)^{-1}[\bfy_{X,2},\bfy_{X,1},\bfy_{X,3},\dots,\bfy_{X,n}]\\
&=(\pi_{(1,2)}^n)^{-1}[f(\bfx_2,\pi_{(1,2)}^nX),f(\bfx_1,\pi_{(1,2)}^nX),f(\bfx_3,\pi_{(1,2)}^nX),\dots,f(\bfx_n,\pi_{(1,2)}^nX)]\\%=[\bfy_{X,1},\bfy_{X,2},\dots,\bfy_{X,n}]
&=[f(\bfx_1,\pi_{(1,2)}^nX),\dots,f(\bfx_n,\pi_{(1,2)}^nX)]
\end{align*}
for all $X=[\bfx_1,\dots,\bfx_n]\in\Delta_n$.
This completes the proof.

\subsection{Proof of \cref{lem:three-max}}\label{sec:pflem:three-max}
Since the proof is trivial if $n=2$, we consider $n\ge3$.
Let $\bfx\in f(\mcS_n)$. 
Since $\bfx\in f(\mcS_n)$, there exists $\pi\in\mcS_n$ such that $f(\pi)=\bfx$.
In addition, by the definition of $f$, one can observe that $\bfx=f(\pi)=f(\pi_{(1,2)}^n\circ\pi)$, i.e., $\{\pi,\pi_{(1,2)}^n \circ \pi\}\subset f^{-1}(\bfx)$.
We now show that for any $\phi\notin\{\pi,\pi_{(1,2)}^n\circ \pi\}$, $\phi\notin f^{-1}(\bfx)$.
Let $\phi \in \mcS_n \setminus \{\pi,\pi_{(1,2)}^n \circ \pi \} $.
Then, there exists $i\in[n]$ such that $\pi^{-1}(i)\ne\phi^{-1}(i)$. Choose the largest such $i$ and denote it as $i^*$, i.e., $\phi^{-1}(j)=\pi^{-1}(j)$ for all $j\in[n]\setminus[i^*]$, and let $i_1=\pi^{-1}(i^*)$, $i_2=\phi^{-1}(i^*)$. %
Note that $i^*\ge3$: $i^*$ cannot be one, and if $i^*=2$, then $\phi=\pi_{(1,2)}^n\circ\pi$.
Since $$| \{j \in [n]:\pi(j)>i^*\}| \le n -i^* \le n-3,$$$[n]\setminus(\{i_1,i_2\}\cup\{j \in [n]:\pi(j)>i^*\})$ is nonempty. Let $i_3\in[n]\setminus(\{i_1,i_2\}\cup\{j \in [n]:\pi(j)>i^*\})$. 
Then, we have $\pi(i_2)<\pi(i_1)=i^*$; if $\pi(i_2)>\pi(i_1)=i^*$, then $i_2=\pi^{-1}(\pi(i_2))\ne\phi^{-1}(\pi(i_2))$, which contradicts the definition of $i^*$.
Likewise, we have $\phi(i_2)>\phi(i_1)$.
Furthermore, $\pi(i_3)<\pi(i_1)=i^*$ by the definition of $i_3$, which implies that $\phi(i_3)<\phi(i_2)=i^*$; otherwise, $\pi^{-1}(\phi(i_3))\ne\phi^{-1}(\phi(i_3))=i_3$ since $\pi(i_3)<i^*$, which contradicts the definition of $i^*$.
Hence, it holds that
$$\argmax_{i\in\{i_1,i_2,i_3\}}\pi(i)=i_1\ne i_2=\argmax_{i\in\{i_1,i_2,i_3\}}\phi(i),$$
i.e., $\bfx=f(\pi)\ne f(\phi)$.
This completes the proof.

\subsection{Proof of \cref{lem:token-univ-approx}}\label{sec:pflem:token-univ-approx}

For notational simplicity, we use $t_d$ and $t_r$ to denote $t_d'$ and $t_r'$.
Since all attention layers can represent the identity map for all $m,d,h\in\bbN$ by choosing zero weights, we only consider feed-forward networks in this proof.
By \cref{lem:fnn-univ-approx}, there exist $d_1=e_0,e_1,e_2,e_3,e_4,e_5,e_6=d_2\in\bbN$ and affine transformations $\phi_{i-1}:\efpq^{e_{i-1}}\to\efpq^{e_i}$ for all $i\in[6]$ such that for each $\bfx\in\fpq^{e_0}$,
$$\phi_6\circ\rho\circ\phi_5\circ\rho\circ\phi_4\circ\rho\circ\phi_3\circ\rho\circ\phi_2\circ\rho\circ\phi_1(\bfx)=\psi^*(\bfx).$$
Choose $t_d=e_6+\max\{e_6,\sum_{i=0}^5e_i\}$ and $t_r=2\times\max\{e_6,\sum_{i=0}^5e_i\}$.
Then, it suffices to show that there exist $m\in\bbN$ and $\psi_1,\dots,\psi_m:\fpq^{t_d}\to\fpq^{t_d}$ such that 
$$\psi_i(\bfz)=\bfz\oplus(g_{i,2}\circ \rho\circ g_{i,1}(\bfz))$$
and %
$$\psi([\bfx_1,\dots,\bfx_n])=[\psi_m\circ\cdots\circ\psi_1(\bfx_1),\dots,\psi_m\circ\cdots\circ\psi_1(\bfx_n)]$$ satisfies the desired property of $\psi$
for some floating-point affine transformations $g_{i,1}:\efpq^{t_d}\to\efpq^{t_r}$ and $g_{i,1}:\efpq^{t_r}\to\efpq^{t_d}$ for all $i\in[k]$. Here, we assume $d=t_d$ and $r=t_r$. 
For $d>t_d$ and $r>t_r$, we pad unused dimensions/neurons.

Let $\alpha_i=\sum_{j=0}^{i-2}e_i$ for $i\in[8]$.
We construct $\psi_1,\dots,\psi_6$ as follows: for $\bfz=(z_1,\dots,z_{t_d})\in\fpq^{t_d}$, $\bfy=(y_1,\dots,y_{t_r})\in\fpq^{t_r}$, $j\in[t_r]$, and $k\in[t_d]$,
\begin{align*}
g_{i,1}(\bfz)_{j}&=\begin{cases}
\phi_i(z_{\alpha_i+1},\dots,z_{\alpha_i+e_{i-1}})_j~&\text{if}~j\le e_i,\\
0~&\text{if}~j>e_i,
\end{cases}\\
g_{i,2}(\bfy)_k&=\begin{cases}
y_{k-\alpha_{i+1}}~&\text{if}~\alpha_{i+1}<k\le\alpha_{i+2},\\
0~&\text{otherwise},
\end{cases}
\end{align*}
for all $i\in[5]$.
We also define $\psi_6$ as
\begin{align*}
g_{6,1}(\bfz)_{j}&=\begin{cases}
\phi_6(z_{\alpha_6+1},\dots,z_{\alpha_6+e_{i-1}})_j~&\text{if}~j\le e_6,\\
0~&\text{if}~j>e_6,
\end{cases}\\
g_{6,2}(\bfy)_k&=\begin{cases}
y_{k-\alpha_{7}}~&\text{if}~t_d-e_6<k\le t_d,\\
0~&\text{otherwise}.
\end{cases}
\end{align*}
Here, note that $t_d-e_6\ge\alpha_7$ by the definition of $t_d$.
Then, one can observe that for any $\bfz=(z_1,\dots,z_{t_d})\in\fpq^{t_d}$ such that $z_{e_0+1},\dots,z_{t_d}=0$, it holds that
$\psi_6\circ\cdots\circ\psi_1(\bfz)\in\fpq^{t_d}.$
Furthermore, for $j\in[e_6]$ and $k\in[\max\{0,e_6-\alpha_7\}]$, we have
\begin{align*}
\psi_6\circ\cdots\circ\psi_1(\bfz)_{t_d-e_6+j}&=\psi^*(z_1,\dots,z_{e_0})_j,\\
\psi_6\circ\cdots\circ\psi_1(\bfz)_{\alpha_7+k}&=0.
\end{align*}

We next erase the first $\alpha_{7}$ coordinates of the output of $\psi_6\circ\cdots\circ\psi_1$.
Define $\psi_7$ as follows: for $j\in[t_d]$,
\begin{align*}
g_{7,2}\circ\rho\circ g_{7,1}(z_1,\dots,z_{t_d})_j=\begin{cases}
(-1)\otimes\rho(1\otimes z_j)\oplus1\otimes\rho((-1)\otimes z_j)~&\text{if}~j\le\alpha_7,\\
0~&\text{if}~j>\alpha_7.
\end{cases}
\end{align*}
Then,
since all intermediate computation of the network in \cref{lem:fnn-univ-approx} are finite, it holds that
\begin{align*}
\psi_7\circ\cdots\circ\psi_1(\bfz)_j=\begin{cases}
\psi^*(z_1,\dots,z_{e_0})_{j-(t_d-e_6)}~&\text{if}~j>t_d-e_6,\\
0~&\text{if}~j\le t_d-e_6,
\end{cases}
\end{align*}
for all $\bfz=(z_1,\dots,z_{t_d})\in\fpq^{t_d}$ with $z_{e_0+1},\dots,z_{t_d}=0$.
We lastly define $\psi_8$ as
\begin{align*}
g_{8,2}\circ\rho\circ g_{8,1}(z_1,\dots,z_{t_d})_j=\begin{cases}
1\otimes\rho(1\otimes z_{j+(t_d-e_6)})\oplus(-1)\otimes\rho((-1)\otimes z_{j+(t_d-e_6)})~&\text{if}~j\le e_6,\\
0~&\text{if}~e_6<j\le t_d-e_6,\\
(-1)\otimes\rho(1\otimes z_j)\oplus1\otimes\rho((-1)\otimes z_j)~&\text{if}~j>t_d-e_6,
\end{cases}
\end{align*}
for all $\bfz=(z_1,\dots,z_{t_d})\in\fpq^{t_d}$.
We choose $\psi=\psi_8\circ\cdots\circ\psi_1$.
Then, for any $\bfz=(z_1,\dots,z_{t_d})\in\fpq^{t_d}$ with $z_{e_0+1},\dots,z_{t_d}=0$, it holds that
$$\psi(\bfz)_j=\begin{cases}
\psi^*(z_1,\dots,z_{e_0})_j~&\text{if}~j\le e_6,\\
0~&\text{if}~j>e_6.    
\end{cases}$$
This completes the proof.

\subsection{Proof of \cref{lem:diagonal-attention}}\label{sec:pflem:diagonal-attention}
We first consider the case that $\mbit\ge3$.
Let $\gamma=\binom{|\fpq|^{\din}}3$, $1\oslash(\bigoplus_{i=1}^n1)=\alpha\ge2^{-\mbit-1}$, and $\beta,\beta'\in\fpq$ such that $\beta\otimes\alpha=1^+$ and $\beta'\otimes\alpha=1^{++}$.
Such $\beta,\beta'$ always exist by \cref{lem:one-plus}. 
Without loss of generality, we assume that $d=t_d,m=t_m,$ and $r=t_r$; otherwise, we can add unused dimensions. We set $t_m=3\gamma$, and we will choose the explicit values of $t_d,t_r$ later with $t_d \ge \din + 6 \gamma$.
We construct $\phi$ as a composition of two transformer blocks $g_1,g_2\in\mcB^{1,t_m,t_r,t_d,n}$. Here, $g_1$ maps $Z\in\fpq^{t_d\times n}$ with $Z_{:\din}=[\bfx_1,\dots,\bfx_n]\in\Delta_{\din}$ and $Z_{\din+1:}=\zerov_{t_d-\din,n}$ to $Y=[\bfy_1,\dots,\bfy_n]\in\fpq^{t_d\times n}$ such that
\begin{align*}
(\bfy_i)_l=\begin{cases}
(\bfx_i)_l~&\text{if}~l\le\din,\\
\beta'~&\text{if}~l=(t_d-3\gamma)+3j+k~\text{and}~\bfx_i\in\{\bfz_{j,\pi_k(1)},\bfz_{j,\pi_k(2)}\}~\text{for some}~j\in[\gamma-1]\cup\{0\},k\in[3],\\
\beta~&\text{if}~l=(t_d-3\gamma)+3j+k~\text{and}~\bfx_i=\bfz_{j,\pi_k(3)}~\text{for some}~j\in[\gamma-1]\cup\{0\},k\in[3],\\
0~&\text{otherwise},
\end{cases}
\end{align*}
for all $i\in[n]$ and $l\in[t_d]$.
By \cref{lem:fnn-univ-approx}, there exist $t_d',t_r'\in\bbN$ such that the desired $g_1\in\mcB^{1,m',r',d',n}$ exists for all $m'\ge1$, $d'\ge t_d'$, and $r'\ge t_r'$. Choose $t_d=\max\{t_d',\din+6\gamma\}$, $t_r=\max\{t_r',3\gamma\}$, and $g_1\in\mcB^{1,t_m,t_r,t_d,n}$.

We now construct $g_2$.
Define an attention layer $f_1:\fpq^{t_d\times n}\to\fpq^{t_d\times n}$ as 
\begin{align*}
f_1(Z)&=Z\oplus\left(W^O\otimes((W^V\otimes Z)\otimes\sigma((\zerov_{t_m,t_d}\otimes Z)^\top(\zerov_{t_m,t_d}\otimes Z)))\right)\\
&=Z\oplus\left(W^O\otimes((W^V\otimes Z)\otimes(\alpha\onev_{n,n}))\right),
\end{align*}
for some $W^V\in\fpq^{t_m\times t_d}$ and $W^O\in\fpq^{t_d\times t_m}$.
Specifically, we choose $W^V,W^O$ so that $W^V\otimes Z=Z_{t_d-3\gamma+1:}$, $W^O_{:\din}=\zerov_{\din,t_m}$, $W^O_{\din+1:\din+3\gamma}=I_{t_m}$, and $W^O_{\din+3\gamma+1:}=\zerov_{t_d-\din-3\gamma,t_m}$ where $I_{t_m}$ denotes the $t_m\times t_m$ identity matrix.
Then, by the definitions of $g_1,f_1$ and \cref{lem:oneppp}, it holds that for each $Z\in\fpq^{t_d\times n}$ with $Z_{:\din}=[\bfx_1,\dots,\bfx_n]\in\Delta_{\din}$ and $Z_{\din+1:}=\zerov_{t_d-\din,n}$, for each $j\in[\gamma-1]\cup\{0\}$ and $k\in[3]$,
\begin{itemize}
\item $(f_1\circ g_1(Z))_{:\din}=[\bfx_1,\dots,\bfx_n]$, 
\item $(f_1\circ g_1(Z))_{\din+3j+k}=3^{++}\!\times\onev_n^\top$ if there exists $i_1\!<i_2\!<i_3$ such that $\{\bfx_{i_1},\bfx_{i_2}\}=\{\bfz_{j,{\pi_k(1)}},\bfz_{j,{\pi_k(2)}}\}$ and $\bfx_{i_3}=\bfz_{j,{\pi_k(3)}}$,
\item %
$(f_1\circ g_1(Z))_{\din+3j+k}\in(\fpq\setminus\{3^{++}\})^{1\times n}$ otherwise,
\item $(f_1\circ g_1(Z))_{\din+3\gamma+1:t_d-3\gamma}=\zerov_{t_d-6\gamma-\din,n}$ and $(f_1\circ g_1(Z))_{t_d-3\gamma+1:}$ is non-negative.
\end{itemize}

Choose a feed-forward network $f_2:\fpq^{t_d\times n}\to\fpq^{t_d\times n}$ as $f_2(Z)=Z\oplus(W_2\otimes\rho(W_1\otimes Z))$ where $W_1\in\fpq^{t_r\times t_d}$ and $W_1\in\fpq^{t_d\times t_r}$ are matrices satisfying that for $\bfz=(z_1,\dots,z_{t_d})\in\fpq^{t_d}$ and $\bfy=(y_1,\dots,y_{t_r})\in\fpq^{t_r}$,
\begin{align*}
W_1\otimes\bfz=(z_{t_d-3\gamma+1},\dots,z_{t_d},0,\dots,0),~~
W_2\otimes\bfy=(0,\dots,0,-y_1,\dots,-y_{3\gamma}),
\end{align*}
Namely, $f_2$ erases the last $3\gamma$ coordinates of an input, if they are finite and non-negative.
Let $g_2=f_2\circ f_1$ and $\phi=g_2\circ g_1\in\mcB^{1,t_m,t_r,t_d,n}$. Then, by the definitions of $g_1$ and $g_2$, $\phi$ satisfies the desired properties. 

The proof for the case $\mbit=2$ is almost identical to the previous case $\mbit\ge3$. The only difference is our choices of $\beta,\beta'$, which we choose to satisfy 
$\beta\otimes\alpha=1$ and $\beta'\otimes\alpha=1^+$.
We note that the existence of such $\beta,\beta'$ is guaranteed by \cref{lem:one-plus,lem:onep2}.
Then, as in the previous case, we use the property that
\begin{align*}
(1^+\oplus1^+)\oplus1=3^+\ne3=(1^+\oplus1)\oplus1^+=(1\oplus1^+)\oplus1^+.
\end{align*}
This completes the proof.

\subsection{Proof of \cref{lem:permutation-factorize}}\label{sec:pflem:permutation-factorize}

For each $X=[\bfx_1,\dots,\bfx_n]\in\fpq^{\din\times n}$ and $\mcX=\{\bfx_1,\dots,\bfx_n\}$, define $f_X:\mcX\to\fpq^{\dout}$ as $f_X(\bfx)=\bfy_i$ for $i\in[n]$ satisfying $\bfx_i=\bfx$ where $[\bfy_1,\dots,\bfy_n]=f^*(X)$. Since $f^*$ is permutation equivariant, $f_X$ is well-defined.

Since $f^*$ is permutation equivariant, for any $\pi\in\mcS_n$, $f^*([\bfx_{\pi(1)},\dots,\bfx_{\pi(n)}])=[\bfy_{\pi(1)},\dots,\bfy_{\pi(n)}]$, i.e., $f_X=f_{\pi X}$ for all $\pi\in\mcS_n$.
Furthermore, since $\bfc_X=\bfc_{Z}$ if and only if $X=\pi Z$ for some $\pi\in\mcS_n$, we can define $\tilde f(\bfx_i,\bfc_X) \defeq f_X(\bfx_i)$ for all $X\in\fpq^{\din\times n}$; note that this is well-defined as $f_X=f_{\pi X}$ for all $\pi\in\mcS_n$.
Choose arbitrary values to $\tilde f(\bfx,\bfc)$ for all $\bfx,\bfc$ that we did not consider yet.
This completes the proof.

\subsection{Proof of \cref{lem:permutation-attention}}\label{sec:pflem:permutation-attention}

Let $\gamma=|\fpq|^{\din}$, $1\oslash(\bigoplus_{i=1}^n1)=\alpha\ge2^{-\mbit-1}$, and $\beta\in\fpq$ such that $\beta\otimes\alpha=1^+$.
Such $\beta$ always exists by \cref{lem:one-plus}. 
Without loss of generality, we assume that $d=t_d,m=t_m,$ and $r=t_r$; otherwise, we can add unused dimensions. We set $t_m=\gamma$, and we will choose the explicit values of $t_d,t_r$ later.
We construct $\phi$ as a composition of two transformer blocks $g_1,g_2\in\mcB^{1,t_m,t_r,t_d,n}$. Here, $g_1$ maps $Z\in\fpq^{t_d\times n}$ with $Z_{:\din}=[\bfx_1,\dots,\bfx_n]\in\fpq^{\din\times n}$ and $Z_{\din+1:}=\zerov_{t_d-\din,n}$ to $Y=[\bfy_1,\dots,\bfy_n]\in\fpq^{t_d\times n}$ such that
\begin{align*}
(\bfy_i)_l=\begin{cases}
(\bfx_i)_l~&\text{if}~l\le\din,\\
\beta~&\text{if}~l=(t_d-\gamma)+j~\text{and}~\bfx_i=\bfz_{j}~\text{for some}~j\in[\gamma],\\
0~&\text{otherwise},
\end{cases}
\end{align*}
for all $i\in[n]$ and $l\in[t_d]$.
By \cref{lem:fnn-univ-approx}, there exist $t_d',t_r'\in\bbN$ such that the desired $g_1\in\mcB^{1,m',r',d',n}$ exists for all $m'\ge1$, $d'\ge t_d'$, and $r\ge t_r'$. Choose $t_d=\max\{t_d',\din+2\gamma\}$, $t_r=\max\{t_r',\gamma\}$, and $g_1\in\mcB^{1,t_m,t_r,t_d,n}$.

We now construct $g_2$.
Define an attention layer $f_1:\fpq^{t_d\times n}\to\fpq^{t_d\times n}$ as 
\begin{align*}
f_1(Z)&=Z\oplus\left(W^O\otimes((W^V\otimes Z)\otimes\sigma((\zerov_{t_m,t_d}\otimes Z)^\top(\zerov_{t_m,t_d}\otimes Z)))\right)\\
&=Z\oplus\left(W^O\otimes((W^V\otimes Z)\otimes(\alpha\onev_{n,n}))\right)
\end{align*}
for some $W^V\in\fpq^{t_m\times t_d}$ and $W^O\in\fpq^{t_d\times t_m}$.
Specifically, we choose $W^V,W^O$ so that $W^V\otimes Z=Z_{t_d-\gamma+1:}$, $W^O_{:\din}=\zerov_{\din,t_m}$, $W^O_{\din+1:\din+\gamma}=I_{t_m}$, and $W^O_{\din+\gamma+1:}=\zerov_{t_d-\din-\gamma,t_m}$ where $I_{t_m}$ denotes the $t_m\times t_m$ identity matrix.
Then, by the definitions of $g_1,f_1$, it holds that for each $Z\in\fpq^{t_d\times n}$ with $Z_{:\din}=[\bfx_1,\dots,\bfx_n]\in\fpq^{\din\times n}$ and $Z_{\din+1:}=\zerov_{t_d-\din,n}$, for each $j\in[\gamma]$,
\begin{itemize}
\item $(f_1\circ g_1(Z))_{:\din}=[\bfx_1,\dots,\bfx_n]$, 
\item $(f_1\circ g_1(Z))_{\din+j}=(\bigoplus_{i=1}^{k_j}1^+)\times\onev_n^\top$ where $k_j$ denotes the number of $i\in[n]$ such that $\bfx_i=\bfz_j$,
\item $(f_1\circ g_1(Z))_{\din+\gamma+1:t_d-\gamma}=\zerov_{t_d-2\gamma-\din,n}$ and $(f_1\circ g_1(Z))_{t_d-\gamma+1:}$ is non-negative.
\end{itemize}

Choose a feed-forward network $f_2:\fpq^{t_d\times n}\to\fpq^{t_d\times n}$ as $f_2(Z)=Z\oplus(W_2\otimes\rho(W_1\otimes Z))$ where $W_1\in\fpq^{t_r\times t_d}$ and $W_1\in\fpq^{t_d\times t_r}$ are matrices satisfying that for $\bfz=(z_1,\dots,z_{t_d})\in\fpq^{t_d}$ and $\bfy=(y_1,\dots,y_{t_r})\in\fpq^{t_r}$,
\begin{align*}
W_1\otimes\bfz=(z_{t_d-\gamma+1},\dots,z_{t_d},0,\dots,0),~~
W_2\otimes\bfy=(0,\dots,0,-y_1,\dots,-y_{\gamma}),
\end{align*}
Namely, $f_2$ erases the last $\gamma$ coordinates of an input, if they are finite and non-negative.
Let $g_2=f_2\circ f_1$ and $\phi=g_2\circ g_1\in\mcB^{1,t_m,t_r,t_d,n}$. Then, by the definitions of $g_1,g_2$, $\phi$ satisfies the desired properties. This completes the proof.

\subsection{Proof of \cref{lem:max-distinguish}}\label{sec:pflem:max-distinguish}
Let $x_k= \bigoplus_{i=1}^k1^+$ for all $k\in[3\times2^\mbit-1]\cup\{0\}$.
By direct computation, we have 
$x_0=0$, $x_1=1^+$, $x_2=2^+$, $x_3=3^{++}$, $x_4=4^+$.
Furthermore, for $k\in\{4,\dots,2^\mbit\}$, $x_k=k^+$. This can be shown via mathematical induction. The base case $k=4$ holds by our direct computation. 
Suppose that $x_{k-1}=(k-1)^+$ for some $k\in\{5,\dots,2^\mbit\}$ and a natural number $e$ with $2^e<k\le2^{e+1}$; here, $e\ge2$. If $k\ne2^{e+1}$, then $x_{k-1}+1^+=k+2^{e-p}+2^{-p}$, i.e., $x_{k}=x_{k-1}\oplus 1^+=k^+$.
If $k=2^{e+1}$, then $x_{k-1}+1^+=2^{e+1}+2^{e-p}+2^{-p}$, i.e., $x_{k}=x_{k-1}\oplus 1^+=(2^{e+1})^+=k^+$.

Since $x_{2^\mbit}=2^\mbit+1$, for $k\in\{2^\mbit+1,\dots,2^{\mbit+1}-1\}$, one can observe that $x_k=k+1$. 
Furthermore, for $k\in\{2^{\mbit+1},\dots,3\times2^{\mbit}-1\}$, we have $x_{k}=2^{\mbit+1}+(k-2^{\mbit+1}+1)\times2$ due to the rounding. Hence, $x_k$ can be written as follows:
\begin{align*}
x_k=\begin{cases}
0~&\text{if}~k=0,\\
1^+~&\text{if}~k=1,\\
2^+~&\text{if}~k=2,\\
3^{++}~&\text{if}~k=3,\\
k^{+}~&\text{if}~k\in\{4,\dots,2^\mbit\},\\
k+1~&\text{if}~k\in\{2^\mbit+1,\dots,2^{\mbit+1}-1\},\\
2^{\mbit+1}+(k-2^{\mbit+1}+1)\times2~&\text{if}~k\in\{2^{\mbit+1},\dots,3\times2^{\mbit}-1\}.
\end{cases}
\end{align*}
This implies that $x_k$ are all distinct for all $k\in[3\times2^\mbit-1]\cup\{0\}$ and $x_{3\times2^\mbit-1}=2^{\mbit+2}$.

\subsection{Proof of \cref{lem:same-sum0}}\label{sec:pflem:same-sum0}
The proof is trivial when $x\in\{\pm\infty,\nan\}$.
For $x\in\fpq$, the result follows from \cref{lem:same-sum1}.

\subsection{Proof of \cref{lem:same-sum2}}\label{sec:pflem:same-sum2}
Without loss of generality, we assume that $z,x$ are finite and $x>0$; if $x\in\{0,\pm\infty,\nan\}$ or $z\in\{\pm\infty,\nan\}$, then the result directly follows.
We further assume that $z<0$ since if $z\ge0$, the result reduces to \cref{lem:same-sum1}.
Choose $e_z\in\bbZ$ such that $z=-2^{e_z}$. %
We now consider all possible cases.

{\bf Case $e_z\le\emin+1$.}
There exists $k\le 2^{\mbit+1}$ such that $z\oplus\bigoplus_{i=1}^kx\ge0$ since $x\ge2^{\emin-\mbit}=\omega$.
By \cref{lem:same-sum1}, the result follows.

{\bf Case $e_z>\emin+1$ and $x\le2^{e_z-\mbit-2}$.} %
In this case, 
$x\le2^{e_z-\mbit-2}=\frac12(z^+-z).$
Since $z=-2^{e_z}$, this implies that $z\oplus\bigoplus_{i=1}^kx=z$ for all $k\in\bbN$ and the result follows.

{\bf Case $e_z>\emin+1$ and $x>2^{e_z-\mbit-2}$.} %
Since 
$x>2^{e_z-\mbit-2}=\frac12(z^+-z)$,
it holds that $z\oplus\bigoplus_{i=1}^{2^\mbit}x\ge-2^{e_z-1}$.
Furthermore, it holds that
\begin{align*}
z\oplus\bigoplus_{i=1}^{3\times2^\mbit}x%
\ge-2^{e_z-1}\oplus\bigoplus_{i=1}^{2^{\mbit+1}}x\ge-2^{e_z-1}\oplus\bigoplus_{i=1}^{2^{\mbit+1}}2^{e_z-p-2}=0.
\end{align*}
Hence, by \cref{lem:same-sum1}, the result follows. This completes the proof.

\newpage
\section{Constructions with smaller $m,r,d$}\label{sec:deep-narrow}
In this section, we sketch the idea for proving \cref{thm:diagonal-approx,thm:perm-equiv-approx} with floating-point transformer constructions using $m,h,r=1$ and $d=\din+\dout+c$ for some constant $c\in\bbN$. 
In particular, we will focus on \cref{thm:diagonal-approx} under $\mbit\ge3$ since the idea here naturally extends to \cref{thm:perm-equiv-approx} and $\mbit=2$ (see the case when $p=2$ in  \cref{sec:pflem:diagonal-attention}).

We now describe how to transform our construction in the proof of \cref{lem:token-univ-approx} (see \cref{sec:pflem:token-univ-approx}) to a floating-point transformer with $m=r=h=1$ and $d=\din+\dout+c$.
By \cref{lem:fnn-univ-approx}, for any $f^*:\fpq^{d_1}\to\fpq^{d_2}$,
there exist $e_0=d_1,e_1,e_2,e_3,e_4,e_5,e_6=d_2\in\bbN$ and affine transformations $\phi_{i-1}:\efpq^{e_{i-1}}\to\efpq^{e_i}$ for all $i\in[6]$ such that for each $\bfx\in\fpq^{e_0}$,
$$\phi_6\circ\rho\circ\phi_5\circ\rho\circ\phi_4\circ\rho\circ\phi_3\circ\rho\circ\phi_2\circ\rho\circ\phi_1(\bfx)=f^*(\bfx).$$
By \cref{lem:narrow-fnn},
this implies that there exists a transformer block $g:\fpq^{d\times n}\to\fpq^{d\times n}$ with $m,r,h=1$ and $d=\din+\dout+6$ such that
$$g(X)_i=\begin{cases}
X_i&\text{if}~i\le\din,\\
X_i\oplus[f(\bfx_1),\dots,f(\bfx_n)]&\text{if}~\din<i\le\din+\dout,\\
\zerov_n^\top&\text{otherwise}.
\end{cases}$$

We now briefly sketch our construction for \cref{thm:diagonal-approx}.
Let $f^*:\Delta_n\to\fpq^{\dout\times n}$ be the target function.
Let $[\bfz_{1,1},\dots,\bfz_{1,n}],\dots,[\bfz_{t,1},\dots,\bfz_{t,n}]$ be all elements in $\Delta_n$, i.e., $|\Delta_n|=t$.
For each $i\in[t]$, we will make a transformer block $\phi_i$ with $m=r=h=1$ and $d=\din+\dout+9$  satisfying the following: for each $Z\in\fpq^{d\times n}$ with $Z_{:\din}=[\bfx_1,\dots,\bfx_n]\in\Delta_n$ and $Z_{\din+\dout+1:}=\zerov_{d-\din-\dout,n}$, 
$\phi_i(Z)_j=Z_j$ for all $j\in[\din]\cup([d]\setminus[\din+\dout])$ and
\begin{align*}
\phi_i(Z)_j=\begin{cases}
f^*(Z)_j~&\text{if}~Z=[\bfx_{i,1},\dots,\bfx_{i,n}],\\
Z_j~&\text{if}~Z\ne[\bfx_{i,1},\dots,\bfx_{i,n}],
\end{cases}
\end{align*}
for all $j\in[\din+\dout]\setminus[\din]$.
Let $W_\text{in}\in\fpq^{d\times\din}$ and $W_\text{out}\in\fpq^{\dout\times d}$ be matrices
satisfying $(W_\text{in}\otimes X)_{:\din}=X$, $(W_\text{in}\otimes X)_{\din+1:}=\zerov_{d-\din,n}$, $W_\text{out}\otimes Z=Z_{\din+1:\din+\dout}$.
Then, one can observe that
$W_\text{out}\otimes(\phi_t\circ\cdots\circ\phi_1(W_\text{in}\otimes X))=f^*(X)$ for all $X\in\Delta_n$ and completes the proof.

We now describe how to construct $\phi_i$.
Let $[\bfx_1,\dots,\bfx_n]\in\fpq^{\din\times n}$ be the input.
In the beginning of $\phi_i$, we set flag $1$ for the last dimension of all tokens.
For each $1\le i_1,i_2,i_3\le n$, let $\phi_{i,i_1,i_2,i_3}$ be a transformer block that checks whether there exist $1\le j_1<j_2<j_3\le n$ such that $\{\bfx_{j_1},\bfx_{j_2}\}=\{\bfz_{i,i_1},\bfz_{i,i_2}\}$ and $\bfx_{j_3}=\bfz_{i,i_3}$.
If so and if the previous flag is $1$, return flag $1$ in the last dimension for all tokens and return flag $0$ otherwise.
One can observe that such $\phi_{i,i_1,i_2,i_3}$ only requires $m=h=1$: for each token $i$, in the $(d-1)$-th dimension, put $\beta'$ if $x_i\in\{\bfz_{i,i_1},\bfz_{i,i_2}\}$, put $\beta$ if $x_i=\bfz_{i,i_3}$, and put $0$ otherwise where $\beta,\beta'\in\fpq$ such that $\beta\otimes(1\oslash(\bigoplus_{j=1}^n1^+))=1^+$ and $\beta'\otimes(1\oslash(\bigoplus_{j=1}^n1^+))=1^{++}.$
Extract this dimension using $W^V\in\fpq^{1\times d}$
and set $W^K=W^Q$ as zero matrices.
Then, we can check whether there exist $1\le j_1<j_2<j_3\le n$ such that $\{\bfx_{j_1},\bfx_{j_2}\}=\{\bfz_{i,i_1},\bfz_{i,i_2}\}$ and $\bfx_{j_3}=\bfz_{i,i_3}$ since $1^+\oplus1^{++}\oplus1^{++}=3^{+++}$ but $1^{++}\oplus1^{++}\oplus1^{+}=3^{++}$ when $\mbit\ge3$. Here, we do not modify the first $\din+\dout$ dimensions. Then, $d=\din+\dout+9$ suffices: 
first $\din+\dout$ dimensions for storing data, $6$ dimensions for representing token-wise floating-point functions (see \cref{lem:narrow-fnn}), one dimension for storing the flag, one dimension for storing $\beta,\beta',0$, one dimension for storing the output of the attention layer.
Hence, $\phi_{i,i_1,i_2,i_3}$ uses $m,r,h=1$ and $d=\din+\dout+9$. 
Let $\psi_{i,1}$ be the composition of all $\psi_{i,i_1,i_2,i_3}$. Then the last dimension of the output of $\psi_{i,1}$ is one if and only if the input matrix is equal to $[\bfz_{i,1},\dots,\bfz_{i,n}]$.

Let $\psi_{i,2}:\fpq^{d}\to\fpq^{d}$ be a feed-forward network such that
\begin{align*}
\psi_{i,2}(\bfx)_{:\din}&=\bfx_{:\din},\\
\psi_{i,2}(\bfx)_{\din+1:\din+\dout}&=\begin{cases}
\bfx_{\din+1:\din+\dout}+f^*([\bfz_{i,1},\dots,\bfz_{i,n}])~&\text{if}~x_d=1,\\  \bfx_{\din+1:\din+\dout}~&\text{otherwise},  
\end{cases},\\
\psi_{i,1}(\bfx)_{\din+\dout+1:}&=\zerov_{d-\din-\dout}.
\end{align*}
We can transform such a network to a transformer block with $m,r,h=1$ and $d=\din+\dout+9$ by \cref{lem:narrow-fnn}. Then, $\phi_i=\psi_{i,2}\circ\psi_{i,1}$ satisfies the desired properties with $m,r,h=1$ and $d=\din+\dout+9$.
We note that a similar idea can also be applied to \cref{thm:perm-equiv-approx}.

\end{document}